\documentclass{article}






\usepackage[utf8]{inputenc} 
\usepackage[T1]{fontenc}    
\usepackage{hyperref}       
\usepackage{url}            
\usepackage{booktabs}       
\usepackage{amsfonts}       
\usepackage{nicefrac}       
\usepackage{microtype}      
\usepackage[table, dvipsnames]{xcolor}
\usepackage[margin=1in]{geometry}
\usepackage{natbib}

\usepackage{multirow}
\usepackage{makecell}
\usepackage{threeparttable}

\usepackage{array}
\newcommand{\PreserveBackslash}[1]{\let\temp=\\#1\let\\=\temp}
\newcolumntype{C}[1]{>{\PreserveBackslash\centering}p{#1}}
\usepackage{hhline}
\newcolumntype{?}{!{\vrule width 1pt}}

\usepackage{amsmath}
\usepackage{amsthm}
\usepackage{amssymb}
\usepackage{thmtools}
\usepackage{thm-restate}
\usepackage[title]{appendix}
\usepackage{comment}
\usepackage{dsfont}
\usepackage{cleveref}

\usepackage{algorithm}
\usepackage{algpseudocode}

\definecolor{darkblue}{rgb}{0,0,0.95}
\algdef{SE}[IF]{CIF}{ENDIF}%
   [2][default]{\textcolor{darkblue}{\algorithmicif\ #2\ \algorithmicthen}}%
   {\algorithmicend\ \algorithmicif}%

\usepackage{thmtools}
\usepackage{thm-restate}

\usepackage{subfigure}
\usepackage{wrapfig}



\newtheorem{theorem}{Theorem}
\newtheorem{lemma}{Lemma}

\newtheorem{fact}{Fact}

\newtheorem{definition}{Definition}
\newenvironment{proofsketch}{%
  \proof}{\endproof}
\usepackage{bbm}

\newtheorem{theorem-rst}[theorem]{Theorem}
\newtheorem{lemma-rst}[lemma]{Lemma}
\newtheorem{proposition-rst}[lemma]{Proposition}
\newtheorem{assumption-rst}[lemma]{Assumption}
\newtheorem{claim-rst}[lemma]{Claim}
\newtheorem{corollary-rst}[lemma]{Corollary}

\usepackage{mathtools}
\DeclarePairedDelimiter\br{(}{)}
\DeclarePairedDelimiter\brs{[}{]}
\DeclarePairedDelimiter\brc{\{}{\}}

\DeclarePairedDelimiter\abs{\lvert}{\rvert}

\DeclarePairedDelimiter\floor{\lfloor}{\rfloor}
\DeclarePairedDelimiter\ceil{\lceil}{\rceil}


\newcommand{\E}{\mathbb{E}}
\newcommand{\R}{\mathbb{R}}
\newcommand{\G}{\mathbb{G}}
\newcommand{\D}{\mathcal{D}}
\newcommand{\F}{\mathcal{F}}
\newcommand{\K}{\mathcal{K}}

\newcommand{\Ocal}{\mathcal{O}}

\newcommand{\I}{\mathcal{I}}
\newcommand{\J}{\mathcal{J}}

\newcommand{\Narms}{K}

\newcommand{\dr}[1]{\Delta_{#1}}
\newcommand{\Ind}[1]{\mathds{1}\brc*{#1}}

\newcommand{\hmu}[2]{\hat\mu_{#1}\br*{#2}}

\newcommand{\unu}{{\underline{\nu}}} 

\newcommand{\kl}{\mathrm{KL}}
\newcommand{\klBin}{d}
\newcommand{\modklBin}{\klBin_\epsilon}

\usepackage[colorinlistoftodos]{todonotes}

\newcommand{\Beta}{\mathrm{Beta}}

\makeatletter
\newcommand{\printfnsymbol}[1]{%
  \textsuperscript{\@fnsymbol{#1}}%
}

\title{Lenient Regret for Multi-Armed Bandits}
\author{
  Nadav Merlis\\
  Technion -- Institute of Technology\\
 \texttt{merlis@campus.technion.ac.il} \\
  \and
  Shie Mannor \\
  Technion -- Institute of Technology\\
  Nvidia Research, Israel\\
 \texttt{shie@ee.technion.ac.il} \\
  }

\date{}

\begin{document}

\maketitle

\begin{abstract}
We consider the Multi-Armed Bandit (MAB) problem, where an agent sequentially chooses actions and observes rewards for the actions it took. While the majority of algorithms try to minimize the regret, i.e., the cumulative difference between the reward of the best action and the agent's action, this criterion might lead to undesirable results. For example, in large problems, or when the interaction with the environment is brief, finding an optimal arm is infeasible, and regret-minimizing algorithms tend to over-explore. To overcome this issue, algorithms for such settings should instead focus on playing near-optimal arms. To this end, we suggest a new, more lenient, regret criterion that ignores suboptimality gaps smaller than some $\epsilon$. We then present a variant of the Thompson Sampling (TS) algorithm, called $\epsilon$-TS, and prove its asymptotic optimality in terms of the lenient regret. Importantly, we show that when the mean of the optimal arm is high enough, the lenient regret of $\epsilon$-TS is bounded by a constant. Finally, we show that $\epsilon$-TS can be applied to improve the performance when the agent knows a lower bound of the suboptimality gaps.
\end{abstract}

\section{Introduction}

Multi-Armed Bandit (MAB) problems are sequential decision-making problems where an agent repeatedly chooses an action (`arm'), out of $\Narms$ possible actions, and observes a reward for the selected action \citep{robbins1952some}. In this setting, the agent usually aims to maximize the expected cumulative return throughout the interaction with the problem. Equivalently, it tries to minimize its regret, which is the expected difference between the best achievable total reward and the agent's actual returns.

Although regret is the most prevalent performance criterion, many problems that should intuitively be `easy' suffer from both large regret and undesired behavior of regret-minimizing algorithms. Consider, for example, a problem where most arms are near-optimal and the few remaining ones have extremely lower rewards. For most practical applications, it suffices to play any of the near-optimal arms, and identifying such arms should be fairly easy. However, regret-minimizing algorithms only compare themselves to the optimal arm. Thus, they must identify an optimal arm with high certainty, or they will suffer linear regret. This leads to two undesired outcomes: (i) the regret fails to characterize the difficulty of such problems, and (ii) regret-minimizing algorithms tend to over-explore suboptimal arms.

\textbf{Regret fails as a complexity measure:} It is well known that for any reasonable algorithm, the regret dramatically increases as the suboptimality gaps shrink, i.e., the reward of some suboptimal arms is very close to the reward of an optimal one \citep{lai1985asymptotically}. Specifically in our example, if most arms are almost-optimal, then the regret can be arbitrarily large. In contrast, finding a near-optimal solution in this problem is relatively simple. Thus, the regret falsely classifies this easy problem as a hard one.

\textbf{Regret-minimizing algorithms over-explore:} As previously stated, any regret-minimizing agent must identify an optimal arm with high certainty or suffer a linear regret. To do so, the agent must thoroughly explore all suboptimal arms. In contrast, if playing near-optimal arms is adequate, identifying one such arm can be done much more efficiently. Importantly, this issue becomes much more severe in large problems or when the interaction with the problem is brief.

The origin of both problems is the comparison of the agent's reward to the optimal reward. Nonetheless, not all bandit algorithms rely on such comparisons. Notably, when trying to identify good arms (`best-arm identification'), many algorithms only attempt to output $\epsilon$-optimal arms, for some predetermined error level $\epsilon>0$ \citep{even2002pac}. However, this criterion only assesses the quality of the output arms and is unfit when we want the algorithm to choose near-optimal arms throughout the interaction.

In this work, we suggest bringing the leniency of the  $\epsilon$-best-arm identification into regret criteria. Inspired by the $\epsilon$-optimality relaxation in best-arm identification, we define the notion of \emph{lenient regret}, that only penalizes arms with gaps larger than $\epsilon$. 
Intuitively, ignoring small gaps alleviates both previously-mentioned problems: first, arms with gaps smaller than $\epsilon$ do not incur lenient regret, and if all other arms have extremely larger gaps, then the lenient regret is expected to be small. Second, removing the penalty from near-optimal arms allows algorithms to spend less time on exploration of bad arms. Then, we expect that algorithms will spend more time playing near-optimal arms.

From a practical perspective, optimizing a more lenient criterion is especially relevant when near-optimal solutions are sufficient while playing bad arms is costly. Consider, for example, a restaurant-recommendation problem. For most people, restaurants of similar quality are practically the same. On the other hand, the cost of visiting bad restaurants is very high. Then, a more lenient criterion should allow focusing on avoiding the bad restaurants, while still recommending restaurants of similar quality. 

In the following sections, we formally define the lenient regret and prove a lower bound for this criterion that dramatically improves the classical lower bound \citep{lai1985asymptotically} as $\epsilon$ increases. Then, inspired by the form of the lower bound, we suggest a variant of the Thompson Sampling (TS) algorithm \citep{thompson1933likelihood}, called $\epsilon$-TS, and prove that its regret asymptotically matches the lower bound, up to an absolute constant. Importantly, we prove that when the mean of the optimal arm is high enough, the lenient regret of $\epsilon$-TS is bounded by a constant. We also provide an empirical evaluation that demonstrates the improvement in performance of $\epsilon$-TS, in comparison to the vanilla TS. Lastly, to demonstrate the generality of our framework, we also show that our algorithm can be applied when the agent has access to a lower bound of all suboptimality gaps. In this case, $\epsilon$-TS greatly improves the performance even in terms of the standard regret.

\subsection{Related Work}
For a comprehensive review of the MAB literature, we refer the readers to \citep{bubeck2012regret,lattimore2018bandit,slivkins2019introduction}. MAB algorithms usually focus on two objectives: regret minimization \citep{auer2002finite,garivier2011kl,kaufmann2012thompson} and best-arm identification \citep{even2002pac,mannor2004sample,gabillon2012best}. Intuitively, the lenient regret can be perceived as a weaker regret criterion that borrows the $\epsilon$-optimality relaxation from best-arm identification. Moreover, we will show that in some cases, the lenient regret aims to maximize the number of plays of $\epsilon$-optimal arms. Then, the lenient regret is the most natural adaptation of the $\epsilon$-best-arm identification problem to a regret minimization setting. Another related concept is the satisficing regret \citep{russo2018satisficing} -- a Bayesian discounted regret criterion that do not penalize a predetermined distortion level. However, their work analyzes a Bayesian regret, through an information ratio \citep{russo2016information}, while we work in a frequentist setting. Moreover, we focus on gap-dependent regret bounds, which is left as an open problem in \citep{russo2018satisficing} . Thus, the two works complements each other.

Another related concept can be found in sample complexity of Reinforcement Learning (RL) \citep{kakade2003sample,lattimore2013sample,dann2015sample,dann2017unifying}. In the episodic setting, this criterion maximizes the number of episodes where an $\epsilon$-optimal policy is played, and can therefore be seen as a possible RL-formulation to our criterion. However, the results for sample complexity significantly differ from ours -- first, the lenient regret allows representing more general criteria than the number of $\epsilon$-optimal plays. Second, in the RL setting, algorithms focus on the dependence in $\epsilon$ and in the size of the state and action spaces, while we derive bounds that depend on the suboptimality gaps. Finally, we show that when the optimal arm is large enough, the lenient regret is constant, and to the best of our knowledge, there is no equivalent result in RL. In some sense, our work can be viewed as a more fundamental analysis of sample complexity that will hopefully allow deriving more general results in RL.

To minimize the lenient regret, we devise a variant of the Thompson Sampling algorithm \citep{thompson1933likelihood}. The vanilla algorithm assumes a prior on the arm distributions, calculates the posterior given the observed rewards and chooses arms according to their probability of being optimal given their posteriors. Even though the algorithm is Bayesian in nature, its regret is asymptotically optimal for any fixed problem \citep{kaufmann2012thompson,agrawal2013further,korda2013thompson}. The algorithm is known to have superior performance in practice \citep{chapelle2011empirical} and has variants for many different settings, i.e., linear bandits \citep{agrawal2013thompson}, combinatorial bandits \citep{wang2018thompson} and more. 
For a more detailed review of TS algorithms and their applications, we refer the readers to \citep{russo2018tutorial}. In this work, we present a generalization of the TS algorithm, called $\epsilon$-TS, that minimizes the lenient regret when ignoring gaps smaller than $\epsilon$. Specifically, when $\epsilon=0$, our approach recovers the vanilla TS.

As previously stated, we also prove that if all gaps are larger than a known $\epsilon>0$, then our algorithm improves the performance also in terms of the standard regret. Specifically, we prove that the regret of $\epsilon$-TS is bounded by a constant when the optimal arm is larger than $1-\epsilon$. This closely relates to the results of \citep{bubeck2013bounded}, which proved constant regret bounds when the algorithm knows both the mean of the optimal arm and a lower bound on the gaps. This was later extended in \citep{lattimore2014bounded} for more general structures. Notably, one can apply the results of \citep{lattimore2014bounded} to derive constant regret bounds when all gaps are larger than $\epsilon$ and the optimal arm is larger than $1-\epsilon$. Nonetheless, and to the best of our knowledge, we are the first to demonstrate improved performance also when the optimal arm is smaller than $1-\epsilon$.

\section{Setting}
We consider the stochastic multi-armed bandit problem with $\Narms$ arms and arm distributions $\unu=\brc*{\nu_a}_{a=1}^\Narms$.  At each round, the agent selects an arm $a\in\brs*{\Narms}\triangleq\brc*{1,\dots,\Narms}$. Then, it observes a reward generated from a fixed distribution $\nu_a$, independently at random of other rounds. Specifically, when pulling an arm $a$ on the $n^{th}$ time, it observes a reward $X_{a,n}\sim \nu_a$. We assume that the rewards are bounded in $X_{a,n}\in\brs*{0,1}$ and have expectation $\E\brs*{X_{a,n}}=\mu_a$. We denote the empirical mean of an arm $a$ using the $n$ first samples by $\hat\mu_{a,n}=\frac{1}{n}\sum_{k=1}^{n}X_{a,k}$ and define $\hat\mu_{a,0}=0$. We also denote the mean of an optimal arm by $\mu^*=\max_a\mu_a$ and the suboptimality gap of an arm $a$ by $\dr{a}=\mu^*-\mu_a$. 

Let $a_t$ be the action chosen by the agent at time $t$. For brevity, we write its gap by $\dr{t}=\dr{a_t}$. Next, denote the observed reward after playing $a_t$ by $X_t=X_{a_t,N_{a_t}(t+1)}$, where $N_a(t)=\sum_{\tau=1}^{t-1}\Ind{a_\tau=a}$ is the number of times an arm $a$ was sampled up to time $t-1$. We also let $\hat\mu_{a}(t)=\hat\mu_{a,N_a(t)}$, the empirical mean of arm $a$ before round $t$, and denote the sum over the observed rewards of $a$ up to time $t-1$ by $S_a(t)=\sum_{k=1}^{N_a(t)} X_{a,k} = N_a(t)\hmu{a}{t}$. Finally, we define the natural filtration $\F_t=\sigma\br*{a_1,X_1,\dots,a_t,X_t}$.

Similarly to other TS algorithms, we work with Beta priors. When initialized with parameters $\alpha=\beta=1$, $p~\sim\Beta(\alpha,\beta)$ is a uniform distribution. Then, if $p$ is the mean of $N$ Bernoulli experiments, from which there were $S$ `successes' (ones), the posterior of $p$ is $\Beta(S+1,N-S+1)$. We denote the cumulative distribution function (cdf) of the Beta distribution with parameters $\alpha,\beta>0$ by $F^{\Beta}_{\alpha,\beta}$. Similarly, we denote the cdf of the Binomial distribution with parameters $n,p$ by $F^B_{n,p}$ and its probability density function (pdf) by $f^B_{n,p}$. We refer the readers to \Cref{appendix: useful results} for further details on the distributions and the relations between them (i.e., the `Beta-Binomial trick'). We also refer the reader to this appendix for some useful concentration results (Hoeffding's inequality and Chernoff-Hoeffding bound).

Finally, we define the Kullback–Leibler (KL) divergence between any two distributions $\nu$ and $\nu'$ by $\kl(\nu,\nu')$, and let $\klBin(p,q)=p\ln\frac{p}{q}+(1-p)\ln\frac{1-p}{1-q}$ be the KL-divergence between Bernoulli distributions with means $p,q\in\brs*{0,1}$. By convention, if $p<1$ and $q\ge1$, or if $p>0$ and $q=0$, we denote $\klBin(p,q)=\infty$.

\subsection{Regret and Lenient Regret}
\begin{wrapfigure}{r}{0.35\textwidth}
\vspace{-1.5cm}
  \begin{center}
    \includegraphics[width=0.33\textwidth]{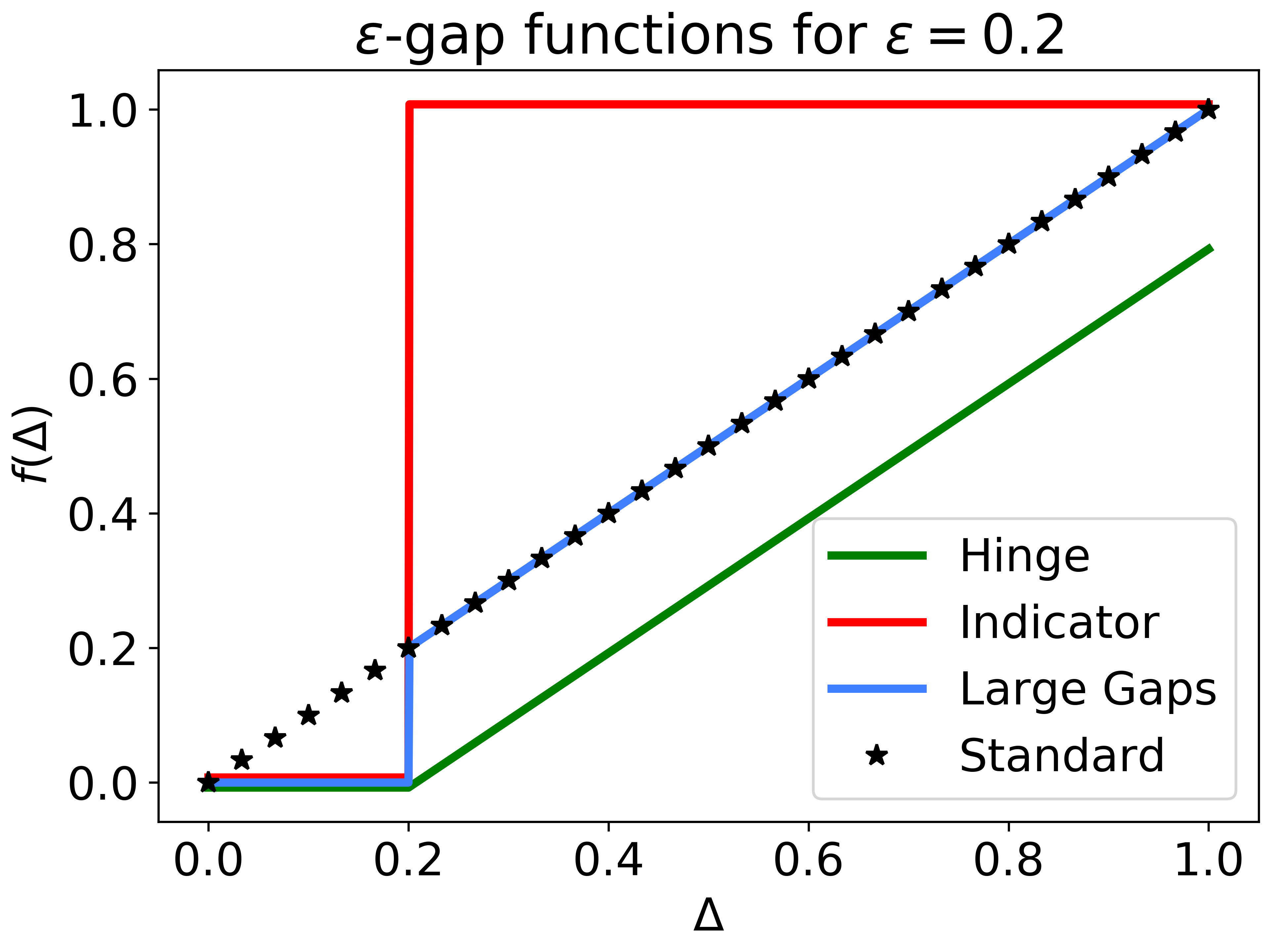}
  \end{center}
  \vspace{-.75cm}
  \caption{Illustration of different $\epsilon$-gap functions, in comparison to the standard regret $f(\dr{})=\dr{}$.}
  \vspace{-.5cm}
  \label{figure:eps-gap}
\end{wrapfigure}
Most MAB algorithms aim to maximize the expected cumulative reward of the agent. Alternatively, algorithms minimize their expected cumulative regret $R(T)= \E\brs*{\sum_{t=1}^T \dr{t}}$. However, and as previously discussed, this sometimes leads to undesired results. Notably, to identify an optimal arm, algorithms must sufficiently explore all suboptimal arms, which is sometimes infeasible. Nonetheless, existing lower bounds for regret-minimizing algorithms show that any reasonable algorithm cannot avoid such exploration \citep{lai1985asymptotically}. To overcome this issue, we suggest minimizing a weaker notion of regret that ignores small gaps. This will allow finding a near-optimal arm much faster. We formally define this criterion as follows:

\begin{definition}
For any $\epsilon\in\brs*{0,1}$, a function $f:\brs*{0,1}\rightarrow\R_+$ is called an $\epsilon$-gap function if $f(\dr{})=0$ for all $\dr{}\in\brs*{0,\epsilon}$ and $f(\dr{})>0$ for all $\dr{}>\epsilon$. The lenient regret w.r.t. an $\epsilon$-gap function $f$ is defined as $R_f(T)=\E\brs*{\sum_{t=1}^T f\br*{\dr{t}}}$.
\end{definition}

While it is natural to require of $f$ to increase with $\dr{}$, this assumption is not required for the rest of the paper. Moreover, assuming that $f(\dr{})>0$ for all $\dr{}>\epsilon$ is only required for the lower bound; for the upper bound, it can be replaced by $f(\dr{})\ge0$ when $\dr{}>\epsilon$. There are three notable examples for $\epsilon$-gap functions (see also \Cref{figure:eps-gap} for graphical illustration). First, the most natural choice for an $\epsilon$-gap function is the hinge loss $f(\dr{})=\max\brc*{\dr{}-\epsilon,0}$, which ignores small gaps and increases linearly for larger gaps. 
Second, we are sometimes interested in maximizing the number of steps where $\epsilon$-optimal arms are played. In this case, we can choose $f(\dr{})=\Ind{\dr{}>\epsilon}$. This can be seen as the natural adaptation of $\epsilon$-best-arm identification into a regret criterion. Importantly, notice that this criterion only penalizes sampling of arms with gaps larger than $\epsilon$. This comes with a stark contrast to best-arm identification, where \emph{all} samples are penalized, whether they are of $\epsilon$-optimal arms or not. 
Finally, we can choose $f(\dr{}) = \dr{}\cdot\Ind{\dr{}>\epsilon}$. Importantly, when all gaps are larger than $\epsilon$, then this function leads to the standard regret. Thus, all results for $\epsilon$-gap functions also hold for the standard regret when $\dr{a}>\epsilon$ for all suboptimal arms.

There are two ways for relating the lenient regret to the standard regret. First, notice that the standard regret can be represented through the $0$-gap function $f(\dr{})=\dr{}$. Alternatively, the standard regret can be related to lenient regret w.r.t. the indicator gap-function:
\begin{restatable}{claim-rst}{epsToStandardRelation}
\label{claim: epsilon to standard relation}
Let $R(T)= \E\brs*{\sum_{t=1}^T \dr{t}}$ be the standard regret and define $f_\epsilon(\dr{}) = \Ind{\dr{}>\epsilon}$. Then,
\begin{align*}
    R(T) = \int_{\epsilon=0}^1 R_{f_\epsilon}(T)d\epsilon \enspace.
\end{align*}
\end{restatable}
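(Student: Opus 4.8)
The plan is to expand the right-hand side via the definition of lenient regret and then interchange the order of $\epsilon$-integration, expectation, and the finite sum. Instantiating the lenient-regret definition with $f_\epsilon$ gives $R_{f_\epsilon}(T) = \E\brs*{\sum_{t=1}^T \Ind{\dr{t} > \epsilon}}$. Integrating over $\epsilon \in [0,1]$ and invoking Tonelli's theorem --- legitimate because the integrand $\Ind{\dr{t} > \epsilon}$ is nonnegative and jointly measurable in $\epsilon$ and the sample path --- I would move the $\epsilon$-integral inside, obtaining $\int_0^1 R_{f_\epsilon}(T)\,d\epsilon = \E\brs*{\sum_{t=1}^T \int_0^1 \Ind{\dr{t} > \epsilon}\,d\epsilon}$.

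The key computation is then the inner integral. For a fixed realization of $\dr{t}$, the indicator $\Ind{\dr{t} > \epsilon}$ equals $1$ exactly when $\epsilon < \dr{t}$, so $\int_0^1 \Ind{\dr{t} > \epsilon}\,d\epsilon = \min\{\dr{t}, 1\}$. This is the elementary layer-cake identity $x = \int_0^1 \Ind{x > \epsilon}\,d\epsilon$, valid for $x \in [0,1]$. Since the rewards lie in $\brs*{0,1}$, every gap satisfies $\dr{t} = \mu^* - \mu_{a_t} \in \brs*{0,1}$, so the truncation is never active and the inner integral equals $\dr{t}$ exactly.

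Substituting back yields $\int_0^1 R_{f_\epsilon}(T)\,d\epsilon = \E\brs*{\sum_{t=1}^T \dr{t}} = R(T)$, which is the claim. I do not expect a genuine obstacle here: the argument is essentially a one-line Fubini/layer-cake computation. The only two points deserving explicit mention are (i) the justification for swapping the integral with the expectation and the sum, which rests on nonnegativity of the indicator via Tonelli, and (ii) the observation that $\dr{t} \le 1$, so that the inner integral is exactly $\dr{t}$ rather than its truncation $\min\{\dr{t},1\}$ --- this follows directly from the bounded-reward assumption $X_{a,n} \in \brs*{0,1}$.
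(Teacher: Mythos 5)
Your proof is correct and follows essentially the same route as the paper's: both rest on the layer-cake identity $\dr{} = \int_0^1 \Ind{\dr{}>\epsilon}\,d\epsilon$ for gaps in $[0,1]$, followed by an interchange of the $\epsilon$-integral with the (finite, nonnegative) sum and expectation. The only cosmetic difference is that the paper runs the computation in the other direction, grouping the regret by arms as $R(T)=\sum_a \dr{a}\E\brs*{N_a(T+1)}$, whereas you work directly with the time-indexed sum and invoke Tonelli explicitly.
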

The proof is in \Cref{appendix: epsilon to standard relation}. Specifically, it implies that the standard regret aims to minimize the average lenient regret over different leniency levels. In contrast, our approach allows choosing which leniency level to minimize according to the specific application. By doing so, the designer can adjust the algorithm to its needs, instead of using  an algorithm that minimizes the average performance.

\section{Lower Bounds}
\label{section:lower bounds}
In this section, we prove a problem-dependent lower bound for the lenient regret. Notably, when working with $\epsilon$-gap functions with $\epsilon>0$, we prove that the lower bound behaves inherently different than the case of $\epsilon=0$. Namely, for some problems, the lower bound is sub-logarithmic, in contrast to the $\Omega(\ln T)$ bound for the standard regret.

To prove the lower bound, we require some additional notations. Denote by $\D$, a set of distributions over $\brs*{0,1}$ such that $\nu_a\in\D$ for all $a\in\brs*{\Narms}$. A bandit strategy is called \emph{consistent} over $\D$ w.r.t. an $\epsilon$-gap function $f$ if for any bandit problem with arm distributions in $\D$ and for any $0<\alpha\le1$, it holds that $R_f(T) = o\br*{T^\alpha}$. Finally, we use $\K_{\mathrm{inf}}$, as was defined in \citep{burnetas1996optimal,garivier2019explore}:
\begin{align*}
    \K_{\mathrm{inf}}(\nu,x,\D) = \inf\brc*{\kl(\nu,\nu'): \nu'\in\D, \E\brs*{\nu'}>x}\enspace,
\end{align*}
and by convention, the infimum over an empty set equals $\infty$. We now state the lower bound:
\begin{restatable}{theorem-rst}{dependentLowerBound}
\label{theorem:dependent_lower_bound}
For any consistent bandit strategy w.r.t. an $\epsilon$-gap function $f$, for all arms $k\in\brs*{\Narms}$ such that $\dr{k}>\epsilon$, it holds that
\begin{align}
\label{eq:dependent bound count}
    \liminf_{T\to\infty}\frac{\E\brs*{N_k(T+1)}}{\ln T} \ge \frac{1}{\K_{\mathrm{inf}}(\nu_k,\mu^*+\epsilon,\D)}\enspace.
\end{align}
Specifically, the lenient regret w.r.t. $f$ is lower bounded by 
\begin{align}
\label{eq:dependent regret lower bound}
    \liminf_{T\to\infty}\frac{R_f(T)}{\ln T} \ge \sum_{a:\dr{a}>\epsilon} \frac{f(\dr{a})}{\K_{\mathrm{inf}}(\nu_a,\mu^*+\epsilon,\D)}\enspace .
\end{align}
\end{restatable}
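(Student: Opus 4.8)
The plan is to run the classical change-of-measure (Lai--Robbins / transportation) argument, but with the alternative bandit chosen so that the perturbation pushes arm $k$ strictly past the $\epsilon$-leniency threshold. I first reduce the regret bound \eqref{eq:dependent regret lower bound} to the count bound \eqref{eq:dependent bound count}. Since each pull of arm $a$ contributes $f(\dr{a})$, we have $R_f(T)=\sum_a f(\dr{a})\,\E[N_a(T+1)]$, and as $f(\dr{a})=0$ whenever $\dr{a}\le\epsilon$, only arms with $\dr{a}>\epsilon$ contribute. Dividing by $\ln T$ and using superadditivity of $\liminf$ together with $f(\dr{a})\ge0$, the count bound applied termwise yields \eqref{eq:dependent regret lower bound}. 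So it suffices to prove the per-arm bound.

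Fix $k$ with $\dr{k}>\epsilon$. If $\K_{\mathrm{inf}}(\nu_k,\mu^*+\epsilon,\D)=\infty$ the bound is vacuous, so assume it is finite. Given $\delta>0$, choose $\nu'_k\in\D$ with $\E[\nu'_k]>\mu^*+\epsilon$ and $\kl(\nu_k,\nu'_k)\le\K_{\mathrm{inf}}(\nu_k,\mu^*+\epsilon,\D)+\delta$, which exists by definition of the infimum. Let $\unu'$ be the instance agreeing with $\unu$ on every arm except $k$, whose law is replaced by $\nu'_k$. In $\unu'$, arm $k$ is the unique optimal arm, and every other arm $a$ has gap $\E[\nu'_k]-\mu_a>\mu^*+\epsilon-\mu^*=\epsilon$; hence every non-$k$ arm is \emph{strictly} $\epsilon$-suboptimal and carries a positive $f$-penalty.

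The engine is the divergence decomposition: since the instances differ only at arm $k$, the trajectory laws over $\F_T$ satisfy $\kl(\mathbb{P}_{\unu},\mathbb{P}_{\unu'})=\E_{\unu}[N_k(T+1)]\,\kl(\nu_k,\nu'_k)$. Applying the data-processing inequality to the indicator of $A=\{N_k(T+1)\le T/2\}$ gives $\kl(\mathbb{P}_{\unu},\mathbb{P}_{\unu'})\ge\klBin\br{\mathbb{P}_{\unu}(A),\mathbb{P}_{\unu'}(A)}$. Consistency is invoked on both sides: under $\unu$, arm $k$ is $\epsilon$-suboptimal, so $f(\dr{k})\E_{\unu}[N_k(T+1)]\le R_f(T)=o(T^\alpha)$ forces $\E_{\unu}[N_k(T+1)]=o(T^\alpha)$ and hence $\mathbb{P}_{\unu}(A)\to1$ by Markov; under $\unu'$, all non-$k$ arms are $\epsilon$-suboptimal, so $\E_{\unu'}[\sum_{a\ne k}N_a(T+1)]=o(T^\alpha)$, and since $A\subseteq\{\sum_{a\ne k}N_a(T+1)\ge T/2\}$, Markov gives $\mathbb{P}_{\unu'}(A)\le o(T^\alpha)/(T/2)$, so $\ln(1/\mathbb{P}_{\unu'}(A))\ge(1-\alpha)\ln T-o(\ln T)$. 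Combining with the elementary bound $\klBin(p,q)\ge p\ln(1/q)-\ln2$ yields $\E_{\unu}[N_k(T+1)]\,\kl(\nu_k,\nu'_k)\ge\mathbb{P}_{\unu}(A)\br{(1-\alpha)\ln T-o(\ln T)}-\ln2$. Dividing by $\ln T$, taking $\liminf$ (using $\mathbb{P}_{\unu}(A)\to1$), then letting $\alpha\to0$ and finally $\delta\to0$ gives \eqref{eq:dependent bound count}.

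The main obstacle is conceptual rather than computational: the alternative $\nu'_k$ must raise arm $k$'s mean strictly above $\mu^*+\epsilon$, not merely above $\mu^*$, precisely so that in $\unu'$ every competing arm lies outside the $\epsilon$-tolerance and the \emph{lenient}-regret consistency can be invoked to bound their plays. This is exactly where the $\epsilon$-shift in $\K_{\mathrm{inf}}(\nu_k,\mu^*+\epsilon,\D)$ enters, replacing the classical $\K_{\mathrm{inf}}(\nu_k,\mu^*,\D)$ and producing the improved, $\epsilon$-dependent rate. Care is also needed with the edge case $\K_{\mathrm{inf}}=\infty$ and with verifying the divergence decomposition for the correct trajectory $\sigma$-algebra and horizon index $T+1$.
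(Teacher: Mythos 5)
Your proposal is correct and follows essentially the same route as the paper: the same alternative instance (raising arm $k$'s mean strictly above $\mu^*+\epsilon$ so that all competitors become $\epsilon$-suboptimal under $\unu'$), the same use of lenient-regret consistency on both instances, the same elementary lower bound on the binary KL, and the same reduction of the regret bound to the per-arm count bound. The only (immaterial) differences are that you apply the change-of-measure inequality to the indicator of the event $\{N_k(T+1)\le T/2\}$ while the paper applies it directly to the random variable $N_k(T+1)/T$, and that you pick a $\delta$-near-optimal $\nu_k'$ and let $\delta\to0$ where the paper proves the bound for arbitrary $\nu_k'$ and takes a supremum at the end.
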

The proof uses the techniques of \citep{garivier2019explore} and can be found in \Cref{appendix:lower bounds}. Specifically, choosing $\epsilon=0$ leads to the bound for the standard regret \citep{burnetas1996optimal}. As anticipated, both the lenient regret and the number of samples from arms with large gaps decrease as $\epsilon$ increases. This justifies our intuition that removing the penalty from $\epsilon$-optimal arms enables algorithms to reduce the exploration of arms with $\dr{a}>\epsilon$. 

The fact that the bounds decrease with $\epsilon$ leads to another interesting conclusion -- any algorithm that matches the lower bound for some $\epsilon$ is \emph{not} consistent for any $\epsilon'<\epsilon$, since it breaks the lower bound for $\epsilon'$. This specifically holds for the standard regret and implies that there is no `free lunch' -- achieving the optimal lenient regret for some $\epsilon>0$ leads to non-logarithmic standard regret.

Surprisingly, the lower bound is sub-logarithmic when $\mu^*>1-\epsilon$. To see this, notice that in this case, there is no distribution $\nu\in\D$ such that $\E\brs*{\nu}>\mu^*+\epsilon$, and thus $\K_{\mathrm{inf}}(\nu_a,\mu^*+\epsilon,\D)=\infty$. Intuitively, if the rewards are bounded in $[0,1]$ and some arm has a mean $\mu_a>1-\epsilon$, playing it can never incur regret. Identifying that such an arm exists is relatively easy, which leads to low lenient regret. Indeed, we will later present an algorithm that achieves constant regret in this regime.

Finally, and as with most algorithms, we will focus on the set of all problems with rewards bounded in $[0,1]$. In this case, the denominator in \Cref{eq:dependent regret lower bound} is bounded by $\K_{\mathrm{inf}}(\nu_a,\mu^*+\epsilon,\D)\ge\klBin(\mu_a,\mu^*+\epsilon)$ (e.g., by applying Lemma 1 of \citep{garivier2019explore}), and equality holds when the arms are Bernoulli-distributed. Since our results should also hold for Bernoulli arms, our upper bound will similarly depend on $\klBin(\mu_a,\mu^*+\epsilon)$.

\section{Thompson Sampling for Lenient Regret}

\begin{algorithm}
\caption{$\epsilon$-TS for Bernoulli arms}
\label{alg:eps-TS}
\begin{algorithmic}[1]
\State Initialize $N_a(1)=0$, $S_a(1)=0$ and $\hat\mu_a(1)=0$ for all $a\in\brs*{\Narms}$ \Comment{\textcolor{darkblue}{Blue: Changes from TS}}
    \For{$t=1,\dots,T$}
        \For{$a=1\dots,\Narms$}
            \CIF{$\hmu{a}{t}>1-\epsilon$}
                \State \textcolor{darkblue}{$\theta_a(t)=\hmu{a}{t}$ \label{alg line: empirical mean posterior}}
            \Else
                \State $\alpha_a(t) = \floor*{\frac{S_a(t)}{\textcolor{darkblue}{1-\epsilon}}}+1$ \label{alg line: beta posterior start}
                \State $\beta_a(t) = N_a(t) +2 - \alpha_a(t)$
                \State $\theta_a(t) = \textcolor{darkblue}{(1-\epsilon)}Y$ for $Y\sim \Beta(\alpha_a(t),\beta_a(t))$ \label{alg line: beta posterior end}
            \EndIf
        \EndFor
        \State Play $a_t \in \arg\max_a\theta_a(t)$ (break ties randomly) and observe the reward $X_t$
        \State Update $N_{a_t}(t+1) = N_{a_t}(t)+1$, $\;S_{a_t}(t+1) = S_{a_t}(t)+X_t$ and $\hmu{a_t}{t+1} = \frac{S_{a_t}(t+1)}{N_{a_t}(t+1)}$
        \State For all $a\ne a_t$, set $N_{a}(t+1) = N_{a}(t)$, $\;S_{a}(t) = S_{a}(t)$ and $\hmu{a}{t+1} =\hmu{a}{t}$
    \EndFor
\end{algorithmic}
\end{algorithm}

In this section, we present a modified TS algorithm that can be applied with $\epsilon$-gap functions. W.l.o.g., we assume that the rewards are Bernoulli-distributed, i.e., $X_t\in\brc*{0,1}$; otherwise, the rewards can be randomly rounded (see \cite{agrawal2012analysis} for further details). 
To derive the algorithm, observe that the lower bound of \Cref{theorem:dependent_lower_bound} approaches zero as the optimal arm becomes closer to $1-\epsilon$. Specifically, the lower bound behaves similarly to the regret of the vanilla TS with rewards scaled to $[0,1-\epsilon]$. On the other hand, if the optimal arm is above $1-\epsilon$, we would like to give it a higher priority, so the regret in this case will be sub-logarithmic. This motivates the following $\epsilon$-TS algorithm, presented in \Cref{alg:eps-TS}: denote by $\theta_a(t)$, the sample from the posterior of arm $a$ at round $t$, and recall that TS algorithm choose arms by $a_t\in\arg\max_a \theta_a(t)$. For any arm with $\hat\mu_a(t)\le1-\epsilon$, we fix its posterior to be a scaled Beta distribution, such that the range of the posterior is $\brs*{0,1-\epsilon}$, but its mean (approximately) remains $\hat\mu_a(t)$ (lines \ref{alg line: beta posterior start}-\ref{alg line: beta posterior end}). If $\hat\mu_a(t)>1-\epsilon$, we set the posterior to $\theta_a(t)=\hat\mu_a(t)>1-\epsilon$ (line \ref{alg line: empirical mean posterior}), which gives this arm a higher priority than any arm with $\hat\mu_a(t)\le1-\epsilon$. Notice that $\epsilon$-TS \emph{does not} depend on the specific $\epsilon$-gap function. Intuitively, this is since it suffices to match the number of suboptimal plays in \Cref{eq:dependent bound count}, that only depends on $\epsilon$. The algorithm enjoys the following asymptotic lenient regret:
\begin{restatable}{theorem-rst}{dependentUpperBound}
\label{theorem:dependent upper bound asymptotic}
Let $f$ be an $\epsilon$-gap function. Then, the lenient regret of $\epsilon$-TS w.r.t. $f$ is 
\begin{align}
    \limsup_{T\to\infty}\frac{R_f(T)}{\ln T}  
    &\le \sum_{a:\dr{a}>\epsilon} \frac{f(\dr{a})}{\klBin\br*{\frac{\mu_a}{1-\epsilon},\frac{\mu^*}{1-\epsilon}}}
    \le 4(1-\epsilon)\sum_{a:\dr{a}>\epsilon} \frac{f(\dr{a})}{\klBin(\mu_a,\mu^*+\epsilon)}\enspace. \label{eq:upper bound asymptotic}
\end{align}
Moreover, if $\mu^*>1-\epsilon$, then $R_f(T)=\Ocal(1)$.
\end{restatable}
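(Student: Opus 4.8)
The plan is to reduce the analysis of $\epsilon$-TS to that of vanilla Thompson Sampling on a \emph{rescaled} problem. Since $f$ vanishes on $\brs*{0,\epsilon}$, I would first rewrite the lenient regret as a weighted count of suboptimal pulls,
\begin{align*}
    R_f(T) = \sum_{a:\dr{a}>\epsilon} f(\dr{a})\,\E\brs*{N_a(T+1)}\enspace,
\end{align*}
so that it suffices to bound $\E\brs*{N_a(T+1)}$ for each arm with $\dr{a}>\epsilon$ and show
\begin{align*}
    \limsup_{T\to\infty}\frac{\E\brs*{N_a(T+1)}}{\ln T}\le \frac{1}{\klBin\br*{\frac{\mu_a}{1-\epsilon},\frac{\mu^*}{1-\epsilon}}}\enspace.
\end{align*}
Note that every such arm has $\mu_a<\mu^*-\epsilon\le 1-\epsilon$ since $\mu^*\le1$, so its effective mean $\mu_a/(1-\epsilon)$ stays below $1$.

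The observation driving this bound is that the rescaled sample $Y_a(t)=\theta_a(t)/(1-\epsilon)\sim\Beta(\alpha_a(t),\beta_a(t))$ behaves like the posterior of a standard TS arm with $\floor*{S_a(t)/(1-\epsilon)}$ pseudo-successes out of $N_a(t)$ trials, i.e.\ with effective empirical mean $\approx\hat\mu_a(t)/(1-\epsilon)$. Indeed, since $\alpha_a+\beta_a=N_a+2$ and $\alpha_a-1=\floor*{S_a/(1-\epsilon)}$, the Beta--Binomial trick yields $\Pr\brs*{\theta_a(t)\ge z\mid\F_t}=F^B_{N_a+1,\,z/(1-\epsilon)}\br*{\floor*{S_a/(1-\epsilon)}}$, precisely the form in the standard analysis but with the probability argument scaled by $1/(1-\epsilon)$. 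I would therefore mirror the decomposition of \citep{kaufmann2012thompson,agrawal2013further}: fixing thresholds $\frac{\mu_a}{1-\epsilon}<x<y<\frac{\mu^*}{1-\epsilon}$, split $\E\brs*{N_a(T+1)}$ into (i) rounds where $\hat\mu_a(t)/(1-\epsilon)$ exceeds $x$, which are rare and contribute $\Ocal(1)$ by Chernoff--Hoeffding; (ii) rounds where the posterior sample $Y_a(t)\ge y$ while the effective empirical mean stays below $x$ --- the dominant term, bounded via the Beta--Binomial trick and a Chernoff estimate by $\approx\ln T/\klBin(x,y)$, which tends to $\ln T/\klBin(\mu_a/(1-\epsilon),\mu^*/(1-\epsilon))$ as $x\downarrow\mu_a/(1-\epsilon)$, $y\uparrow\mu^*/(1-\epsilon)$; and (iii) the remaining rounds, in which arm $a$ is played only if the optimal arm's posterior lies below $y$, a rare event after enough pulls that contributes only lower-order terms. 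The floor and the $+1,+2$ offsets perturb the effective empirical mean by $\Ocal(1/N_a)$ and the Beta parameters by additive constants, so they vanish asymptotically and leave the leading constant equal to $1/\klBin(\mu_a/(1-\epsilon),\mu^*/(1-\epsilon))$. When $\mu^*\le1-\epsilon$ all true means lie below $1-\epsilon$, so by concentration every arm uses the scaled posterior in all but finitely many rounds (in expectation) and this reduction is clean; the rare exceedance rounds add only $\Ocal(1)$. The second equality in \Cref{eq:upper bound asymptotic} is then a purely analytic comparison of $\klBin(\frac{\mu_a}{1-\epsilon},\frac{\mu^*}{1-\epsilon})$ with $\klBin(\mu_a,\mu^*+\epsilon)$, which I would isolate as a separate lemma.

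For the constant-regret claim when $\mu^*>1-\epsilon$, the optimal arm, once $\hat\mu^*(t)>1-\epsilon$, receives the deterministic posterior $\theta^*(t)=\hat\mu^*(t)>1-\epsilon$, which strictly dominates every arm still using the scaled posterior (bounded by $1-\epsilon$). As each arm with $\dr{a}>\epsilon$ has $\mu_a<1-\epsilon$, a Chernoff--Hoeffding bound makes both the number of rounds where the optimal arm's empirical mean dips below $1-\epsilon$ and the number where a suboptimal arm's empirical mean rises above $1-\epsilon$ finite in expectation, since the tail probabilities are summable in the respective pull counts. In every other round the suboptimal arm cannot be selected, hence $\E\brs*{N_a(T+1)}=\Ocal(1)$ uniformly in $T$ and $R_f(T)=\Ocal(1)$.

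I expect the main obstacle to be term (ii): extracting the exact leading constant $1/\klBin$ for the scaled, floored posterior. This requires a sharp large-deviation estimate for $\Pr\brs*{Y_a(t)\ge y\mid\F_t}$ through the Beta--Binomial trick, while ensuring that the floor $\floor*{S_a/(1-\epsilon)}$ and the $+1,+2$ offsets perturb the Chernoff rate only at lower order so the constant is not inflated. In parallel, controlling term (iii) demands an anti-concentration bound showing the optimal arm's posterior rarely stays below $y$, which is delicate when the optimal arm is under-sampled. Finally, one must verify consistency at the boundary $\mu^*=1-\epsilon$, where $\klBin(\mu_a/(1-\epsilon),1)=\infty$ correctly predicts sub-logarithmic regret bridging the two regimes.
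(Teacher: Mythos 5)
Your overall route is essentially the paper's: rewrite $R_f(T)$ as $\sum_{a:\dr{a}>\epsilon} f(\dr{a})\E\brs*{N_a(T+1)}$, run the Kaufmann/Agrawal--Goyal decomposition on the scaled posterior via the Beta--Binomial trick (your terms (i) and (ii) correspond to the paper's Lemma~\ref{lemma:mean large deviation bound} and Lemma~\ref{lemma:bound when theta high and optimal low}), and finish with a separate analytic lemma comparing $\klBin\br*{\frac{\mu_a}{1-\epsilon},\frac{\mu^*}{1-\epsilon}}$ to $\klBin(\mu_a,\mu^*+\epsilon)$ (the paper's Lemma~\ref{lemma:mod-kl-properties}). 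The treatment of the high-mean regime via the deterministic posterior dominating the scaled one is also the paper's argument.

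There is, however, one genuine gap, and it sits exactly where you wave your hands. You claim that the number of rounds where the optimal arm's empirical mean dips below $1-\epsilon$ (and, in the general case, where its posterior sample falls below the threshold $y$) is finite in expectation ``since the tail probabilities are summable in the respective pull counts.'' Summability over pull counts $n$ of $\Pr\brc*{\hat\mu_{1,n}\le 1-\epsilon}$ does not bound the number of \emph{rounds} with $\hmu{1}{t}\le 1-\epsilon$: the same pull count can persist over arbitrarily many rounds if the optimal arm is not being selected, which is precisely the regime you are trying to rule out. (The reindexing-by-pull-count trick works for suboptimal arms in term (i) because the event there includes $a_t=a$, so each occurrence increments $N_a$; no such increment is forced for the optimal arm.) The paper closes this circularity with \Cref{prop:optimal arm count bound}, which shows $N_1(t)>(t-1)^b$ for some $b\in(0,1)$ except on an event whose probabilities are summable over $t$; this is proved by the saturated/unsaturated-arm induction of Kaufmann et al., adapted to the scaled posterior, and is by far the longest piece of the analysis. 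Conditioning on $N_1(t)>(t-1)^b$ is then what makes $\sum_t\Pr\brc*{\hmu{1}{t}\le1-\epsilon,\,N_1(t)>(t-1)^b}$ and its analogue for the low-optimal-mean case summable. You correctly flag the under-sampled optimal arm as the delicate point, but you do not supply the idea that resolves it, and without it both the constant-regret claim and your term (iii) remain unproven.
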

The proof can be found in the following section. In our context, the $\Ocal$ notation hides constants that depend on the mean of the arms and $\epsilon$. Notice that \Cref{theorem:dependent upper bound asymptotic} matches the lower bound of \Cref{theorem:dependent_lower_bound} for the set of all bounded distributions (and specifically for Bernoulli arms), up to an absolute constant. Notably, when $\mu^*>1-\epsilon$, we prove that the regret is constant, and not only sub-logarithmic, as the lower bound suggests. Specifically in this regime, an algorithm can achieve constant lenient regret by identifying an arm with a mean greater than $1-\epsilon$ and exploiting it. However, the algorithm does not know whether such an arm exists, and if there is no such arm, a best arm-identification scheme will perform poorly. Our algorithm naturally identifies such arms when they exist, while maintaining good lenient regret otherwise. Similarly, algorithms such as of \cite{bubeck2013bounded} cannot be applied to achieve constant regret, since they require knowing the value of the optimal arm, which is even a stronger requirement than knowing that $\mu^*>1-\epsilon$.

\begin{wrapfigure}{r}{0.36\textwidth}
  \begin{center}
    \includegraphics[width=0.34\textwidth]{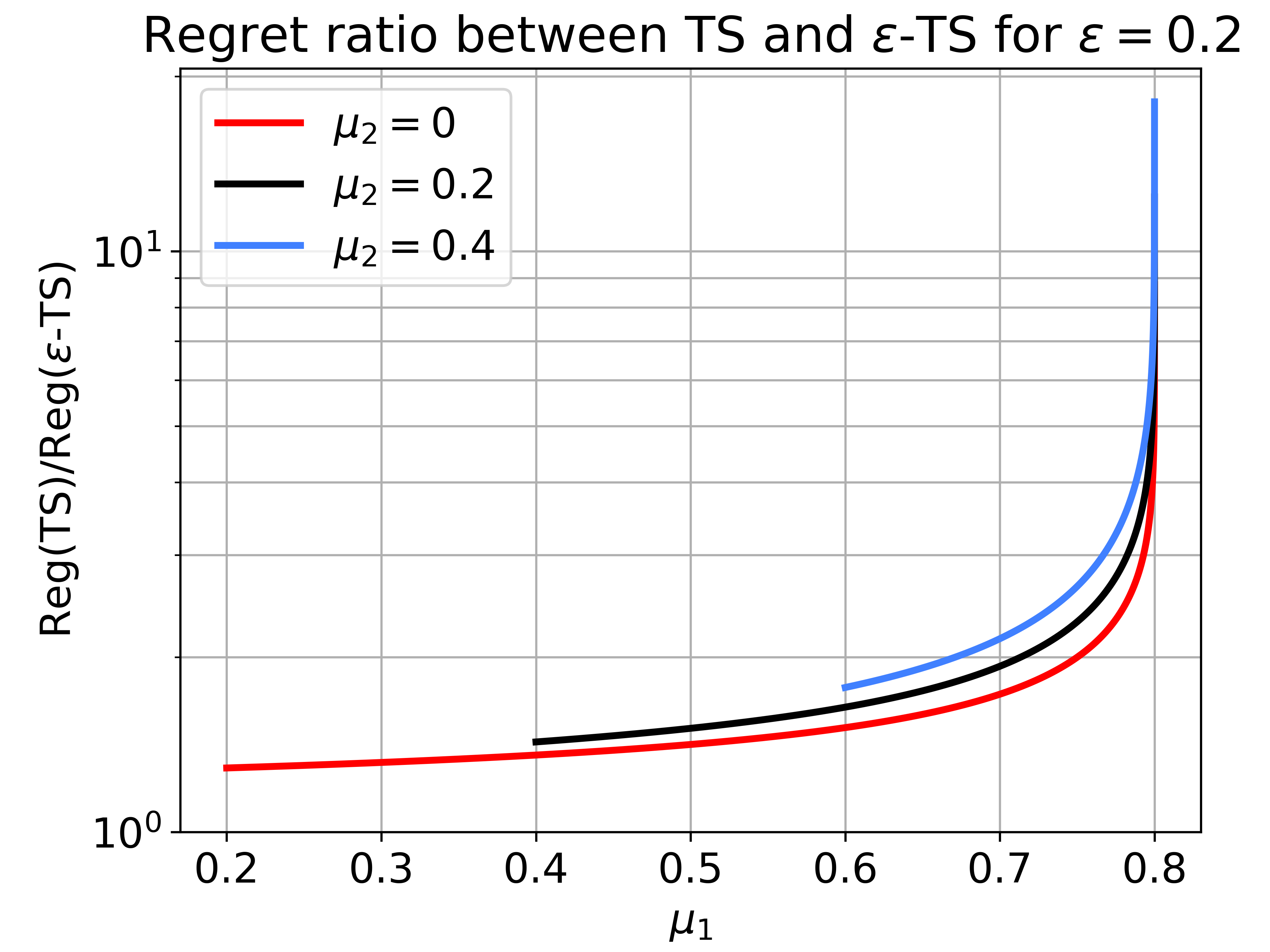}
  \end{center}
  \vspace{-.75cm}
  \caption{Ratio between the asymptotic lenient regret bounds of TS and $\epsilon$-TS for two-armed problems with $\epsilon=0.2$, as a function of the optimal arm $\mu_1$.}
  \vspace{-.3cm}
  \label{figure:regret ratio}
\end{wrapfigure}
\textbf{Comparison to MAB algorithms: } Asymptotically optimal MAB algorithms sample suboptimal arms according to the lower bound, i.e., for any suboptimal arm $a$, $ \limsup_{T\to\infty}\frac{N_a(T)}{\ln T}  
    \le \frac{1}{\klBin\br*{\mu_a,\mu^*}}$.
This, in turn, leads to a lenient regret bound of 
\begin{align}
    \limsup_{T\to\infty}\frac{R_f(T)}{\ln T}  
    &\le \sum_{a:\dr{a}>\epsilon}\frac{f(\dr{a})}{\klBin\br*{\mu_a,\mu^*}} \label{eq:lenient regret standard}
\end{align}
that holds for both the vanilla TS \citep{kaufmann2012thompson} and KL-UCB \citep{garivier2011kl}. First notice that the bound of \Cref{eq:upper bound asymptotic}, that depends on $\klBin\br*{\frac{\mu_a}{1-\epsilon},\frac{\mu^*}{1-\epsilon}}$, strictly improves the bounds for the standard algorithms (see Appendix \ref{appendix:standard lenient comparison} for further details). Moreover, $\epsilon$-TS achieves constant regret when $\mu^*>1-\epsilon$, and its regret quickly diminishes when approaching this regime. This comes in contrast to standard MAB algorithms, that achieve logarithmic regret in these regimes. To illustrate the improvement of $\epsilon$-TS, in comparison to standard algorithms, we present the ratio between the asymptotic bounds of Equations \eqref{eq:lenient regret standard} and \eqref{eq:upper bound asymptotic} in \Cref{figure:regret ratio}.

Before presenting the proof, we return to the $\epsilon$-gap function $f(\dr{}) = \dr{}\cdot\Ind{\dr{}>\epsilon}$. Recall that this function leads to the standard regret when all gaps are larger than $\epsilon$. Thus, our algorithm can be applied in this case to greatly improve the performance (from the bound of \Cref{eq:lenient regret standard} to the bound of \Cref{eq:upper bound asymptotic}), even in terms of the standard regret.

\subsection{Regret Analysis}
\label{section: regret analysis}
In this section, we prove the regret bound of \Cref{theorem:dependent upper bound asymptotic}. For the analysis, we assume w.l.o.g. that the arms are sorted in a decreasing order and all suboptimal arms have gaps $\dr{a}>\epsilon$, i.e. $\mu^*=\mu_1\ge\mu_1-\epsilon>\mu_2\ge\dots\ge\mu_\Narms$. If there are additional arms with gaps $\dr{a}\le\epsilon$, playing them will cause no regret and the overall lenient regret will only decrease (see \Cref{appendix:multiple optimal} or Appendix A in \citep{agrawal2012analysis} for further details). We also assume that $\epsilon<1$, as otherwise $f(\dr{a})=0$ for all $a\in\brs*{\Narms}$. Under these assumptions, we now state a more detailed bound for the lenient regret, that also includes a finite-time behavior:
\begin{theorem}
\label{theorem:dependent upper bound}
Let $f$ be an $\epsilon$-gap function. If $\mu_1>1-\epsilon$, there exists some constants $b=b(\mu_1,\mu_2,\epsilon)\in\br*{0,1}$, $C_b=C_b(\mu_1,\mu_2,\epsilon)$ and $L_1=L_1(\mu_1,\epsilon,b)$ such that
\begin{align}
    R_f(T) 
    &\le \sum_{a=2}^\Narms \frac{f(\dr{a})}{\klBin(1-\epsilon,\mu_a)}
    + \max_af(\dr{a})\br*{C_b+L_1+\frac{\pi^2/6}{\klBin(1-\epsilon,\mu_1)}}
    =\Ocal(1)\enspace. \label{eq:upper bound high opt}
\end{align}
If $\mu_1\le1-\epsilon$, then for any $c>0$, there exist additional constants $L_2=L_2(b,\epsilon)$ and $x_{a,c}=x_{a,c}(\mu_1,\mu_a,\epsilon)$ such that for $\eta(t)=\max\brc*{\mu_1-\epsilon,\mu_1-2\sqrt{\frac{6\ln t}{t^b}}}$,

\begin{align}
    R_f(T) 
    &\!\le \sum_{a=2}^\Narms f(\dr{a})\br*{(1+c)^2\max_{t\in\brs*{T}}\brc*{\frac{\ln t}{\klBin\br*{\frac{\mu_a}{1-\epsilon},\frac{\eta(t)}{1-\epsilon}}}}+2+\frac{1}{c} + \frac{1}{\klBin(x_{a,c},\mu_a)}} 
    + \max_af(\dr{a})\br*{C_b+L_2+6}. \label{eq:upper bound low opt}
\end{align}
\end{theorem}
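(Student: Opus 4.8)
The plan is to adapt the finite-time Thompson Sampling analysis of \citep{agrawal2013further} to the scaled posterior of $\epsilon$-TS, carrying all constants explicitly. Since $f$ vanishes on the optimal arm and on arms with $\dr{a}\le\epsilon$, under the sorting assumption we have $R_f(T)=\sum_{a=2}^\Narms f(\dr{a})\,\E\brs*{N_a(T+1)}$, so it suffices to bound the expected pull count of each suboptimal arm and then pull $\max_a f(\dr{a})$ out of the lower-order terms. For a fixed suboptimal arm $a$ I would introduce an empirical-mean threshold $x_a$ (strictly between $\mu_a$ and $\mu_1$; this becomes $x_{a,c}$, tuned through the free parameter $c$) and use the time-varying level $\eta(t)$ as a posterior threshold, together with the events $\Emu{a}{t}=\brc*{\hmu{a}{t}\le x_a}$ and $\Etheta{a}{t}=\brc*{\theta_a(t)\le\eta(t)}$. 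Because an arm with $\hmu{a}{t}\le1-\epsilon$ has posterior $\theta_a(t)=(1-\epsilon)Y$ with $Y\sim\Beta$ concentrating around $\tfrac{\hmu{a}{t}}{1-\epsilon}$, every comparison naturally takes place in the scaled coordinate, which is the source of the $\klBin\br*{\tfrac{\mu_a}{1-\epsilon},\tfrac{\eta(t)}{1-\epsilon}}$ appearing in \Cref{eq:upper bound low opt}. I then decompose
\begin{align*}
\E\brs*{N_a(T+1)} &= \sum_{t=1}^T \Pr\brs*{a_t=a} \\
&= \sum_t \Pr\brs*{a_t=a,\Emu{a}{t},\Etheta{a}{t}}
+ \sum_t \Pr\brs*{a_t=a,\Emu{a}{t},\overline{\Etheta{a}{t}}}
+ \sum_t \Pr\brs*{a_t=a,\overline{\Emu{a}{t}}}.
\end{align*}

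First I would dispatch the two easy terms. On $\overline{\Emu{a}{t}}$ the empirical mean exceeds $x_a>\mu_a$, so by the Chernoff--Hoeffding bound the summed probability over the pull count of arm $a$ telescopes to at most $\tfrac{1}{\klBin(x_a,\mu_a)}$, producing the $\tfrac{1}{\klBin(x_{a,c},\mu_a)}$ term. On the middle event the scaled posterior overshoots, $\theta_a(t)>\eta(t)$, despite $\hmu{a}{t}\le x_a$; here I convert the scaled-Beta tail into a Binomial tail via the Beta--Binomial trick and apply Chernoff--Hoeffding in the scaled coordinate, so the overshoot probability decays in $N_a(t)$ once $N_a(t)$ exceeds roughly $\tfrac{\ln t}{\klBin(\mu_a/(1-\epsilon),\eta(t)/(1-\epsilon))}$. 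Summing over $t$ yields the leading term $(1+c)^2\max_{t\in\brs*{T}}\brc*{\tfrac{\ln t}{\klBin(\mu_a/(1-\epsilon),\eta(t)/(1-\epsilon))}}$, with the factor $(1+c)^2$ and the additive $2+\tfrac1c$ coming from the slack in placing $x_a$ below the moving threshold $\eta(t)$; the floor $\eta(t)\ge\mu^*-\epsilon>\mu_2$ guarantees the divergence is positive for every suboptimal arm.

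The crux is the first term, where $\theta_a(t)\le\eta(t)$: for $a_t=a$ we need $\theta_a(t)\ge\theta_1(t)$, hence $\theta_1(t)\le\eta(t)$, so this counts rounds in which the optimal arm's posterior is small. Following \citep{agrawal2013further}, I condition on $\F_{t-1}$ and bound $\Pr\brs*{a_t=a,\Etheta{a}{t}\mid\F_{t-1}}\le\tfrac{1-p_t}{p_t}\Pr\brs*{a_t=1,\Etheta{a}{t}\mid\F_{t-1}}$, where $p_t=\Pr\brs*{\theta_1(t)>\eta(t)\mid\F_{t-1}}$; summing over $t$ telescopes this into $\sum_j\E\brs*{\tfrac{1}{p_{\tau_j}}-1}$ over the pull times $\tau_j$ of arm $1$. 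Showing this sum is a constant is the heart of the argument, and I would split it according to whether $N_1(t)\ge t^b$. When $N_1(t)\ge t^b$, the definition $\eta(t)=\max\brc*{\mu^*-\epsilon,\mu_1-2\sqrt{6\ln t/t^b}}$ forces $\hmu{1}{t}>\eta(t)$ with high probability, so the scaled posterior of arm $1$ clears $\eta(t)$ with probability bounded away from zero and $\E\brs*{\tfrac{1}{p}-1}$ is summable, contributing $L_2$ (resp.\ $L_1$); the exponent $b\in(0,1)$ is chosen to make this tail summable while keeping $\eta(t)\to\mu_1$ fast enough to recover the asymptotic constant of \Cref{theorem:dependent upper bound asymptotic}. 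The rounds with $N_1(t)<t^b$ are few in expectation, and bounding their number through a separate concentration argument on how the optimal arm's posterior drives its own sampling yields the constant $C_b$.

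Finally, when $\mu_1>1-\epsilon$ the structure collapses and gives \Cref{eq:upper bound high opt}: once $\hmu{1}{t}>1-\epsilon$ we have $\theta_1(t)=\hmu{1}{t}>1-\epsilon\ge\theta_a(t)$ for every arm with $\hmu{a}{t}\le1-\epsilon$, so a suboptimal arm can be pulled only when $\hmu{a}{t}>1-\epsilon>\mu_a$ -- a pure empirical deviation bounded by $\tfrac{1}{\klBin(1-\epsilon,\mu_a)}$ -- or while $\hmu{1}{t}\le1-\epsilon$. Taking the fixed threshold $1-\epsilon<\mu_1$ leaves a constant gap $\mu_1-(1-\epsilon)>0$, so the optimal-arm sum of the previous paragraph is a genuine constant $C_b+L_1$, and the residual $\tfrac{\pi^2/6}{\klBin(1-\epsilon,\mu_1)}$ gathers the summed probabilities of $\hmu{1}{t}$ dipping back below $1-\epsilon$ through a $\sum_n n^{-2}$ union bound over its sample count; the total is $\Ocal(1)$. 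The main obstacle throughout is this optimal-arm term: controlling $\E\brs*{\tfrac{1}{p_{\tau_j}}-1}$ uniformly in $T$ demands the two-regime split together with the precise tuning of $\eta(t)$ and $b$, simultaneously keeping the moving threshold above all suboptimal means (so every KL divergence is positive) and driving it to $\mu_1$ quickly enough that the residual counts stay summable while the leading constant sharpens to that of \Cref{theorem:dependent upper bound asymptotic}.
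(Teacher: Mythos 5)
Your overall architecture matches the paper's: the reduction $R_f(T)=\sum_{a\ge2}f(\dr{a})\,\E\brs*{N_a(T+1)}$ under the sorting assumption, the three-way split per suboptimal arm (empirical mean above a tuned level $x_{a,c}$ / posterior overshoot of $\eta(t)$ despite a low empirical mean / own posterior below $\eta(t)$), the case split on $\mu_1$ versus $1-\epsilon$, the Beta--Binomial trick plus Chernoff--Hoeffding in the scaled coordinate for the overshoot term (producing $(1+c)^2\max_t\brc*{\cdot}+2+\tfrac1c$ and $\tfrac{1}{\klBin(x_{a,c},\mu_a)}$), the polynomial-sampling constant $C_b$ for rounds with $N_1(t)\le(t-1)^b$, and the $\mu_1>1-\epsilon$ argument via ``either $\hmu{a}{t}>1-\epsilon$ or $\hmu{1}{t}\le1-\epsilon$''. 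These are precisely the paper's \Cref{lemma:mean large deviation bound}, \Cref{lemma:bound when theta high and optimal low}, \Cref{prop:optimal arm count bound} and \Cref{lemma:optimal low when high optimal arm}.

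The one place you diverge is your ``crux'' term, and it is also where your sketch has a real gap. The paper never forms the quantity $\E\brs*{\tfrac{1}{p_t}-1}$: it simply pays $\max_af(\dr{a})$ instead of $f(\dr{a})$ on the (constant) rounds where $\theta_1(t)\le\eta(t)$ and bounds $\sum_t\Pr\brc*{\theta_1(t)\le\eta(t),N_1(t)>(t-1)^b}\le L_2+6$ by two direct applications of Hoeffding to arm $1$ alone (\Cref{lemma:prob that theta low when optimal low}). You instead invoke the Agrawal--Goyal ratio $\Pr\brs*{a_t=a,\cdot\mid\F_{t-1}}\le\tfrac{1-p_t}{p_t}\Pr\brs*{a_t=1,\cdot\mid\F_{t-1}}$, which commits you to controlling $\sum_j\E\brs*{\tfrac{1}{p_{\tau_j}}-1}$. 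Your justification---that on $N_1(t)\ge t^b$ the posterior of arm $1$ ``clears $\eta(t)$ with probability bounded away from zero''---does not close this: a probability bounded away from zero gives a per-round constant and hence a linear total, and the expectation of the \emph{reciprocal} is dominated by the rare event that $\hmu{1}{t}$ sits far below $\mu_1$, where $p_t$ can be exponentially small in $N_1(t)$; deciding whether (probability of the bad event)$\times(1/p_t)$ is summable is exactly the delicate Binomial-cdf analysis in Agrawal--Goyal, which would have to be redone for the scaled posterior and for a threshold $\eta(t)$ that keeps moving between consecutive pulls of arm $1$ (breaking the usual telescoping over pull indices $j$). So either supply that analysis, or do what the paper does and avoid reciprocals entirely by bounding $\Pr\brc*{\theta_1(t)\le\eta(t),N_1(t)>(t-1)^b}$ directly---the constants in \Cref{eq:upper bound low opt} multiply $\max_af(\dr{a})$ precisely because the paper takes that cruder but far simpler route.
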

\begin{proof}
We decompose the regret similarly to \citep{kaufmann2012thompson} and show that with high probability, the optimal arm is sampled polynomially, i.e., $N_1(t)=\Omega(t^b)$ for some $b\in\br*{0,1}$. Formally, let $\eta(t)$ be some function such that $\mu_1-\epsilon\le\eta(t)<\mu_1$ for all $t\in\brs*{T}$, and for brevity, let $f_{\max}=\max_a f(\dr{a})$. Also, recall that the lenient regret is defined as $R_f(T)=\E\brs*{\sum_{t=1}^T f\br*{\dr{t}}}$. Then, the lenient regret can be decomposed to
\begin{align*}
    R_f(&T) 
    = \sum_{t=1}^T  \E\brs*{f(\dr{t})\Ind{\theta_1(t)>\eta(t)}} 
    + \sum_{t=1}^T \E\brs*{f(\dr{t})\Ind{\theta_1(t)\le\eta(t)}}\\
    &\le \sum_{t=1}^T  \E\brs*{f(\dr{t})\Ind{\theta_1(t)>\eta(t)}}
    + f_{\max}\sum_{t=1}^T \E\brs*{\Ind{\theta_1(t)\le\eta(t)}} \\
    &= \sum_{t=1}^T\sum_{a=2}^\Narms f(\dr{a})  \E\brs*{\Ind{a_t=a,\theta_1(t)>\eta(t)}}
    + f_{\max}\sum_{t=1}^T \E\brs*{\Ind{\theta_1(t)\le\eta(t)}} \enspace.
\end{align*}
Replacing the expectations of indicators with probabilities and dividing the second term to the case where $a_1$ was sufficiently and insufficiently sampled, we get
\begin{align}
    R_f(&T) 
    \le \sum_{a=2}^\Narms f(\dr{a}) \underbrace{\sum_{t=1}^T \Pr\brc*{a_t=a,\theta_1(t)> \eta(t)}}_{(A)} 
    +  f_{\max} \underbrace{\sum_{t=1}^T  \Pr\brc*{\theta_1(t)\le \eta(t), N_1(t)>(t-1)^b}}_{(B)} \nonumber\\
    &\qquad+f_{\max}\underbrace{\sum_{t=1}^T  \Pr\brc*{N_1(t)\le (t-1)^b}}_{(C)}\enspace. \label{eq:regret decomposition}
\end{align}
The first part of the proof consists of bounding term $(C)$, i.e., showing that the optimal arm is sampled polynomially with high probability. We do so in the following proposition:
\begin{restatable}{proposition-rst}{optCountBound}
\label{prop:optimal arm count bound}
There exist constants $b=b(\mu_1,\mu_2,\epsilon)\in(0,1)$ and $C_b=C_b(\mu_1,\mu_2,\epsilon)<\infty$ such that
\begin{align*}
    \sum_{t=1}^T  \Pr\brc*{N_1(t)\le (t-1)^b} \le C_b\enspace.
\end{align*}
\end{restatable}
The proof follows the lines of Proposition 1 in \citep{kaufmann2012thompson} and can be found in \Cref{appendix:optCountBound}. To bound $(A)$ and $(B)$, we divide the analysis into two cases: $\mu_1>1-\epsilon$ and $\mu_1\le1-\epsilon$. 

\paragraph{\underline{First case:}} $\mu_1>1-\epsilon$.

In this case, we fix $\eta(t)=1-\epsilon$. For $(A)$, observe that if $a_t=a$ and $\theta_1(t)> 1-\epsilon$, then $\theta_a(t)> 1-\epsilon$, which also implies that $\hmu{a}{t}>1-\epsilon$ (to see this, notice that if $\hmu{a}{t}\le1-\epsilon$, then $\theta_a(t)=(1-\epsilon)Y\le1-\epsilon$). However, since $\dr{a}>\epsilon$ for all $a\ne1$, all suboptimal arms have means $\mu_a<1-\epsilon$. Thus, when $N_a(t)$ becomes large, the probabilities in $(A)$ quickly diminish and this term can be bounded by constant. Formally, we write
\begin{align*}
    \sum_{t=1}^T  \Pr\brc*{a_t=a,\theta_1(t)> 1-\epsilon}
    &\le \sum_{t=1}^T  \Pr\brc*{a_t=a,\theta_a(t)> 1-\epsilon}
    = \sum_{t=1}^T  \Pr\brc*{a_t=a,\hmu{a}{t}> 1-\epsilon},
\end{align*}
and bound this term using the following lemma (see \Cref{appendix:mean large deviation bound} for the proof):
\begin{restatable}{lemma-rst}{meanLargeDev}
\label{lemma:mean large deviation bound}
For any arm $a\in\brs*{\Narms}$, if $x>\mu_a$, then
\begin{align*}
    \sum_{t=1}^T  \Pr\brc*{a_t=a,\hmu{a}{t}>x}
    \le \frac{1}{\klBin(x,\mu_a)}\enspace.
\end{align*}
\end{restatable}
Similarly, in $(B)$, $\theta_1(t)\le 1-\epsilon$ implies that $\hmu{1}{t}\le 1-\epsilon$, and since $N_1(t)$ is large, this event has a low probability. We formalize this intuition in \Cref{lemma:optimal low when high optimal arm}, whose proof can be found in \Cref{appendix:optimal low when high optimal arm}.

\begin{restatable}{lemma-rst}{optLowHighOptimal}
\label{lemma:optimal low when high optimal arm}
Assume that $\mu_1>1-\epsilon$, and for any $b\in\br*{0,1}$, let $L_1(\mu_1,\epsilon,b)$ such that for all $t\ge L_1(\mu_1,\epsilon,b)$, it holds that $(t-1)^b\ge \frac{2\ln t}{\klBin(1-\epsilon,\mu_1)}+1$. Then,
\begin{align*}
    \sum_{t=1}^T  \Pr\brc*{\theta_1(t)\le 1-\epsilon, N_1(t)>(t-1)^b}
    \le L_1(\mu_1,\epsilon,b) + \frac{\pi^2/6}{\klBin(1-\epsilon,\mu_1)}\enspace.
\end{align*}
\end{restatable}
Substituting both lemmas and \Cref{prop:optimal arm count bound} into \Cref{eq:regret decomposition} leads to \Cref{eq:upper bound high opt}.

\paragraph{\underline{Second case:}} $\mu_1\le1-\epsilon$.

For this case, we fix $\eta(t)=\max\brc*{\mu_1-\epsilon,\mu_1-2\sqrt{\frac{6\ln t}{(t-1)^b}}}$. To bound $(A)$, we adapt the analysis of \citep{agrawal2013further} and decompose this term into two parts: (i) the event where the empirical mean $\hmu{a}{t}$ is far above $\mu_{a}$, and (ii) the event where $\hmu{a}{t}$ is close to $\mu_a$ and $\theta_a(t)$ is above $\eta(t)$. Doing so leads to \Cref{lemma:bound when theta high and optimal low}, whose proof is in \Cref{appendix:bound when theta high and optimal low}:
\begin{restatable}{lemma-rst}{boundHighThetaLowOpt}
\label{lemma:bound when theta high and optimal low}
Assume that $\mu_1\le1-\epsilon$ and $\eta(t)\in[\mu_1-\epsilon,\mu_1)$ for all $t\in\brs*{T}$. Then, for any $c>0$,
\begin{align*}
    \sum_{t=1}^T  \Pr\brc*{a_t=a,\theta_1(t)> \eta(t)}
    \le (1+c)^2\max_{t\in\brs*{T}}\brc*{\frac{\ln t}{\klBin\br*{\frac{\mu_a}{1-\epsilon},\frac{\eta(t)}{1-\epsilon}}}}+2+\frac{1}{c} + \frac{1}{\klBin(x_{a,c},\mu_a)}\enspace,
\end{align*}
where $x_{a,c}\in\br*{\mu_a,\mu_1-\epsilon}$ is such that $\klBin\br*{\frac{x_{a,c}}{1-\epsilon},\frac{\mu_1-\epsilon}{1-\epsilon}} = \frac{1}{1+c}\klBin\br*{\frac{\mu_a}{1-\epsilon},\frac{\mu_1-\epsilon}{1-\epsilon}}$.
\end{restatable}
For $(B)$, we provide the following lemma (see \Cref{appendix:prob that theta low when optimal low} for the proof):
\begin{restatable}{lemma-rst}{ProbThetaLow}
\label{lemma:prob that theta low when optimal low}
Assume that $\mu_1\le1-\epsilon$ and let $\eta(t)=\max\brc*{\mu_1-\epsilon,\mu_1-2\sqrt{\frac{6\ln t}{(t-1)^b}}}$. Also, let $L_2(b,\epsilon)\ge2$ such that for all $t\ge L_2(b,\epsilon)$, it holds that $\eta(t)>\mu_1-\epsilon$. Then,
\begin{align*}
    \sum_{t=1}^T  \Pr\brc*{\theta_1(t)\le \eta(t), N_1(t)>(t-1)^b}
    \le L_2(b,\epsilon)+6
\end{align*}
\end{restatable}
Substituting both lemmas and \Cref{prop:optimal arm count bound} into \Cref{eq:regret decomposition} results with \Cref{eq:upper bound low opt} and concludes the proof of \Cref{theorem:dependent upper bound}.
\end{proof}
\begin{proofsketch}[Proof sketch of \Cref{theorem:dependent upper bound asymptotic}]
It only remains to prove the asymptotic rate of \Cref{theorem:dependent upper bound asymptotic}, using the finite-time bound of \Cref{theorem:dependent upper bound}. To do so, notice that the denominator in \Cref{eq:upper bound low opt} asymptotically behaves as $\klBin\br*{\frac{\mu_a}{1-\epsilon},\frac{\mu_1}{1-\epsilon}}$, which leads to the first bound of the theorem. On the other hand, the denominator of the second bound depends on $\klBin\br*{\mu_a,\mu_1+\epsilon}$. We prove that when $\dr{a}>\epsilon$, these two quantities are closely related:
\begin{restatable}{lemma-rst}{modKlProperties}
\label{lemma:mod-kl-properties}
 For any $\epsilon\in\left[0,\frac{1}{2}\right)$, any  $p\in[0,1-2\epsilon)$ and any $q\in[p+\epsilon,1-\epsilon)$, 
\begin{align*}
    \klBin\br*{\frac{p}{1-\epsilon},\frac{q}{1-\epsilon}}\ge \frac{1}{4(1-\epsilon)}\klBin(p,q+\epsilon) \enspace .
\end{align*}
\end{restatable}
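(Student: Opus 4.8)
The plan is to reduce everything to the one–dimensional integral representation of the binary relative entropy,
$$\klBin(x,y)=\int_x^y\frac{s-x}{s(1-s)}\,ds\qquad(x\le y),$$
and then to compare the two sides as functions of $q$, with $p$ and $\epsilon$ held fixed. On the admissible range $q\in[p+\epsilon,1-\epsilon)$ — note that $p<1-2\epsilon$ is exactly the condition that this range is nonempty — both sides are finite and increasing, and both diverge logarithmically as $q\to1-\epsilon$, since the second arguments $\tfrac{q}{1-\epsilon}$ and $q+\epsilon$ each tend to $1$. The entire difficulty is to show that the left-hand side dominates at the correct rate all the way up to this common singularity.

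First I would differentiate both sides in $q$. From the integral representation, the derivative of the left-hand side is $\frac{q-p}{q(1-\epsilon-q)}$, and the derivative of $\frac{1}{4(1-\epsilon)}\klBin(p,q+\epsilon)$ is $\frac{q+\epsilon-p}{4(1-\epsilon)(q+\epsilon)(1-\epsilon-q)}$. The key observation is that the singular factor $\frac{1}{1-\epsilon-q}$ is common to both and cancels: the derivative inequality reduces to the purely polynomial statement $4(1-\epsilon)(q+\epsilon)(q-p)\ge q(q+\epsilon-p)$. This follows at once from the two hypotheses $q-p\ge\epsilon$ and $\epsilon<\tfrac12$, since then $4(1-\epsilon)(q+\epsilon)(q-p)>2q(q-p)=q(q-p)+q(q-p)\ge q(q-p)+q\epsilon=q(q+\epsilon-p)$. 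Hence $q\mapsto\klBin(\tfrac{p}{1-\epsilon},\tfrac{q}{1-\epsilon})-\frac{1}{4(1-\epsilon)}\klBin(p,q+\epsilon)$ is nondecreasing and is minimized at the left endpoint, so it suffices to prove the inequality in the minimal-gap case $q=p+\epsilon$.

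The minimal-gap base case is where I expect the real work, and where the constant $4$ is essentially forced. Here I would parametrize both integrals by the gap $\tau=s-p$, so the left-hand integrand becomes $\frac{\tau}{(p+\tau)(1-\epsilon-p-\tau)}$ over $\tau\in[0,\epsilon]$ and the right-hand integrand becomes $\frac{\tau}{(p+\tau)(1-p-\tau)}$ over $\tau\in[0,2\epsilon]$, the denominators differing only by the additive shift $\epsilon$ in the last factor. Splitting the longer right-hand range into $[0,\epsilon]$ and $[\epsilon,2\epsilon]$, the first piece is pointwise dominated by the left-hand integrand (a larger denominator). On the second piece the substitution $\tau\mapsto\tau-\epsilon$ turns $1-p-\tau$ back into exactly $1-\epsilon-p-\sigma$, realigning it with the left-hand integral, and the mismatch collapses to a single correction integral. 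This leaves the goal $\klBin(p,p+2\epsilon)\le 4(1-\epsilon)\,G$, where $G$ is the left-hand integral; since the first piece and the aligned part of the second piece together contribute $2G$, it remains to bound the correction by the slack $4(1-\epsilon)-2=2-4\epsilon>0$ times $G$.

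The main obstacle is precisely this final correction bound. The correction and $G$ share the factor $\frac{1}{(p+\tau)(1-\epsilon-p-\tau)}$, but the correction over-weights small $\tau$ — its relative factor $\frac{\epsilon p}{\sigma(p+\sigma+\epsilon)}$ diverges as $\sigma\to0$ — so no pointwise comparison is available and one must compare after integrating. Near $p\to1-2\epsilon$ both integrals diverge logarithmically, so any Pinsker-type polynomial bound is far too lossy to preserve the constant; moreover a naive factoring of the bounded parts of the correction at their extreme values, followed by exact integration of the singular factor, can point the wrong way at the singular end. I therefore expect to need an honest logarithmic estimate that retains enough of the left-hand integral — plausibly splitting the analysis according to whether $p$ is close to $1-2\epsilon$ — with the margin $2-4\epsilon>0$ supplied by $\epsilon<\tfrac12$ being exactly what makes the constant $4$ admissible.
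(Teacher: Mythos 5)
Your reduction to the boundary case $q=p+\epsilon$ is correct and is in fact the same first step the paper takes: after differentiating in $q$, the common singular factor $\frac{1}{1-\epsilon-q}$ cancels and the monotonicity claim becomes the polynomial inequality $4(1-\epsilon)(q+\epsilon)(q-p)\ge q(q+\epsilon-p)$, which your chain $4(1-\epsilon)(q+\epsilon)(q-p)>2q(q-p)\ge q(q-p)+q\epsilon$ establishes correctly from $q-p\ge\epsilon$ and $\epsilon<\tfrac12$. The problem is the base case $q=p+\epsilon$, which you yourself flag as "where the real work is" and then do not do. Your splitting is algebraically sound: writing $G=\int_0^\epsilon\frac{\sigma\,d\sigma}{(p+\sigma)(1-\epsilon-p-\sigma)}$ for the left-hand integral, one gets $\klBin(p,p+2\epsilon)\le 2G+C$ with $C=\int_0^\epsilon\frac{\epsilon p\,d\sigma}{(p+\sigma)(p+\sigma+\epsilon)(1-\epsilon-p-\sigma)}$, so the whole lemma rests on $C\le(2-4\epsilon)G$. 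That inequality is true, but it is genuinely delicate and you give no argument for it: as $p\to1-2\epsilon$ both $C$ and $(2-4\epsilon)G$ diverge logarithmically with divergence coefficients in the exact ratio $\tfrac12$, so there is no slack to absorb a crude estimate, and (as you note) no pointwise domination is available because the correction over-weights small $\sigma$. "I expect to need an honest logarithmic estimate" is a statement of the remaining problem, not a proof of it. As written, the proposal is therefore incomplete.

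For what it is worth, the paper avoids this integral bookkeeping entirely at the base case by a second monotonicity argument, this time in $\epsilon$ along the diagonal: setting $h(\epsilon)=(1-\epsilon)\klBin\left(\frac{p}{1-\epsilon},\frac{p+\epsilon}{1-\epsilon}\right)-\frac{1}{4}\klBin(p,p+2\epsilon)$, a direct computation (using only $\ln x\le x-1$) shows $h'(\epsilon)\ge\frac{\epsilon^2}{(p+\epsilon)(p+2\epsilon)}\ge0$, and $h(0)=0$, so $h(\epsilon)\ge0$ for all $\epsilon\in\left[0,\tfrac12\right)$. This is the step you are missing; I would recommend either adopting it or actually carrying out the logarithmic estimate for $C\le(2-4\epsilon)G$, which will likely require the case split near $p=1-2\epsilon$ that you anticipate.
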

The proof of this lemma can be found in \Cref{appendix:mod-kl-properties}. This immediately leads to the desired asymptotic rate, but for completeness, we provide the full proof of the theorem in \Cref{appendix:dependent upper bound asymptotic}.
\end{proofsketch}

\section{Experiments}
\label{section:experiments}
In this section, we present an empirical evaluation of $\epsilon$-TS. Specifically, we compare $\epsilon$-TS to the vanilla TS on two different gap functions: $f(\dr{})=\dr{}$, which leads to the standard regret, and the hinge function $f(\dr{})=\max\brc*{\dr{}-\epsilon,0}$. All evaluations were performed for $\epsilon=0.2$ over $50,000$ different seeds and are depicted in \Cref{figure:experiments}. We also refer the readers to \Cref{appendix:experiments}, where additional statistics of the simulations are presented, alongside additional tests that were omitted due to space limits. We tested 4 different scenarios -- when the optimal arm is smaller or larger than $1-\epsilon$ (left and right columns, respectively), and when the minimal gap is larger or smaller than $\epsilon$ (top and bottom rows, respectively). Importantly, when the minimal gap is larger than $\epsilon$, the standard regret can be written using the $\epsilon$-gap function $f(\dr{})=\dr{}\cdot\Ind{\dr{}>\epsilon}$. Indeed, one can observe that when $\dr{a}>\epsilon$ for all suboptimal arms, $\epsilon$-TS greatly improves the performance, in comparison to the vanilla TS. Similarly, when $\mu^*>1-\epsilon$, the lenient regret of $\epsilon$-TS converges to a constant, as can be expected from \Cref{theorem:dependent upper bound asymptotic}. On the other hand, the lenient regret of the vanilla TS continues to increase.

Next, we move to simulations where the suboptimality gap is smaller than $\epsilon$. In such cases, the standard regret cannot be represented as an $\epsilon$-gap function, and $\epsilon$-TS is expected to perform worse on this criterion than the vanilla TS. Quite surprisingly, when $\mu^*=0.5$, $\epsilon$-TS still surpasses the vanilla TS. In \Cref{appendix:experiments}, we show that TS beats $\epsilon$-TS only after $20,000$ steps. On the other hand, when $\mu^*=0.9$, the standard regret of $\epsilon$-TS increases linearly. This is since with finite probability, the algorithm identifies that $\mu_2=0.85>1-\epsilon$ at a point where the empirical mean of the optimal arm is smaller than $1-\epsilon$. Then, the algorithm only exploits $a=2$ and will never identify that $a=1$ is the optimal arm. Nonetheless, we emphasize that $\epsilon$-TS still outperforms the vanilla TS in terms of the lenient regret, as can be observed for the hinge-function.

To conclude this section, the simulations clearly demonstrate the tradeoff when optimizing the lenient regret: when near-optimal solutions are adequate, then the performance can be greatly improved. On the other hand, in some cases, it leads to major degradation in the standard regret.

\begin{figure}[t]
\centering
\subfigure{
\includegraphics[trim=0 15 0 15,clip,width=0.37\linewidth]{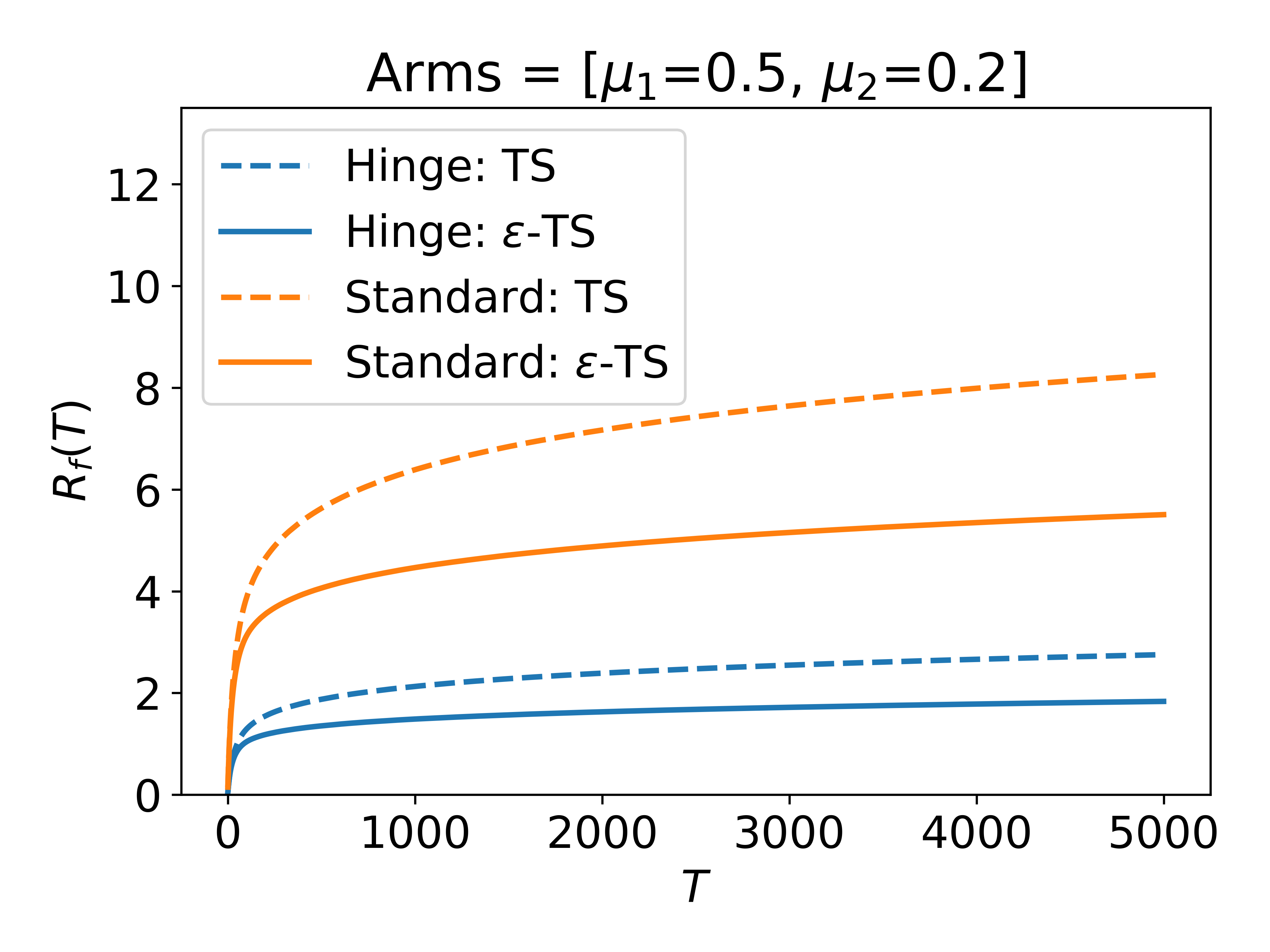}
\label{subfig:low_opt}}
\hspace{0.05\linewidth}
\subfigure{
\includegraphics[trim=0 15 0 15,clip,width=0.37\linewidth]{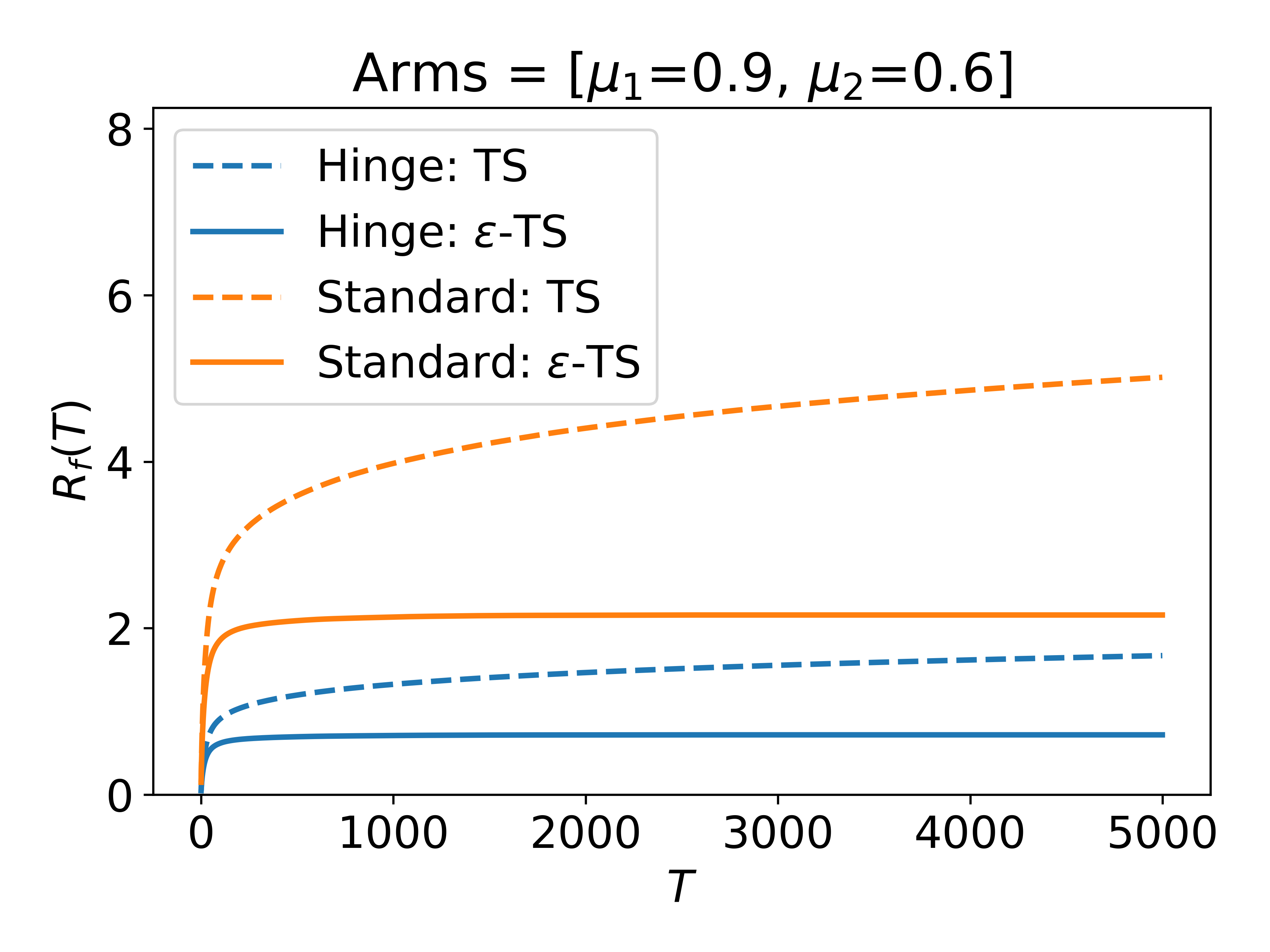} 
\label{subfig:high_opt}} 
\subfigure{
\includegraphics[trim=0 15 0 15,clip,width=0.37\linewidth]{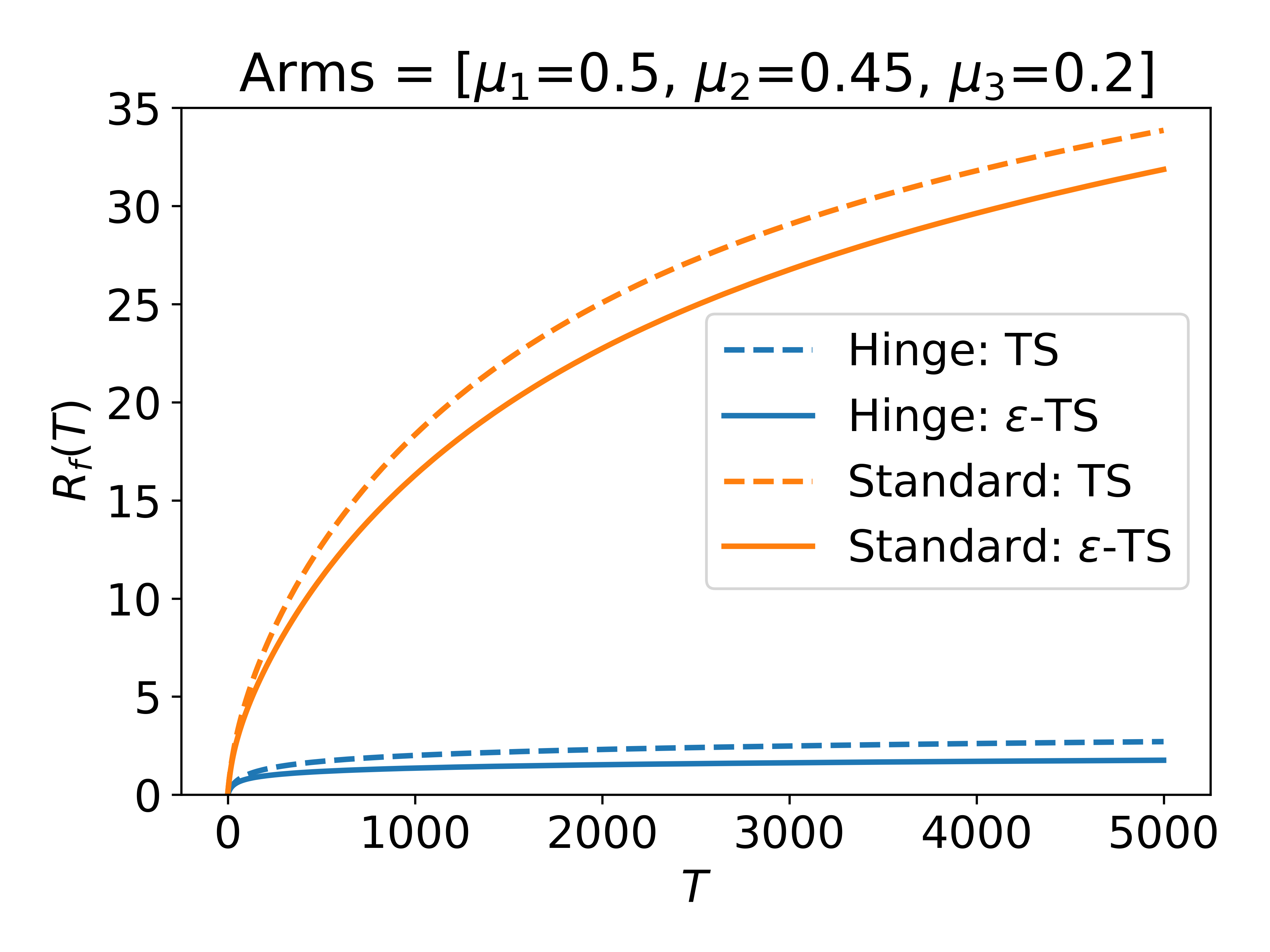}
\label{subfig:low_opt_mid_arm}}
\hspace{0.05\linewidth}
\subfigure{
\includegraphics[trim=0 15 0 15,clip,width=0.37\linewidth]{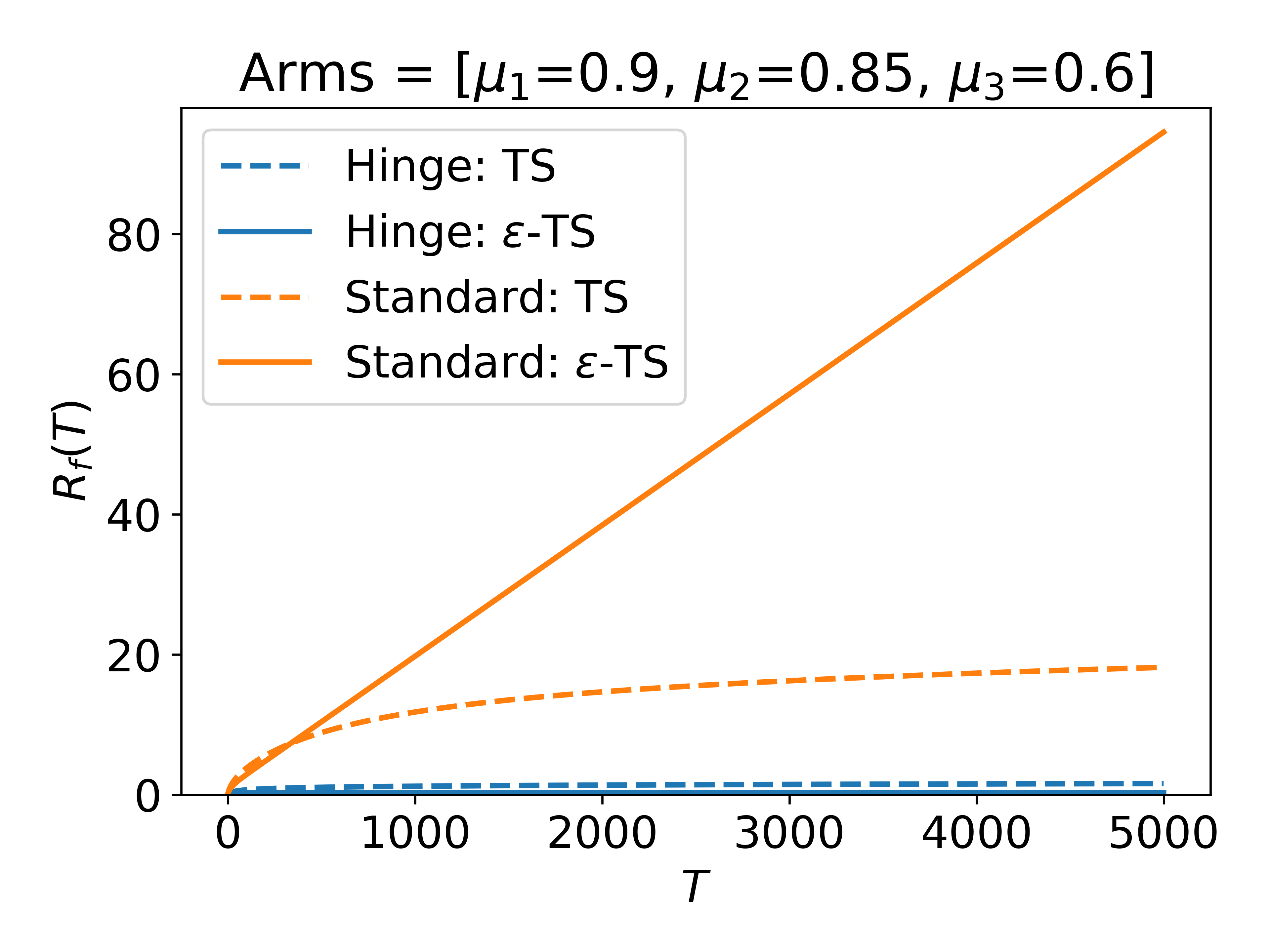} 
\label{subfig:high_opt_mid_arm}}
\caption{Evaluation of $\epsilon$-TS and vanilla TS with $\epsilon=0.2$ and Bernoulli rewards. `Hinge' is the $\epsilon$-gap function $f(\dr{})=\max\brc*{\dr{}-\epsilon,0}$ and `Standard' is the $0$-gap function $f(\dr{})=\dr{}$, which leads to the standard regret. Top row -- the minimal gap is $\dr{2}=0.3>\epsilon$; therefore, $\epsilon$-TS enjoys performance guarantees also for the standard regret. Bottom row -- the minimal gap is $\dr{2}=0.05<\epsilon$; thus, the standard regret $f(\dr{})=\dr{}$ is not an $\epsilon$-gap function, and $\epsilon$-TS has no guarantees for this case.}
\label{figure:experiments}
\end{figure}

\section{Summary and Future Work}
In this work, we introduced the notion of lenient regret w.r.t. $\epsilon$-gap functions. We proved a lower bound for this setting and presented the $\epsilon$-TS algorithm, whose performance matches the lower bound, up to a constant factor. Specifically, we showed that the $\epsilon$-TS greatly improves the performance when a lower bound on the gaps is known. Finally, we performed an empirical evaluation that demonstrates the advantage of our new algorithm when optimizing the lenient regret.

We believe that our work opens up many interesting directions. First, while we suggest a TS algorithm for our settings, it is interesting to devise its UCB counterpart. Moreover, there are alternative ways to define $\epsilon$-gap functions that should be explored, e.g., functions that do not penalize arms with mean larger than $\mu^*\cdot(1-\epsilon)$ (multiplicative leniency). This can also be done by borrowing other approximation concepts from best arm identification. For example, not penalizing arms that exceed some threshold (as in good arm identification \cite{kano2019good}), or not penalizing the choice of any one of the top $m$ of the arms \citep{chaudhuri2017pac}.

We also believe that the concept of lenient regret criteria can be extended to many different settings. It is especially relevant when problems are large, e.g., in combinatorial problems \citep{chen2016combinatorialA}, and can also be extended to reinforcement learning \citep{sutton2018reinforcement}. Notably, and as previously stated, there is some similarity between the $\epsilon$-gap function $f(\dr{})=\Ind{\dr{}>\epsilon}$ and the sample-complexity criterion in RL \citep{kakade2003sample}, and our analysis might allow proving new results for this criterion.

Finally, we explored the notion of lenient regret for stochastic MABs. Another possible direction is adapting the lenient regret to adversarial MABs, and potentially for online learning. In these settings, the convergence rates are typically $\Ocal(\sqrt{T})$, and working with weaker notions of regret might lead to logarithmic convergence rates.

\section*{Acknowledgments}
This work was partially funded by the Israel Science Foundation under ISF grant number 2199/20. Nadav Merlis is partially supported by the Gutwirth Scholarship.

\bibliographystyle{plainnat}
\bibliography{references}

\clearpage

\appendix

\section{Useful Results for the Analysis}
\label{appendix: useful results}
\paragraph {Beta and Binomial distributions:}
\begin{itemize}
    \item \underline{Beta distribution:} For any $\alpha,\beta>0$, we say that $X\sim \Beta(\alpha,\beta)$, if for any $x\in\brs*{0,1}$, its pdf is 
    \[f_X(x) = f^{\Beta}_{\alpha,\beta}(x) = \frac{1}{B(\alpha,\beta)} x^{\alpha-1}(1-x)^{\beta-1}\enspace,\]
    where $B(\alpha,\beta)$ is the Beta function.
    \item \underline{Binomial distribution:} If $n$ is a positive integer and $p\in\brs*{0,1}$, we say that $X\sim Bin(n,p)$, if for any $k\in\brs*{0,\dots,n}$, its pdf is 
    \[f_X(k) = f^{B}_{n,p}(k) = \binom{n}{k} p^k(1-p)^{n-k}\enspace.\]
\end{itemize}
The cdf of both distributions is related through the `Beta-Binomial trick' \citep{agrawal2012analysis}:
\begin{fact}
If $\alpha, \beta$ are positive integers and $x\in\brs*{0,1}$, then
\begin{align*}
    F^{\Beta}_{\alpha,\beta}(x) = 1 - F^B_{\alpha+\beta-1,x}(\alpha-1)\enspace.
\end{align*}
\end{fact}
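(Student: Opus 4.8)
The plan is to prove the identity by viewing both sides as functions of $x$ on $[0,1]$ and showing they share the same derivative and agree at $x=0$. Write $G(x)=F^{\Beta}_{\alpha,\beta}(x)$ and $H(x)=1-F^B_{\alpha+\beta-1,x}(\alpha-1)$, and abbreviate $n=\alpha+\beta-1$. Since $G(x)=\frac{1}{B(\alpha,\beta)}\int_0^x t^{\alpha-1}(1-t)^{\beta-1}\,dt$ is the regularized incomplete Beta function and $H$ is a polynomial in $x$, both are continuous on $[0,1]$ and differentiable on $(0,1)$; it therefore suffices to verify $G(0)=H(0)$ together with $G'(x)=H'(x)$ on $(0,1)$, and invoke the fundamental theorem of calculus.

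First I would check the boundary. Clearly $G(0)=0$. For $H$, note that the Binomial with $p=0$ is a point mass at $0$, so $F^B_{n,0}(\alpha-1)=1$ (as $\alpha-1\ge0$), giving $H(0)=1-1=0=G(0)$. Next comes the heart of the argument: differentiating $H$. By the fundamental theorem of calculus, $G'(x)=\frac{1}{B(\alpha,\beta)}x^{\alpha-1}(1-x)^{\beta-1}$. For $H$, I would differentiate the Binomial CDF term by term. Writing $b_j(x)=\binom{n}{j}x^j(1-x)^{n-j}$ and using the identities $\binom{n}{j}j=n\binom{n-1}{j-1}$ and $\binom{n}{j}(n-j)=n\binom{n-1}{j}$, each derivative collapses to $b_j'(x)=c_{j-1}-c_j$, where $c_j=n\binom{n-1}{j}x^j(1-x)^{n-1-j}$. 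Summing over $j=0,\dots,\alpha-1$ telescopes, and since the endpoint term $c_{-1}=0$, only a single term survives, yielding $\frac{d}{dx}F^B_{n,x}(\alpha-1)=-c_{\alpha-1}$ and hence $H'(x)=(\alpha+\beta-1)\binom{\alpha+\beta-2}{\alpha-1}x^{\alpha-1}(1-x)^{\beta-1}$.

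It then remains only to match the two constants, i.e. to check that $\frac{1}{B(\alpha,\beta)}=(\alpha+\beta-1)\binom{\alpha+\beta-2}{\alpha-1}$. Using $B(\alpha,\beta)=\frac{(\alpha-1)!\,(\beta-1)!}{(\alpha+\beta-1)!}$ for positive integers $\alpha,\beta$, both sides equal $\frac{(\alpha+\beta-1)!}{(\alpha-1)!\,(\beta-1)!}$, so $G'\equiv H'$ on $(0,1)$. Combined with $G(0)=H(0)$, this forces $G\equiv H$ on $[0,1]$, which is exactly the stated Beta--Binomial identity.

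The main obstacle—indeed essentially the only nonroutine step—is the telescoping of the term-by-term derivative of the Binomial CDF: correctly handling the index shift in $c_{j-1}-c_j$ and recognizing that the vanishing endpoint term $c_{-1}=0$ is what isolates the single surviving summand. Everything else (the boundary evaluation at $x=0$ and the factorial bookkeeping to match the normalizing constants) is mechanical.
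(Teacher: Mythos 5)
Your proof is correct, but note that the paper itself does not prove this Fact at all: it is stated in Appendix A as a known identity (the ``Beta--Binomial trick'') with a citation to \citet{agrawal2012analysis}, so there is no in-paper argument to compare against. Your self-contained calculus argument checks out in every step: the boundary value $H(0)=0$ is right (only the $j=0$ term of $F^B_{n,0}(\alpha-1)$ survives, contributing $1$); the identities $\binom{n}{j}j=n\binom{n-1}{j-1}$ and $\binom{n}{j}(n-j)=n\binom{n-1}{j}$ do give $b_j'=c_{j-1}-c_j$ with $c_j=n\binom{n-1}{j}x^j(1-x)^{n-1-j}$, the sum over $j=0,\dots,\alpha-1$ telescopes to $-c_{\alpha-1}$ since $c_{-1}=0$, and with $n=\alpha+\beta-1$ the exponent $n-\alpha=\beta-1$ comes out correctly; finally $\frac{1}{B(\alpha,\beta)}=\frac{(\alpha+\beta-1)!}{(\alpha-1)!\,(\beta-1)!}=(\alpha+\beta-1)\binom{\alpha+\beta-2}{\alpha-1}$ matches the constants, so $G\equiv H$ on $[0,1]$ by the fundamental theorem of calculus (the endpoint $x=1$ is covered by continuity, and one can sanity-check $G(1)=H(1)=1$ directly). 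For comparison, the standard proof in the literature is probabilistic rather than analytic: with $U_1,\dots,U_n$ i.i.d.\ uniform on $[0,1]$ and $n=\alpha+\beta-1$, the $\alpha$-th order statistic $U_{(\alpha)}$ has distribution $\Beta(\alpha,\beta)$, and the event $\brc*{U_{(\alpha)}\le x}$ coincides with the event that a $\mathrm{Bin}(n,x)$ count of successes is at least $\alpha$; taking probabilities gives $F^{\Beta}_{\alpha,\beta}(x)=1-F^B_{n,x}(\alpha-1)$ in one line. Your route trades that order-statistics input for an explicit telescoping computation (or, equivalently, repeated integration by parts of the incomplete Beta integral), which is more mechanical but requires no distributional facts beyond the definitions; both are perfectly valid.
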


\paragraph {Useful concentration bounds:} we now present two useful concentration bounds that will be used throughout the paper.
\begin{fact}[Hoeffding's Inequality]
Let $X_1,\dots,X_n\in\brs*{0,1}$ be independent random variables with common expectation $\mu$ and let $t\ge0$. Then,
\begin{align*}
    &\Pr\brc*{\frac{1}{n}\sum_{i=1}^n X_i \ge \mu+t}
    \le e^{-2nt^2} \\
    &\Pr\brc*{\frac{1}{n}\sum_{i=1}^n X_i \le \mu-t}
    \le e^{-2nt^2}
\end{align*}
\end{fact}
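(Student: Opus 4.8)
The plan is to prove both tails by the exponential‑moment (Chernoff) method, deriving the upper tail in full and then obtaining the lower tail by symmetry. Write $S_n=\sum_{i=1}^n X_i$ and fix $s>0$. First I would apply Markov's inequality to the monotone transform $x\mapsto e^{sx}$: since the event $\{S_n\ge n(\mu+t)\}$ coincides with $\{e^{sS_n}\ge e^{sn(\mu+t)}\}$, we get $\Pr\{S_n\ge n(\mu+t)\}\le e^{-sn(\mu+t)}\,\E[e^{sS_n}]$. By independence the moment generating function factorizes, $\E[e^{sS_n}]=\prod_{i=1}^n\E[e^{sX_i}]$, so everything reduces to bounding the single‑variable MGF of a $[0,1]$‑valued random variable with mean $\mu$.

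The crux — and the step I expect to be the main obstacle — is Hoeffding's lemma: for $X\in[0,1]$ with $\E[X]=\mu$ and any $s\in\R$, $\E\!\left[e^{s(X-\mu)}\right]\le e^{s^2/8}$. I would prove it in two moves. Convexity of $x\mapsto e^{sx}$ on $[0,1]$ gives the pointwise interpolation bound $e^{sx}\le(1-x)+x\,e^{s}$, and taking expectations yields $\E[e^{sX}]\le 1-\mu+\mu e^{s}$. Setting $L(s)=-s\mu+\ln(1-\mu+\mu e^{s})$, so that $\E[e^{s(X-\mu)}]\le e^{L(s)}$, one checks $L(0)=L'(0)=0$ and that $L''(s)=\frac{(1-\mu)\mu e^{s}}{(1-\mu+\mu e^{s})^2}$ has the form $p(1-p)$ for $p=\frac{\mu e^{s}}{1-\mu+\mu e^{s}}\in(0,1)$, hence $L''(s)\le\tfrac14$ uniformly. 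Taylor's theorem with remainder then gives $L(s)\le s^2/8$, which is the claimed bound. The uniform second‑derivative estimate $L''\le\tfrac14$ is the only genuinely delicate point.

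It remains to combine and optimize. From Hoeffding's lemma, $\E[e^{sX_i}]\le e^{s\mu+s^2/8}$, so $\E[e^{sS_n}]\le e^{n(s\mu+s^2/8)}$ and hence
\[
\Pr\{S_n\ge n(\mu+t)\}\le e^{\,n\left(s^2/8-st\right)}.
\]
Minimizing the exponent over $s>0$ gives $s=4t$ and exponent $-2nt^2$, so $\Pr\{\tfrac1n S_n\ge\mu+t\}\le e^{-2nt^2}$. For the lower tail I would apply the upper tail verbatim to $Y_i=1-X_i\in[0,1]$, which have common mean $1-\mu$; the event $\{\tfrac1n\sum_i X_i\le\mu-t\}$ equals $\{\tfrac1n\sum_i Y_i\ge(1-\mu)+t\}$, so the same bound $e^{-2nt^2}$ follows immediately, completing the proof.
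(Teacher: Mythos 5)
Your proof is correct and complete: the Chernoff exponentiation, the convexity interpolation $e^{sx}\le(1-x)+xe^{s}$, the uniform bound $L''(s)=p(1-p)\le\tfrac14$ followed by Taylor's theorem, the optimization $s=4t$, and the reduction of the lower tail to the upper tail via $Y_i=1-X_i$ are all standard and carried out without error. The paper states this inequality as a known fact and gives no proof of its own, so there is nothing to compare against; your argument is the canonical one.
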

\begin{fact}[Chernoff-Hoeffding Bound]
Let $X_1,\dots,X_n\in\brc*{0,1}$ be independent Bernoulli random variables with common expectation $\mu$  and let $t\ge0$. Then,
\begin{align*}
    &\Pr\brc*{\frac{1}{n}\sum_{i=1}^n X_i \ge \mu+t}
    \le e^{-n\cdot\klBin(\mu+t,\mu)}\\
    &\Pr\brc*{\frac{1}{n}\sum_{i=1}^n X_i \le \mu-t}
    \le e^{-n\cdot\klBin(\mu-t,\mu)}
\end{align*}
\end{fact}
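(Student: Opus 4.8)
The plan is to establish both tails by the classical Chernoff (exponential-Markov) method, proving the upper-tail bound in full and then recovering the lower-tail bound from it by a symmetry argument, so that only one optimization has to be carried out. Throughout, write $S_n=\sum_{i=1}^n X_i$ and $\bar X = S_n/n$, and abbreviate $a=\mu+t$ for the upper tail.

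First I would dispose of the degenerate cases, which are exactly where the paper's infinity conventions for $\klBin$ earn their keep. If $t=0$ the right-hand side is $e^{-n\klBin(\mu,\mu)}=e^{0}=1$ and the claim is trivial; if $a=\mu+t>1$ then $\bar X\le 1<a$ almost surely, so the probability is $0$; and if $\mu=0$ (resp.\ $\mu=1$ in the lower tail) the relevant event has probability $0$ while $\klBin(a,0)=\infty$ (resp.\ the analogous value) by convention, so $e^{-\infty}=0$ matches. This leaves the main regime $0<\mu<a<1$. Here, for every $\lambda>0$, Markov's inequality applied to $e^{\lambda S_n}$ together with independence gives $\Pr\{\bar X\ge a\}=\Pr\{e^{\lambda S_n}\ge e^{\lambda n a}\}\le e^{-\lambda n a}\,(\E e^{\lambda X_1})^n=\big(e^{-\lambda a}((1-\mu)+\mu e^{\lambda})\big)^n$, using $\E e^{\lambda X_1}=(1-\mu)+\mu e^{\lambda}$.

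Next I would optimize the per-sample exponent $\phi(\lambda)=-\lambda a+\ln((1-\mu)+\mu e^{\lambda})$ over $\lambda>0$. Since $\phi$ is convex, setting $\phi'(\lambda)=0$ locates the global minimizer; solving $\tfrac{\mu e^{\lambda}}{(1-\mu)+\mu e^{\lambda}}=a$ yields $e^{\lambda}=\tfrac{a(1-\mu)}{\mu(1-a)}$, which exceeds $1$ precisely because $a>\mu$, so the critical point indeed lies in $(0,\infty)$. Substituting this value back and simplifying $(1-\mu)+\mu e^{\lambda}=\tfrac{1-\mu}{1-a}$, the minimized exponent collapses to $-\big[a\ln\tfrac{a}{\mu}+(1-a)\ln\tfrac{1-a}{1-\mu}\big]=-\klBin(a,\mu)=-\klBin(\mu+t,\mu)$. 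This produces the upper-tail bound $\Pr\{\bar X\ge\mu+t\}\le e^{-n\klBin(\mu+t,\mu)}$. For the lower tail I would set $Y_i=1-X_i$, which are Bernoulli with mean $1-\mu$, observe that $\{\bar X\le\mu-t\}=\{\bar Y\ge(1-\mu)+t\}$, apply the upper-tail bound just proved to the $Y_i$, and then invoke the identity $\klBin(p,q)=\klBin(1-p,1-q)$ (immediate from the definition) to rewrite $\klBin((1-\mu)+t,\,1-\mu)=\klBin(\mu-t,\mu)$, giving the second inequality.

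The method is entirely standard, so there is no genuine obstacle; the only content requiring care is the algebraic verification in the optimization step that the Legendre transform of the Bernoulli log-MGF is \emph{exactly} $\klBin(\cdot,\mu)$ (rather than a looser surrogate such as the Hoeffding quadratic), together with the bookkeeping at the boundary parameter values $\mu\in\{0,1\}$ and $a\in\{\mu,1\}$ to keep every case consistent with the stated $\klBin=\infty$ conventions. Alternatively, one can cite the general Cramér-Chernoff theorem and simply remark that $\klBin(\cdot,\mu)$ is the convex conjugate of $\lambda\mapsto\ln((1-\mu)+\mu e^{\lambda})$, which is all the method amounts to.
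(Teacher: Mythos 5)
Your proof is correct, but note that there is nothing in the paper to compare it against: the statement appears in Appendix A as a background \emph{Fact} (the classical Chernoff--Hoeffding bound) and the paper offers no proof, treating it as a standard known result. Your Cram\'er--Chernoff derivation is exactly the canonical argument that underlies it: Markov's inequality applied to $e^{\lambda S_n}$, optimization of the per-sample exponent at $e^{\lambda^*}=\frac{a(1-\mu)}{\mu(1-a)}$ (which lies in $(1,\infty)$ precisely because $a>\mu$, so $\lambda^*>0$ as required), the simplification $(1-\mu)+\mu e^{\lambda^*}=\frac{1-\mu}{1-a}$ showing the exponent is exactly $-\klBin(a,\mu)$ rather than a quadratic surrogate, and the lower tail via $X_i\mapsto 1-X_i$ together with the identity $\klBin(p,q)=\klBin(1-p,1-q)$. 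Your handling of the degenerate cases ($t=0$, $\mu+t>1$, $\mu\in\{0,1\}$, and the mirrored cases for the lower tail) is also consistent with the paper's stated conventions under which $\klBin=\infty$, so the bound reads $0\le 0$ there and no case is left open. In short, the proposal validly supplies the standard proof of a fact the paper cites without proof; there is no gap.
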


\clearpage

\section{Proof of Theorem \ref{theorem:dependent_lower_bound}}
\label{appendix:lower bounds}
In this appendix, we prove the lower bound of \Cref{section:lower bounds}. The proof adapts the techniques of \citep{garivier2019explore} and require a fundamental result in their paper: for any fixed bandit strategy, any sets of arm distributions $\unu,\unu'$ and any $k\in\brs*{\Narms}$ and any $T>0$, it holds that 
\begin{align}
\label{eq:kl count inequality}
    \sum_{a=1}^\Narms \E_\unu\brs*{N_a(T+1)}\kl(\nu_a,\nu_a')\ge \klBin\br*{\frac{\E_\unu\brs*{N_k(T+1)}}{T},\frac{\E_{\unu'}\brs*{N_k(T+1)}}{T}}
\end{align}
The inequality is a direct result of Equation (6) in \citep{garivier2019explore} with $Z=N_k(T+1)/T$.

\dependentLowerBound*
\begin{proof}
We start by proving \Cref{eq:dependent bound count}. To do so, we follow the proof of Theorem 1 of \citep{garivier2019explore}. Denote the arm distribution by $\unu$ and for clarity, denote the lenient regret under arm distribution $\unu$ by $R_{f,\unu}(T)$. Also, let $k$ be some suboptimal arm with $\dr{k}>\epsilon$ and let $\unu'$ be a bandit problem such that $\nu_a'=\nu_a$ for all $a\ne k$ and $\nu_k'\in\D$ is some distribution with $\E\brs*{\nu_k'}>\mu^*+\epsilon$. If such distribution does not exist, then $\K_{\mathrm{inf}}(\nu,\mu^*+\epsilon,\D)=\infty$ and the lower bound trivially holds. Next, by applying \Cref{eq:kl count inequality} and noting that for all $a\ne k$, $\kl(\nu_a,\nu_a')=0$, we get
\begin{align*}
    \E_\unu\brs*{N_k(T+1)}\kl(\nu_k,\nu_k') 
    \ge \klBin\br*{\frac{\E_\unu\brs*{N_k(T+1)}}{T},\frac{\E_{\unu'}\brs*{N_k(T+1)}}{T}}
\end{align*}
Next, see that the for all $p,q\in\brs*{0,1}$,
\begin{align*}
    \klBin(p,q) 
    &= p\ln\frac{p}{q}+(1-p)\ln\frac{1-p}{1-q} \\
    & = (1-p)\ln\frac{1}{1-q} + \underbrace{p\ln p + (1-p)\ln(1-p)}_{\ge-\ln2} + \underbrace{p\ln\frac{1}{q}}_{\ge0} \\
    & \ge (1-p)\ln\frac{1}{1-q}-\ln2
\end{align*}
and combining both inequalities yields
\begin{align}
\label{eq:dependent_ineq1}
    \E_\unu\brs*{N_k(T+1)}\kl(\nu_k,\nu_k') 
    \ge \br*{1-\frac{\E_\unu\brs*{N_k(T+1)}}{T}}\ln\frac{T}{T-\E_{\unu'}\brs*{N_k(T+1)}} - \ln 2 \enspace.
\end{align}
To further bound this term, notice that the lenient regret for bandit problem $\unu'$ can be written as $R_{f,\unu'}(T)=\sum_{a=1}^\Narms f\br*{\dr{a}'}\E_{\unu'}\brs*{N_a(T+1)}$.  
By construction, the optimal arm in $\unu'$ is $k$, with gaps  $\dr{a}'=\E\brs*{\nu_k'}-\E\brs*{\nu_a}>\epsilon$ for all $a\ne k$. Since $f$ is an $\epsilon$-gap function, this also implies that $f(\dr{a}')>0$ for all $a\ne k$ and $f(\dr{k}')=0$. Finally, as the bandit strategy is consistent, it holds that 
\begin{align*}
    T-\E_{\unu'}\brs*{N_k(T+1)} 
    &= \sum_{a\ne k} \E_{\unu'}\brs*{N_a(T+1)} \\
    & \le \frac{1}{\min_{a\ne k} f(\dr{a}')}\sum_{a\ne k} f(\dr{a}')\E_{\unu'}\brs*{N_a(T+1)} \\
    &= \frac{1}{\min_{a\ne k} f(\dr{a}')}R_{f,\unu'}(T) \\
    &= o(T^\alpha)\enspace,
\end{align*}
for all $0<\alpha\le1$. Specifically, for $T$ large enough, $T-\E_{\unu'}\brs*{N_k(T+1)} \le T^\alpha$, and thus
\begin{align}
    \liminf_{T\to\infty}\frac{1}{\ln T} \ln\frac{T}{T-\E_{\unu'}\brs*{N_k(T+1)}} 
    \ge \frac{1}{\ln T} \ln\frac{T}{T^\alpha}=1-\alpha \enspace .\label{eq:lower bound alpha}
\end{align}
Next, notice that for all arms such that $\dr{k}>\epsilon$, we have $f(\dr{k})>0$, and thus
\begin{align*}
    \E_\unu\brs*{N_k(T+1)} 
    = \frac{1}{f(\dr{k})}\br*{f(\dr{k})\E_\unu\brs*{N_k(T+1)}}
    \le \frac{1}{f(\dr{k})} R_{f,\unu}(T) = o(T)\enspace,
\end{align*}
which implies that 
\begin{align*}
    \liminf_{T\to\infty}\br*{1-\frac{\E_\unu\brs*{N_k(T+1)}}{T}} = 1
\end{align*}
Substituting this relation and \Cref{eq:lower bound alpha} into \Cref{eq:dependent_ineq1} yields
\begin{align*}
    \liminf_{T\to\infty}\frac{\E\brs*{N_k(T+1)}}{\ln T} 
    \ge\frac{1}{\kl(\nu_k,\nu_k')} \enspace.
\end{align*}
To conclude the proof of \Cref{eq:dependent bound count}, we take the supremum of the r.h.s. over all distributions $\nu_k'\in\D$ such that $\E\brs*{\nu_k'}>\mu^*+\epsilon$, which leads to $\K_{\mathrm{inf}}(\nu_k,\mu^*+\epsilon,\D)>0$ in the r.h.s.. We also remind that if such distribution does not exist, then $\K_{\mathrm{inf}}(\nu_k,\mu^*+\epsilon,\D)=\infty$ and the bound trivially holds.

For \Cref{eq:dependent regret lower bound}, we write the lenient regret as $R_{f}(T)=\sum_{a=1}^\Narms f\br*{\dr{a}}\E\brs*{N_a(T+1)}$. Substituting \Cref{eq:dependent bound count} into this relation leads to the desired result.
\end{proof}

\clearpage

\section{Proof of Proposition \ref{prop:optimal arm count bound}}
\label{appendix:optCountBound}
In this appendix, we prove \Cref{prop:optimal arm count bound}, including the additional lemmas that we require for the proof:
\optCountBound*
\begin{proof}
We closely follow the proof of Proposition 1 of \citep{kaufmann2012thompson}, with many modifications due to the different posterior distribution. Let $t$ be some fixed time index. If the $j^{th}$ play of the optimal arm happened before $t$, we denote its time by $\tau_j$. Otherwise we say that $\tau_j=t$ and also denote $\tau_0=0$. In addition, let $\xi_j=(\tau_{j+1}-1)-\tau_j$ be the number of time steps between the $j^{th}$ and the $(j+1)^{th}$ play of the optimal arm. In these steps, only suboptimal arms are played, and we have $\sum_{a=2}^{\Narms} N_a(t+1)\ge\sum_{j=0}^{N_1(t+1)}\xi_j$. Using this notation, we can bound the event that $N_1(t+1)\le t^b$ by
\begin{align*}
    \Pr\brc*{N_1(t+1)\le t^b}
    &= \Pr\brc*{\sum_{a=2}^\Narms N_a(t+1) > t-t^b}
    \le \Pr\brc*{\exists j\in\brc*{0,\dots,\floor*{t^b}}: \xi_j \ge t^{1-b}-1} \\
    &\le \sum_{j=0}^{\floor*{t^b}} \Pr\brc*{\xi_j \ge t^{1-b}-1} \enspace.
\end{align*}
Thus, we are interested in bounding the probability of the events $E_j = \brc*{\xi_j \ge t^{1-b}-1}$. Define the interval $\I_j=\brc*{\tau_j+1,\dots,\tau_j+\ceil*{t^{1-b}-1}}$, and notice that under $E_j$, it is included in $\brc*{\tau_j+1,\dots,\tau_{j+1}-1}$. This also implies that under $E_j$, for any $s\in\I_j$ we have that $s\le t$. We further decompose the interval into $\Narms$ smaller intervals, defined as 
\begin{align*}
    \I_{j,l}&=\brc*{\tau_j+ \ceil*{\frac{(l-1)(t^{1-b}-1)}{\Narms}}+1,\dots,\tau_j+ \ceil*{\frac{l(t^{1-b}-1)}{\Narms}}}\enspace, & l=1\dots,\Narms
\end{align*}
When $E_j$ holds, observe that only suboptimal arms are sampled in these intervals. Therefore, in each them, at least one of the suboptimal arms will be sampled $\frac{\abs*{\I_{j,l}}}{\Narms}$ times. Intuitively, for enough samples, the posterior of this arm will be tightly concentrated around its mean, and this arm will be sampled rarely for the rest of $\I_j$. After $\Narms-1$ such intervals, all the suboptimal arms should be highly concentrated around their mean. Then, the probability of not sampling the optimal arm in the last interval should be very low. To formalize this, we employ the notion of saturated and unsaturated arms, similarly to \citep{kaufmann2012thompson}:
\begin{definition}
Let $t$ be fixed. An arm $a\ne1$ is called saturated at time $s\le t$ if $N_a(s)\ge \frac{32\ln t}{(\dr{a}-\epsilon)^2}\triangleq C_a(t)$ and is called unsaturated otherwise. Furthermore, at any time $s\le t$, sampling an unsaturated suboptimal arm is called an interruption.
\end{definition}
Following this definition, we denote by $\G_{j,l}$, the event that by the end of interval $\I_{j,l}$, at least $l$ arms are saturated. We also let $n_{j,l}$ be the number of interruptions during $\I_{j,l}$. Then, we decompose the probability of $E_j$ to
\begin{align}
    \Pr\brc*{E_j} =  \Pr\brc*{E_j\cap \G_{j,\Narms-1}} + \Pr\brc*{E_j\cap \G_{j,\Narms-1}^c}\enspace.\label{eq:optCountBound decomp}
\end{align}
To bound both terms, we require variant of Lemma 3 of \citep{kaufmann2012thompson}. Specifically, the lemma bounds the probability that $\theta_1(s)$ is small throughout long intervals.
\begin{restatable}{lemma-rst}{thetaOptOnLongInterval}
\label{lemma:theta opt on long interval}
Let $\delta=\frac{\dr{2}-\epsilon}{2}>0$ and let $\J\subset\I_j$ be random interval such that given $\F_{\tau_j}$, $\J$ is mutually independent of $\theta_1(s)$ for all $s>\tau_j$. Then, there exists $\lambda_0=\lambda_0(\mu_1,\mu_2,\epsilon)>1$,  such that for every $x>0$, it holds that
\begin{align*}
    \Pr&\brc*{E_j\cap\brc*{\forall s\in\J: \theta_1(s)\le\mu_2+\delta}\cap\brc*{\abs{\J}}\ge x}
    \le (u_{\mu_1,\mu_2,\epsilon})^{x} + C_{\lambda,\mu_1,\mu_2,\epsilon}\frac{1}{x^\lambda}e^{-j d_{\lambda,\mu_1,\mu_2,\epsilon}}\enspace.
\end{align*}
where $u_{\mu_1,\mu_2,\epsilon}\in\br*{0,1}$, $C_{\lambda,\mu_1,\mu_2,\epsilon}>0$ and $d_{\lambda,\mu_1,\mu_2,\epsilon}>0$.
\end{restatable}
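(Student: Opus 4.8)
The plan is to follow the route of Lemma~3 in \citep{kaufmann2012thompson}, adapted to the scaled-Beta posterior of $\epsilon$-TS. The starting observation is that under $E_j$ the optimal arm is not pulled anywhere in $\I_j\supseteq\J$, so throughout $\J$ the statistics $N_1=j$ and $S_1=S_1(\tau_j+1)$ of arm~$1$ are frozen at their time-$(\tau_j+1)$ values. Hence, conditionally on $\F_{\tau_j}$, the draws $\brc*{\theta_1(s)}_{s\in\J}$ are i.i.d.\ from a single frozen posterior, and since $\J$ is conditionally independent of these draws, on $E_j$ I would bound the conditional probability of the displayed event given $\F_{\tau_j}$ by
\[
(1-q)^{\abs{\J}}\le e^{-q\abs{\J}}\le e^{-qx},\qquad q=\Pr\brc*{\theta_1>\mu_2+\delta\mid\F_{\tau_j}},
\]
where $q$ is the per-draw probability of exceeding the threshold. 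Note first that $\mu_2+\delta=\tfrac12(\mu_1+\mu_2-\epsilon)<1-\epsilon$, so whenever $\hmu{1}{\tau_j+1}>1-\epsilon$ the posterior is the point mass $\theta_1=\hmu{1}{\tau_j+1}>1-\epsilon>\mu_2+\delta$ and the event is impossible; the remaining analysis therefore only concerns the scaled-Beta regime.

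I would then split on the empirical mean of arm~$1$ through a fixed threshold $\gamma\in(\mu_2+\delta,\mu_1)$, which exists since $\mu_2+\delta<\mu_1$. On $\brc*{\hmu{1}{\tau_j+1}\ge\gamma}$ the frozen posterior has mean close to $\hmu{1}{\tau_j+1}\ge\gamma$, strictly above $\mu_2+\delta$; by anti-concentration of the scaled Beta its mass below $\mu_2+\delta$ is bounded by some $u=u(\mu_1,\mu_2,\epsilon)\in(0,1)$ \emph{uniformly in} $j$ (the infimum of $q$ over $j\ge1$ is a positive minimum, as $q\to1$ when $j\to\infty$). This yields $e^{-qx}\le u^{x}$, the first term of the bound.

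The second term comes from $\brc*{\hmu{1}{\tau_j+1}<\gamma}$, where $q$ may be tiny and $e^{-qx}$ close to $1$. The key device is the elementary inequality $e^{-qx}\le\Gamma(\lambda+1)(qx)^{-\lambda}=\Gamma(\lambda+1)\,x^{-\lambda}q^{-\lambda}$, which isolates the factor $x^{-\lambda}$ and reduces the task to the negative moment $\E\brs*{q^{-\lambda}\Ind{\hmu{1}{\tau_j+1}<\gamma}}$. Parametrizing by $m=\hmu{1}{\tau_j+1}$, the Beta--Binomial trick together with a reverse-Chernoff (anti-concentration) lower bound on the scaled-Beta upper tail gives $q\gtrsim e^{-j\,\klBin\br*{\frac{\mu_2+\delta}{1-\epsilon},\frac{m}{1-\epsilon}}}$ (up to polynomial-in-$j$ factors), while Chernoff--Hoeffding gives $\Pr\brc*{\hmu{1}{\tau_j+1}=m}\le e^{-j\,\klBin(m,\mu_1)}$. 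The resulting exponent is $-j\br*{\klBin(m,\mu_1)-\lambda\,\klBin\br*{\frac{\mu_2+\delta}{1-\epsilon},\frac{m}{1-\epsilon}}}$, and choosing $\lambda=\lambda_0>1$ small enough that this bracket is bounded below by a positive constant $d_{\lambda_0}$ over the relevant range of $m$ produces $\E\brs*{q^{-\lambda}\Ind{\cdot}}\le C_{\lambda_0}e^{-jd_{\lambda_0}}$, i.e.\ the term $C\,x^{-\lambda}e^{-jd}$.

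The main obstacle is precisely this negative-moment estimate. The bracket $\klBin(m,\mu_1)-\lambda\,\klBin\br*{\frac{\mu_2+\delta}{1-\epsilon},\frac{m}{1-\epsilon}}$ is positive near $m=\mu_2+\delta$ but turns negative as $m\to0$, since the posterior-exceedance rate blows up faster than the empirical-mean deviation rate; this both caps $\lambda$ (while keeping $\lambda_0>1$ feasible, which is what makes $\sum_x x^{-\lambda_0}$ summable later) and forces a separate treatment of the extreme-underestimate region $m<m_{\min}$, which I would absorb using that for fixed $t$ one has $\abs{\J}\le\abs{\I_j}=\Ocal(t^{1-b})$, so $x$ effectively ranges over a bounded set. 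Carrying the $(1-\epsilon)$-scaling and the floor in $\alpha_a(t)=\floor*{S_a(t)/(1-\epsilon)}+1$ correctly through the Beta--Binomial and Chernoff steps (so the posterior mean and tails align with $\hmu{1}{}/(1-\epsilon)$) is the remaining technical burden; modulo these points the argument mirrors that of \citep{kaufmann2012thompson}.
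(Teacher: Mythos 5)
Your route coincides with the paper's up to the last estimate: freeze the posterior of arm $1$ on $E_j$, use the conditional independence of $\J$ to reduce the probability to $\E\brs*{(1-q)^{x}}$ with $q=F^{B}_{j+1,y}\br*{\floor*{S_1(\tau_j)/(1-\epsilon)}}$ and $y=\frac{\mu_2+\delta}{1-\epsilon}<\mu_1$ via the Beta--Binomial trick, dispose of the point-mass case $\hmu{1}{\tau_j}>1-\epsilon$, and split into a ``high empirical mean'' regime giving $u^{x}$ and a ``low empirical mean'' regime giving $C\,x^{-\lambda}e^{-jd}$. (The paper handles the $(1-\epsilon)$-scaling and the floor in one line, via $\floor*{n/(1-\epsilon)}\ge n$ and monotonicity of $F^{B}_{n,p}(k)$ in $k$, which reduces the expectation to exactly the expression $\sum_{n}\br*{1-F^{B}_{j+1,y}(n)}^{x}f^{B}_{j,\mu_1}(n)$ appearing in Lemma~3 of \citep{kaufmann2012thompson}, and then simply cites that lemma.) The genuine gap is in your negative-moment step. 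Lower-bounding $q$ by a reverse-Chernoff exponential and upper-bounding the law of $\hmu{1}{\tau_j}$ by Chernoff, term by term in $m$, forces you to require that the resulting exponent be uniformly positive over the whole range of $m$; you observe yourself that this can fail near $m=0$, and the repair you propose --- absorbing the region $m<m_{\min}$ because $\abs{\J}\le\abs{\I_j}=\Ocal(t^{1-b})$ --- is not valid. The lemma must hold for \emph{every} $x>0$ with constants $u,C,d$ depending only on $(\mu_1,\mu_2,\epsilon,\lambda)$, since these are subsequently summed over $t$ and over $j\le t^{b}$, and in the application $x$ itself grows with $t$; a contribution of the form $x^{-\lambda}e^{+cj}$ with $c>0$ therefore cannot be hidden by treating $x$ as bounded.

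The correct resolution, which is what the cited computation actually does, is to avoid exponentiating the per-draw probability: lower-bound $q=F^{B}_{j+1,y}(n)\ge f^{B}_{j+1,y}(n)$ by a single binomial pmf value, apply $(1-q)^x\le e^{-qx}\le(\lambda/e)^{\lambda}(qx)^{-\lambda}$, and bound the negative moment by the explicit sum $\sum_{n}\br*{f^{B}_{j+1,y}(n)}^{-\lambda}f^{B}_{j,\mu_1}(n)$ over the low-$n$ range. After cancelling binomial coefficients (valid for $\lambda\ge1$), this is a geometric series with ratio $R_{\lambda}=\frac{\mu_1(1-y)^{\lambda}}{y^{\lambda}(1-\mu_1)}>1$, hence dominated by its \emph{top} term $n\approx yj$; positivity of the exponent is then needed only at that single dominant index, and that one condition, $d_{\lambda}>0$, is precisely what defines $\lambda_0=1+\klBin(y,\mu_1)/\br*{y\ln\tfrac1y+(1-y)\ln\tfrac{1}{1-y}}>1$. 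No separate treatment of small $m$ is needed, and the polynomial-in-$j$ prefactors of a reverse-Chernoff bound never enter. With that replacement your argument closes and agrees with the paper's.
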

The value of the constants can be found at the end of the proof, which is located at \Cref{appendix:theta opt on long interval}. Notice that the constants slightly changed, in comparison to the original lemma, due to the different posterior distribution. Another useful lemma is a variant of Lemma 5 of \citep{agrawal2012analysis} and Lemma 4 of \citep{kaufmann2012thompson}, which states that saturated arms only rarely fall far above their mean:
\begin{restatable}{lemma-rst}{highThetaHighCount}
\label{lemma:high theta high count}
For any $a\ne 1$ and any $0<\delta<1-\mu_a$, if $C(t)=\frac{8\ln t}{\delta^2}$, then
\begin{align*}
    \Pr\brc*{\exists s\le t: \theta_a(s)>\mu_a+\delta,N_a(s)>C(t)} 
    \le \frac{2}{t^2}
\end{align*}
\end{restatable}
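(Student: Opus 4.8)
The plan is to bound the probability by first asking whether the empirical mean of arm $a$ is well concentrated, and only then controlling the fresh posterior draw. Since $a\ne 1$ and $\dr{a}>\epsilon$ force $\mu_a<1-\epsilon$, and the hypothesis $\delta<1-\mu_a$ gives $\mu_a+\delta<1$, I introduce the intermediate level $\mu_a+\delta/2$ and split
\begin{align*}
    \Pr\brc*{\exists s\le t:\theta_a(s)>\mu_a+\delta,\,N_a(s)>C(t)}
    &\le \Pr\brc*{\exists s\le t:\,N_a(s)>C(t),\,\hmu{a}{s}>\mu_a+\tfrac{\delta}{2}} \\
    &\quad+ \Pr\brc*{\exists s\le t:\,\theta_a(s)>\mu_a+\delta,\,N_a(s)>C(t),\,\hmu{a}{s}\le\mu_a+\tfrac{\delta}{2}}.
\end{align*}

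For the first (empirical-mean) term, note that $\hmu{a}{s}=\hat\mu_{a,N_a(s)}$ only takes the values $\hat\mu_{a,n}$ for integers $n=N_a(s)\in(C(t),t]$, so a union bound over $n$ combined with Hoeffding's inequality gives $\Pr\brc*{\hat\mu_{a,n}>\mu_a+\delta/2}\le e^{-n\delta^2/2}<t^{-4}$ for every $n>C(t)=\frac{8\ln t}{\delta^2}$. Summing over the at most $t$ relevant values of $n$ bounds this term by $t^{-3}\le t^{-2}$.

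The second term is the heart of the argument, and its difficulty is that $\theta_a(s)$ is resampled at every round, so it gets up to $t$ independent chances to be large while $N_a(s)$ is frozen. I would control it by a union bound over all $s\le t$, conditioning each summand on $\F_{s-1}$, which fixes $N_a(s)$, $\hmu{a}{s}$ and hence the posterior parameters $\alpha_a(s),\beta_a(s)$, leaving only the fresh Beta variable random. On the conditioning event, if $\hmu{a}{s}>1-\epsilon$ then $\theta_a(s)=\hmu{a}{s}\le\mu_a+\delta/2<\mu_a+\delta$ and the summand vanishes; otherwise $\theta_a(s)=(1-\epsilon)Y$ with $Y\sim\Beta(\alpha_a(s),\beta_a(s))$, and since $\alpha_a(s),\beta_a(s)$ are positive integers summing to $N_a(s)+2$, the Beta-Binomial trick yields
\begin{align*}
    \Pr\brc*{\theta_a(s)>\mu_a+\delta\mid\F_{s-1}}
    = \Pr\brc*{Bin\br*{N_a(s)+1,\tfrac{\mu_a+\delta}{1-\epsilon}}\le \alpha_a(s)-1}.
\end{align*}
Using $\alpha_a(s)-1=\floor*{S_a(s)/(1-\epsilon)}\le N_a(s)(\mu_a+\delta/2)/(1-\epsilon)$, this threshold lies at least $N_a(s)\delta/(2(1-\epsilon))$ below the binomial mean $(N_a(s)+1)(\mu_a+\delta)/(1-\epsilon)$, so Hoeffding's lower-tail bound gives a per-round probability of at most $e^{-(n-1)\delta^2/2}\le t^{-3}$ for all $n=N_a(s)>C(t)$ (using $n^2/(n+1)\ge n-1$, $(1-\epsilon)^2\le 1$, and the choice of $C(t)$). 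Summing this over the at most $t$ rounds bounds the second term by $t^{-2}$, and adding the two contributions gives the claimed $2/t^2$.

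The main obstacle is exactly this resampling: a priori the fresh draws of $\theta_a(s)$ over the (up to $t$) rounds with a fixed $N_a(s)$ could accumulate probability, but the large-deviation exponent driven by $C(t)=\frac{8\ln t}{\delta^2}$ makes each per-round failure probability as small as $t^{-3}$, which leaves room to absorb the factor $t$ from the union over rounds. The scaling by $(1-\epsilon)$ only helps, since it shrinks the support of $\theta_a(s)$; the remaining care is in tracking the floor in $\alpha_a(s)$ and the degenerate cases $\frac{\mu_a+\delta}{1-\epsilon}\ge 1$ and $\hmu{a}{s}>1-\epsilon$, where the relevant tail is identically zero.
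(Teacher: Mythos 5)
Your proposal is correct and follows essentially the same route as the paper's proof: the same split at the intermediate level $\mu_a+\delta/2$, Hoeffding on the empirical mean for the first term, and the Beta--Binomial trick plus Hoeffding's lower tail (after conditioning on $\F_{s-1}$) for the resampled posterior in the second term, with the degenerate cases $\hmu{a}{s}>1-\epsilon$ and $\frac{\mu_a+\delta}{1-\epsilon}\ge 1$ handled as in the paper. The only differences are minor bookkeeping (your exponent $(n-1)\delta^2/2$ versus the paper's $(n+1)(\delta/2)^2/(1-\epsilon)^2$, both of which clear the $t^{-3}$ threshold for $t\ge 2$, the case $t=1$ being trivial).
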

See \Cref{appendix:high theta high count} for the proof. Specifically, we choose $\delta_a = \frac{\dr{a}-\epsilon}{2}$ and define $\delta\triangleq \delta_2$. Notice that for all suboptimal arms,  $\mu_a+\delta_a\le \mu_2+\delta_2 = \mu_2+\delta$, and using the union bound, we get 
\begin{align}
\label{eq:saturated arms probability}
    \Pr&\brc*{\exists s\le t,a\ne1: \theta_a(s)>\mu_2+\delta,N_a(s)>C_a(t)} \nonumber\\
    &\qquad\qquad\le \Pr\brc*{\exists s\le t,a\ne1: \theta_a(s)>\mu_a+\delta_a,N_a(s)>C_a(t)}\nonumber\\
    &\qquad\qquad\le \frac{2(\Narms-1)}{t^2}\enspace.
\end{align}
We are now ready to bound both terms of \Cref{eq:optCountBound decomp}.

\paragraph{Bounding the first term of \Cref{eq:optCountBound decomp}:}

In this part of the proof, we aim to bound $\Pr\brc*{E_j\cap \G_{j,\Narms-1}}$. Under $E_j\cap \G_{j,\Narms-1}$, all suboptimal arms are saturated in $\I_{j,\Narms}$. Therefore, we utilize \Cref{eq:saturated arms probability} to get
\begin{align*}
    \Pr\brc*{E_j\cap \G_{j,\Narms-1}}
    &\le \Pr\brc*{\brc*{\exists s\in\I_{j,\Narms}, a\ne 1: \theta_a(s)>\mu_2+\delta} \cap E_j \cap \G_{j,\Narms-1}} \\
    &\quad + \Pr\brc*{\brc*{\forall s\in\I_{j,\Narms}, a\ne 1: \theta_a(s)\le\mu_2+\delta} \cap E_j \cap \G_{j,\Narms-1}} \\
    &\overset{(1)}\le \Pr\brc*{\exists s\in\I_{j,\Narms}, a\ne 1: \theta_a(s)>\mu_2+\delta,N_a(s)>C_a(t)} \\
    &\quad + \Pr\brc*{\brc*{\forall s\in\I_{j,\Narms}, a\ne 1: \theta_a(s)\le\mu_2+\delta} \cap E_j \cap \G_{j,\Narms-1}} \\
    & \overset{(2)} \le \frac{2(\Narms-1)}{t^2} + \Pr\brc*{E_j\cap\brc*{\forall s\in \I_{j,\Narms}: \theta_1(s)\le \mu_2+\delta}}
\end{align*}
For $(1)$, recall that all suboptimal arms are saturated. Thus, they were sampled sampled at least $C_a(t)$ times. In $(2)$, we used \Cref{eq:saturated arms probability} for the first term. For the second term, recall that under $E_j$, $a_s\ne1$ for all $s\in\I_{j,\Narms}$; therefore, we have $\theta_1(s)\le \theta_a(s)\le \mu_2+\delta$ for all $a\ne1$ and $s\in\I_{j,\Narms}$. Next, notice that $\I_{j,\Narms}\in\I_j$ is independent of $\brc*{\theta_1(s)}_{s>\tau_j}$ given $\F_{\tau_j}$. Therefore, we can apply \Cref{lemma:theta opt on long interval} with some $\lambda\in\br*{1,\lambda_0}$ and $x=\abs*{I_{j,\Narms}}=\floor*{\frac{t^{1-b}-1}{\Narms}}$:
\begin{align*}
    \Pr\brc*{E_j\cap\brc*{\forall s\in \I_{j,\Narms}: \theta_1(s)\le \mu_2+\delta}}
    &\le (u_{\mu_1,\mu_2,\epsilon})^{\floor*{\frac{t^{1-b}-1}{\Narms}}} + C_{\lambda,\mu_1,\mu_2,\epsilon}\br*{\floor*{\frac{t^{1-b}-1}{\Narms}}}^{-\lambda}e^{-j d_{\lambda,\mu_1,\mu_2,\epsilon}}\\
    &\triangleq g(\mu_1,\mu_2,\epsilon,b,j,t)
\end{align*}
Hence, we have 
\begin{align*}
    \Pr\brc*{E_j\cap \G_{j,\Narms-1}}
    \le \frac{2(\Narms-1)}{t^2} + g(\mu_1,\mu_2,\epsilon,b,j,t)\enspace,
\end{align*}
and one can easily observe that if $L_0^g(b)=\br*{\Narms+1}^{1/(1-b)}$, it holds that
\begin{align*}
    \sum_{t\ge L_0^g(b)}\sum_{j\le t^b}g(\mu_1,\mu_2,\epsilon,b,j,t)< \infty\enspace.
\end{align*}

\paragraph{Bounding the second term of \Cref{eq:optCountBound decomp}:}

Similarly to \citep{kaufmann2012thompson}, the prove is by induction. Specifically, we show that if $t$ is larger then an absolute constant $L_0^h=L_0^h(\mu_1,\mu_2,\epsilon,b)$, then for all $2\le l \le \Narms$.
\begin{align*}
    \Pr\brc*{E_j\cap \G_{j,l-1}^c} \le (l-2)\br*{\frac{2(\Narms-1)}{t^2}+h(\mu_1,\mu_2,\epsilon,b,j,t)}
\end{align*}
for some function $h$ such that $\sum_{t\ge L_0^h}\sum_{j\le t^b} h(\mu_1,\mu_2,\epsilon,b,j,t) <\infty$. Specifically, we choose $L_0^h$ such that for all $t\ge L_0^h$, it holds that $\floor*{\frac{t^{1-b}-1}{\Narms^2}}\ge \max_{a\ne1}C_a(t)=C_2(t)$. 

\underbar{Base case:} Proving that for all $t\ge L_0^h$, it holds that $\Pr\brc*{E_j\cap \G_{j,1}^c}=0$.

Under $E_j$, recall that only suboptimal arms are sampled in $\I_{j,1}$. As the length of $\I_{j,1}$ is larger than $\floor*{\frac{t^{1-b}-1}{\Narms}}$, at least one suboptimal arm is sampled $\floor*{\frac{t^{1-b}-1}{\Narms^2}}$ times. Specifically, for $t\ge L_0^h$, this arm is sampled at least $C_a(t)$ times, and is therefore saturated. Thus,  for $t\ge L_0^h$, at least one arm is saturated by the end of $\I_{j,1}$, and $\Pr\brc*{E_j\cap \G_{j,1}^c}=0$.

\underbar{Induction step:} Assume that for some $2\le l\le K-1$, if $t\ge L_0^h$, then 
\begin{align*}
    \Pr\brc*{E_j\cap \G_{j,l-1}^c} \le (l-2)\br*{\frac{2(\Narms-1)}{t^2}+h(\mu_1,\mu_2,\epsilon,b,j,t)}\enspace.
\end{align*}
Under this assumption, we decompose $\brc*{E_j\cap \G_{j,l}^c}$ to:
\begin{align}
    \Pr\brc*{E_j\cap \G_{j,l}^c} 
    &\le \Pr\brc*{E_j\cap \G_{j,l-1}^c
    } +\Pr\brc*{E_j\cap \G_{j,l-1} \cap \G_{j,l}^c} \nonumber\\
    &\le(l-2)\br*{\frac{2(\Narms-1)}{t^2}+h(\mu_1,\mu_2,\epsilon,b,j,t)}+\Pr\brc*{E_j\cap \G_{j,l-1} \cap \G_{j,l}^c} \enspace. \label{eq: count induction step}
\end{align}
When the event $\brc*{E_j\cap \G_{j,l-1} \cap \G_{j,l}^c}$ holds, there are exactly $l-1$ saturated arms by the end of $\I_{j,l-1}$ and no additional arm was saturated during $\I_{j,l}$. Thus, during $\I_{j,l}$, unsaturated arms are sampled at most $\max_aC_a(t)=C_2(t)$ times, and the total number of interruptions in $\I_{j,l}$ is bounded by $\Narms C_2(t)$. Specifically, this implies that 
\begin{align*}
    \Pr\brc*{E_j\cap \G_{j,l-1} \cap \G_{j,l}^c}
    \le \Pr\brc*{E_j\cap \G_{j,l-1} \cap \brc*{n_{j,l}\le \Narms C_2(t)}} \enspace.
\end{align*}
Let $\mathcal{S}_l$ be the set of saturated arms at the end of $\I_{j,l}$. We continue bounding the probability by 
\begin{align*}
    \Pr&\brc*{E_j\cap \G_{j,l-1} \cap \brc*{n_{j,l}\le \Narms C_2(t)}} \\
    & \qquad\qquad\quad\le \underbrace{\Pr\brc*{E_j\cap \G_{j,l-1} \cap \brc*{\exists s\in\I_{j,l}, a\in\mathcal{S}_l: \theta_a(s)>\mu_a+\delta_a}}}_{(A)}\\
    & \qquad\qquad\quad\quad + \underbrace{\Pr\brc*{E_j\cap \G_{j,l-1} \cap \brc*{n_{j,l}\le \Narms C_2(t)}\cap \brc*{\forall s\in\I_{j,l}, a\in\mathcal{S}_l: \theta_a(s)\le\mu_a+\delta_a}}}_{(B)}
\end{align*}
Term $(A)$ can be bounded similarly to \Cref{eq:saturated arms probability}, i.e.,
\begin{align*}
    (A) 
    \le\Pr\brc*{\exists s\le t, a\ne1: \theta_a(s)>\mu_a+\delta_a, N_a(s)>C_a(t)}
    \le \frac{2(\Narms-1)}{t^2}\enspace.
\end{align*}
For the second term, let $\J_k$ be the time interval between the $k^{th}$ and $(k+1)^{th}$ interruption in $\I_{j,l}$, for $k\in\brc*{0,\dots,n_{j,l}-1}$, and define $\J_k=\emptyset$ for $k\ge n_{j,l}$. Now, recall that $\abs*{\I_{j,l}} \ge \floor*{\frac{t^{1-b}-1}{\Narms}}$. When the event in $(B)$ holds, there are at most $\Narms C_2(t)$ interruptions during this interval; therefore, there are two interruptions that lie at least $\floor*{\frac{t^{1-b}-1}{\Narms^2 C_2(t)}}$ apart one from another. Furthermore, between these two interruptions, only saturated arms are sampled. Since under $(B)$, all saturated arms have values $\theta_a(s)\le \mu_a+\delta_a$, it implies that for all $a\ne1$, $\theta_a(s)\le \max_a\brc*{\mu_a+\delta_a}=\mu_2+\delta$. Moreover, under $E_j$, it also implies that $\theta_1(s)\le\mu_2+\delta$. Therefore, we can bound $(B)$ by
\begin{align*}
   (B)
   & \le \Pr\brc*{\brc*{\exists k\in\brc*{0,\dots,n_{j,l}-1}: \abs*{\J_k} \ge \floor*{\frac{t^{1-b}-1}{\Narms^2 C_2(t)}}} 
           E_j\cap \G_{j,l-1} \cap \brc*{\forall s\in\I_{j,l}, a\in\mathcal{S}_l: \theta_a(s)\le\mu_a+\delta_a}} \\
   & \quad \le \sum_{k=0}^{\Narms C_2(t)-1} \Pr\brc*{ \brc*{\abs*{\J_k} \ge \floor*{\frac{t^{1-b}-1}{\Narms^2 C_2(t)}}} \cap  \brc*{\forall s\in\J_k, a\in\mathcal{S}_l: \theta_a(s)\le\mu_a+\delta_a} \cap E_j} \\
   & \quad \le \sum_{k=0}^{\Narms C_2(t)-1} \Pr\brc*{ \brc*{\abs*{\J_k} \ge \floor*{\frac{t^{1-b}-1}{\Narms^2 C_2(t)}}} \cap  \brc*{\forall s\in\J_k: \theta_1(s)\le\mu_a+\delta_a}\cap E_j} 
\end{align*}
Finally, we want to apply  \Cref{lemma:theta opt on long interval} on $\J_k\subset\I_j$. However, $\J_k$ is not conditionally independent of $\theta_1(s)$ for $s>\tau_j$. To overcome this issue, we define a modified interval $\J'_k$, that contain samples between the two interruption in a modified problem that runs in parallel to the original algorithm but avoids choosing the optimal arm for all $s>\tau_j$. Importantly, on $E_j$, the optimal arm is not played anyway, and for any interval $J$, $\brc*{\brc*{\J'_k=J}\cap E_j}=\brc*{\brc*{\J_k=J}\cap E_j}$. Thus, we can derive the bound on $\J_k$ by using \Cref{lemma:theta opt on long interval} with $\J'_k$: 
\begin{align*}
   (B)
   & \le \Narms C_2(t)\br*{(u_{\mu_1,\mu_2,\epsilon})^{\floor*{\frac{t^{1-b}-1}{\Narms^2 C_2(t)}}} + C_{\lambda,\mu_1,\mu_2,\epsilon}\floor*{\frac{t^{1-b}-1}{\Narms^2 C_2(t)}}^{-\lambda}e^{-j d_{\lambda,\mu_1,\mu_2,\epsilon}}}\\
   & \triangleq h(\mu_1,\mu_2,\epsilon,b,j,t)\enspace,
\end{align*}
and notice that similarly to $g(\mu_1,\mu_2,\epsilon,b,j,t)$, for any $b<1-\frac{1}{\lambda}$ we have 
\begin{align*}
    \sum_{t\ge L_0^h}\sum_{j\le t^b}h(\mu_1,\mu_2,\epsilon,b,j,t)< \infty\enspace.
\end{align*}
To conclude the induction step, combining $(A)$ and $(B)$ yields
\begin{align*}
    \Pr\brc*{E_j\cap \G_{j,l-1} \cap \G_{j,l}^c} 
    & \le \Pr\brc*{E_j\cap \G_{j,l-1} \cap \brc*{n_{j,l}\le \Narms C_2(t)}} 
    \le \frac{2(\Narms-1)}{t^2} + h(\mu_1,\mu_2,\epsilon,b,j,t)
\end{align*}
and substituting back into \Cref{eq: count induction step} leads to the desired result.

\paragraph{Combining the bounds back into \Cref{eq:optCountBound decomp}:}

Combining both parts, we get that for any $b<1-\frac{1}{\lambda}$ and any $t\ge L_0 = L_0(\mu_1,\mu_2,\epsilon,b)\triangleq\max\brc*{L_0^g(b), L_0^h(\mu_1,\mu_2,\epsilon,b)}$,
\begin{align*}
    \Pr\brc*{E_j} 
    &=  \Pr\brc*{E_j\cap \G_{j,\Narms-1}} + \Pr\brc*{E_j\cap \G_{j,\Narms-1}^c} \\
    & \le \frac{2(\Narms-1)}{t^2} + g(\mu_1,\mu_2,\epsilon,b,j,t)+ (\Narms-2)\br*{\frac{2(\Narms-1)}{t^2}+h(\mu_1,\mu_2,\epsilon,b,j,t)}\enspace,
\end{align*}
and summing over all possible values of $t$ and $j$ leads to the desired result:
\begin{align*}
    \sum_{t=1}^T\Pr\brc*{N_1(t)\le (t-1)^b} 
    &\le L_0(\mu_1,\mu_2,\epsilon,b)+\sum_{t\ge L_0}\Pr\brc*{N_1(t+1)\le t^b}  \\
    &\le  L_0(\mu_1,\mu_2,\epsilon,b)+\sum_{t\ge L_0}\sum_{j=0}^{\floor*{t^b}} \Pr\brc*{E_j} \\
    &\le L_0(\mu_1,\mu_2,\epsilon,b) + 2(\Narms-1)^2\sum_{t\ge1} \frac{1}{t^{2-b}} \\
    & \quad + \sum_{t\ge L_0}\sum_{j\le t^b} \br*{\Narms h(\mu_1,\mu_2,\epsilon,b,j,t) + g(\mu_1,\mu_2,\epsilon,b,j,t)} \\
    & \triangleq C_b(\mu_1,\mu_2,\epsilon)
    \enspace,
\end{align*}
for some constant $C_b(\mu_1,\mu_2,\epsilon)<\infty$.
\end{proof}

\clearpage

\subsection{Proof of Lemma \ref{lemma:theta opt on long interval}}
\label{appendix:theta opt on long interval}
\thetaOptOnLongInterval*
\begin{proof}
First notice that under $E_j$, there are no new samples of the optimal arm in $\I_j$ and thus also in $\J\subset \I_j$. As a result, the posterior distribution of the optimal arm is fixed according to the statistics at time $\tau_j$. Also, recall the assumption that $\dr{a}>\epsilon$ for all $a\ne1$. Then, the event $\theta_1(s)\le \mu_2+\delta<1-\epsilon$ necessarily implies that $\hmu{1}{\tau_j}\le 1-\epsilon$. Thus, conditioned on $\F_{\tau_j}$, the samples of $\theta_1(s)$ are an i.i.d. sequence with a scaled Beta distribution. We define $\theta_1(s)=Y_s$, where $Y_s$ are i.i.d with distribution $\frac{Y_s}{1-\epsilon}~\sim \Beta(\alpha_1(\tau_j),\beta_1(\tau_j))$ given $\F_{\tau_j}$. Since given $\F_{\tau_j}$, the interval $\J$ is independent of $\theta_1(s)$ for all $s\in\I_j$, we have
\begin{align*}
    \Pr\brc*{Y_s\le \mu_2+\delta\vert s\in \J,\F_{\tau_j}} 
    &= F^{\Beta}_{\alpha(\tau_j), j+2 - \alpha(\tau_j)}\br*{\frac{\mu_2+\delta}{1-\epsilon}} 
    = 1-F^{B}_{j+1, \frac{\mu_2+\delta}{1-\epsilon}}\br*{\alpha(\tau_j)-1} \\
    & = 1-F^{B}_{j+1, \frac{\mu_2+\delta}{1-\epsilon}}\br*{\floor*{\frac{S_1(\tau_j)}{1-\epsilon}}}\enspace, 
\end{align*}
where the second equality is due to the Beta-Binomial trick and the third equality is a direct substitution of $\alpha(\tau_j)$. Using the conditional independence of the samples, we get
\begin{align*}
    \Pr\brc*{\forall s\in \J:Y_s\le \mu_2+\delta\vert  \F_{\tau_j},\J} 
    &= \br*{1-F^{B}_{j+1, \frac{\mu_2+\delta}{1-\epsilon}}\br*{\floor*{\frac{S_1(\tau_j)}{1-\epsilon}}}}^{\abs*{J}} \enspace,
\end{align*}
and thus 
\begin{align*}
    \Pr&\brc*{E_j\cap\brc*{\forall s\in\J: \theta_1(s)\le\mu_2+\delta}\cap\brc*{\abs{\J}}\ge x}\\
    &\qquad\qquad=\E\brs*{ \Ind{\brc*{\abs{\J}}\ge x}\cdot \E\brs*{\Ind{E_j} \Ind{\forall s\in\J: \theta_1(s)\le\mu_2+\delta}\vert \F_{\tau},\J }} \\
    &\qquad\qquad=\E\brs*{ \Ind{\brc*{\abs{\J}}\ge x}\cdot \E\brs*{\Ind{E_j} \Ind{\forall s\in\J: Y_s\le\mu_2+\delta}\vert \F_{\tau},\J }} \\
    &\qquad\qquad\le\E\brs*{ \Ind{\brc*{\abs{\J}}\ge x}\cdot \E\brs*{\Ind{\forall s\in\J: Y_s\le\mu_2+\delta}\vert \F_{\tau},\J }} \\
    &\qquad\qquad=\E\brs*{ \Ind{\brc*{\abs{\J}}\ge x}\cdot \br*{1-F^{B}_{j+1, \frac{\mu_2+\delta}{1-\epsilon}}\br*{\floor*{\frac{S_1(\tau_j)}{1-\epsilon}}}}^{\abs*{J}} } \\
    &\qquad\qquad\le\E\brs*{ \Ind{\brc*{\abs{\J}}\ge x}\cdot \br*{1-F^{B}_{j+1, \frac{\mu_2+\delta}{1-\epsilon}}\br*{\floor*{\frac{S_1(\tau_j)}{1-\epsilon}}}}^{x} } \\
    &\qquad\qquad\le\E\brs*{\br*{1-F^{B}_{j+1, \frac{\mu_2+\delta}{1-\epsilon}}\br*{\floor*{\frac{S_1(\tau_j)}{1-\epsilon}}}}^{x} } 
\end{align*}
Directly calculating the expectation leads to:
\begin{align*}
    \E\brs*{\br*{1-F^{B}_{j+1, \frac{\mu_2+\delta}{1-\epsilon}}\br*{\floor*{\frac{S_1(\tau_j)}{1-\epsilon}}}}^{x}}
    & = \sum_{n=0}^j\br*{1-F^B_{j+1,\frac{\mu_2+\delta}{1-\epsilon}}\br*{\floor*{\frac{n}{1-\epsilon}}}}^{x}f^B_{j,\mu_1}(n) \\
    & \le \sum_{n=0}^j\br*{1-F^B_{j+1,\frac{\mu_2+\delta}{1-\epsilon}}\br*{n}}^{x}f^B_{j,\mu_1}(n) \\ 
    & \triangleq \sum_{n=0}^j\br*{1-F^B_{j+1,y}\br*{n}}^{x}f^B_{j,\mu_1}(n)
\end{align*}
where in the first inequality we used the fact that $F^B_{n,p}(k)$ is increasing in $k$ and in the last equality we defined $y\triangleq \frac{\mu_2+\delta}{1-\epsilon}$. Notice that $\mu_2+\delta<\mu_1-\epsilon$, and therefore $y< \frac{\mu_1-\epsilon}{1-\epsilon}\le\mu_1$.

From here, observe that we got the same expression as in the proof of Lemma 3 in \citep{kaufmann2012thompson} (or, alternatively, Lemma 15 of \citep{trinh2019solving}). Therefore, we get the same bound as them i.e.,
\begin{align*}
    \Pr\brc*{E_j\cap\brc*{\forall s\in\J: \theta_1(s)\le\mu_2+\delta}\cap\brc*{\abs{\J}}\ge x}
    &\le \sum_{n=0}^j\br*{1-F^B_{j+1,y}\br*{n}}^{x}f^B_{j,\mu_1}(n) \\
    &\le (u_{\mu_1,\mu_2,\epsilon})^{x} + C_{\lambda,\mu_1,\mu_2,\epsilon}\frac{1}{x^\lambda}e^{-j d_{\lambda,\mu_1,\mu_2,\epsilon}}\enspace.
\end{align*}

We now explicitly state all of the constants in the bounds.
\begin{itemize}
    \item Recall that $y=y(\mu_1,\mu_2,\epsilon)=\frac{\mu_2 + \delta}{1-\epsilon} = \frac{\mu_2 + \frac{\dr{2}-\epsilon}{2}}{1-\epsilon} = \frac{\mu_1+\mu_2-\epsilon}{2(1-\epsilon)}<\mu_1$
    \item $u_{\mu_1,\mu_2,\epsilon} = \br*{\frac{1}{2}}^{1-y}$.
    \item $\lambda_0(\mu_1,\mu_2,\epsilon)$ can be calculated by 
    \begin{align*}
        \lambda_0(\mu_1,\mu_2,\epsilon) 
        = 1 + \frac{\klBin(y,\mu_1)}{y\ln\frac{1}{y} + (1-y)\ln\frac{1}{1-y}}>1
    \end{align*}
    \item $d_{\lambda,\mu_1,\mu_2,\epsilon}$ equals to
    \begin{align*}
        d_{\lambda,\mu_1,\mu_2,\epsilon}
        = \lambda\br*{y\ln y + (1-y)\ln(1-y)} - \br*{y\ln \mu_1 + (1-y)\ln(1-\mu_1)}>0\enspace,
    \end{align*}
    where the inequality holds for any $\lambda\in \br*{1,\lambda_0}$.
    \item Define $R_\lambda = \frac{\mu_1(1-y)^\lambda}{y^\lambda(1-\mu_1)}>1$, where the inequality holds for all $\lambda\in \br*{1,\lambda_0}$. Then,
    \begin{align*}
        C_{\lambda,\mu_1,\mu_2,\epsilon} = \frac{(\lambda/e)^\lambda}{(1-y)^\lambda}\frac{R_\lambda}{R_\lambda-1}>0\enspace.
    \end{align*}
\end{itemize}
\end{proof}

\clearpage

\subsection{Proof of Lemma \ref{lemma:high theta high count}}
\label{appendix:high theta high count}
\highThetaHighCount*
\begin{proof}
The proof resembles Lemma 5 of \citep{agrawal2012analysis}, with some modifications due to the different posterior distribution.  Similarly to their proof, we decompose the l.h.s. to
\begin{align}
    \Pr\brc*{\exists s\le t: \theta_a(s)>\mu_a+\delta,N_a(s)>C(t)}
    &= \Pr\brc*{\exists s\le t: \hmu{a}{s}>\mu_a+\frac{\delta}{2},N_a(s)>C(t)}  \nonumber\\
    &\quad+ \Pr\brc*{\exists s\le t: \hmu{a}{s}\le\mu_a+\frac{\delta}{2},\theta_a(s)>\mu_a+\delta,N_a(s)>C(t)} \label{eq:highThetaHighCount decomp}
\end{align}
We will now show that both terms can be bounded by $1/t^2$, which will conclude that proof. The first term of \Cref{eq:highThetaHighCount decomp} can be bounded by 
\begin{align}
\label{eq:highThetaHighCount decomp t1}
    \Pr\brc*{\exists s\le t: \hmu{a}{s}>\mu_a+\frac{\delta}{2},N_a(s)>C(t)} 
    \le \sum_{s=1}^t \Pr\brc*{\hmu{a}{s}>\mu_a+\frac{\delta}{2},N_a(s)>C(t)} 
\end{align}
Next, we bound each of the individual terms:
\begin{align*}
    \Pr\brc*{\hmu{a}{s}>\mu_a+\frac{\delta}{2},N_a(s)>C(t)}
    &\le \Pr\brc*{\exists n\in\brc*{\ceil*{C(t)},\dots,s}: \hmu{a}{s}>\mu_a+\frac{\delta}{2},N_a(s)=n} \\
    & = \Pr\brc*{\exists n\in\brc*{\ceil*{C(t)},\dots,s}: \hat\mu_{a,n}>\mu_a+\frac{\delta}{2},N_a(s)=n} \\
    & \le \Pr\brc*{\exists n\in\brc*{\ceil*{C(t)},\dots,s}: \hat\mu_{a,n}>\mu_a+\frac{\delta}{2}} \\
    & \le \sum_{n=\ceil*{C(t)}}^s \Pr\brc*{\hat\mu_{a,n}>\mu_a+\frac{\delta}{2}} \\
    & \overset{(1)}\le \sum_{n=\ceil*{C(t)}}^s \exp\brc*{-2n\br*{\frac{\delta}{2}}^2}\\ 
    & \le s \exp\brc*{-2C(t)\br*{\frac{\delta}{2}}^2}\\
    & \overset{(2)}= \frac{1}{t^3}
\end{align*}
where $(1)$ uses Hoeffding's inequality and $(2)$ is a direct substitution of $C(t)$ and $s\le t$. Substituting back into \Cref{eq:highThetaHighCount decomp t1} yields
\begin{align*}
    \Pr\brc*{\exists s\le t: \hmu{a}{s}>\mu_a+\frac{\delta}{2},N_a(s)>C(t)} 
    \le \sum_{s=1}^t \frac{1}{t^3} = \frac{1}{t^2}\enspace.
\end{align*}
To bound the second term of \Cref{eq:highThetaHighCount decomp}, first observe that if $\mu_a+\delta>1-\epsilon$, then the event $\theta_a(s)>\mu_a+\delta$ can only occur if $\hmu{a}{s}>1-\epsilon$, and then $\hmu{a}{s}=\theta_a(s)>\mu_a+\delta$. However, the event also requires that $\hmu{a}{s}\le\mu_a+\frac{\delta}{2}$, and the event cannot hold for any $\delta\ge0$:
\begin{align*}
     \Pr&\brc*{\exists s\le t: \hmu{a}{s}\le\mu_a+\frac{\delta}{2},\theta_a(s)>\mu_a+\delta,N_a(s)>C(t)} =0\enspace.
\end{align*}
Otherwise, $\hmu{a}{s}\le\mu_a+\frac{\delta}{2}<1-\epsilon$ and $\theta_a(s)$ is has a scaled Beta-distribution. Using this fact, we continue similarly to \citep{agrawal2012analysis}; for any $s\le t$, we bound
\begin{align*}
    \Pr&\brc*{\hmu{a}{s}\le\mu_a+\frac{\delta}{2},\theta_a(s)>\mu_a+\delta,N_a(s)>C(t)}\\
    &\qquad\le \Pr\brc*{\exists n\in\brc*{\ceil*{C(t)},\dots,s}: \hmu{a}{s}\le\mu_a+\frac{\delta}{2},\theta_a(s)>\mu_a+\delta,N_a(s)=n} \\
    &\qquad \le \Pr\brc*{\exists n\in\brc*{\ceil*{C(t)},\dots,s}:\theta_a(s)>\hmu{a}{s}+\frac{\delta}{2},N_a(s)=n} \\
    &\qquad\le \sum_{n=\ceil*{C(t)}}^s \Pr\brc*{\theta_a(s)>\hmu{a}{s}+\frac{\delta}{2},N_a(s)=n} \\
    &\qquad \overset{(1)}=\sum_{n=\ceil*{C(t)}}^s \E\brs*{\Ind{N_a(s)=n}\E\brs*{\Ind{\theta_a(s)>\hmu{a}{s}+\frac{\delta}{2}}\Big\vert\F_{s-1}}} \\
    &\qquad= \sum_{n=\ceil*{C(t)}}^s \E\brs*{\Ind{N_a(s)=n}\br*{1-F^{\Beta}_{\alpha_a(s),\beta_a(s)}\br*{\frac{\hmu{a}{s}+\delta/2}{1-\epsilon}}}}\\
    &\qquad \overset{(2)}= \sum_{n=\ceil*{C(t)}}^s\E\brs*{\Ind{N_a(s)=n}F^{B}_{\alpha_a(s)+\beta_a(s)+1,\frac{\hmu{a}{s}+\delta/2}{1-\epsilon}}\br*{\alpha_a(s)-1}}\\
    &\qquad \overset{(3)}=\sum_{n=\ceil*{C(t)}}^s\E\brs*{\Ind{N_a(s)=n}F^{B}_{n+1,\frac{\hmu{a}{s}+\delta/2}{1-\epsilon}}\br*{\floor*{\frac{n\hmu{a}{s}}{1-\epsilon}}}} \\
    & \qquad\overset{(4)}=\sum_{n=\ceil*{C(t)}}^s\E\brs*{\exp\brc*{-2(n+1)\br*{\frac{\hmu{a}{s}+\delta/2}{1-\epsilon} - \frac{1}{n+1}\floor*{\frac{n\hmu{a}{s}}{1-\epsilon}}^2}}} \\
    & \qquad\le s \exp\brc*{-2C(t)\br*{\frac{\delta/2}{1-\epsilon}^2}} \\
    &\qquad \overset{(5)}\le \frac{1}{t^3}
\end{align*}
For $(1)$, we used the tower property while recalling that $N_a(s)$ is $\F_{s-1}$-measurable and $(2)$ is due to the Beta-Binomial trick. In $(3)$, we substituted $\alpha_a(s)$ and $\beta_a(s)$, using $N_a(s)=n$, and in $(4)$, we used Hoeffding's inequality and threw the indicator over $N_a(s)$. Finally, $(5)$ is a direct substitution of $C(t)$ and $s\le t$. Using the union bound over different values of $s\in\brs*{t}$ leads to a bound of $\frac{1}{t^2}$ for the second term of \Cref{eq:highThetaHighCount decomp} and concludes the proof.
\end{proof}

\clearpage

\section{Proofs of Upper Bounds}
\subsection{Adding suboptimal arms with small gaps}
\label{appendix:multiple optimal}
In this section, we prove that adding arms with gaps $\dr{a}\le\epsilon$ only decreases the expected regret. 

Denote by $R_f^\pi(T;\Narms)$ the lenient regret w.r.t. $f$ when running a bandit strategy $a_t=\pi_t$ in a $\Narms$-armed bandit problem. For any $\Narms$-armed bandit problem, consider a modified $(\Narms+1)$-armed problem where arm $\Narms+1$ has a gap $\dr{\Narms+1}\le\epsilon$. We aim to prove that adding the $(\Narms+1)^{th}$ arm reduces the lenient regret of $\epsilon$-TS, i.e., $R_f^\pi(T;\Narms+1)\le R_f^\pi(T;\Narms)$.

We define a sequence of policies $\psi^s$ for $s\in\brc*{0,\dots,T}$ as follows:
\begin{align*}
    \psi^s_t = 
    \begin{cases}
    \arg\max_{a\in\brs*{\Narms+1}} \theta_a(t) & t\le s \\
    \arg\max_{a\in\brs*{\Narms}} \theta_a(t) & t>s
    \end{cases}
\end{align*}
Notice that $\psi^T$ runs $\epsilon$-TS on all arms $a\in\brs*{\Narms+1}$ while $\psi^0$ runs the same algorithm on the first $\Narms$ arms. We will now prove that for all $s\in\brc*{0,\dots,T-1}$, we have $R_f^{\psi^{s+1}}(T;\Narms+1)\le R_f^{\psi^s}(T;\Narms+1)$. This also implies that $R_f^{\psi^T}(T;\Narms+1) \le R_f^{\psi^{0}}(T;\Narms+1)$, which concludes the proof. 

Denote the (random) action when playing policy $\pi$ at time $t$ by $\pi_t$. We use a coupling argument and assume that both $\psi^s$ and $\psi^{s+1}$ are played in parallel on the same data, with the same internal randomness. Specifically, since the policies are identical up to time $s$, this implies that $\psi^{s+1}_t=\psi^s_t$ for all $t\le s$ and both policies play through the same history $\F_s$ w.p. 1.

To prove the inequality, we decompose the regret of $\psi^{s+1}$ as follows:
\begin{align*}
    R_f^{\psi^{s+1}}(T;\Narms+1)
    &= \E\brs*{\sum_{t=1}^T f(\dr{\psi^{s+1}_t})} \\
    & = \E\brs*{\sum_{t=1}^s f(\dr{\psi^{s+1}_t})} + \E\brs*{\sum_{t=s+1}^T f(\dr{\psi^{s+1}_t})} \\
    & = \E\brs*{\sum_{t=1}^s f(\dr{\psi^s_t})} + \E\brs*{\sum_{t=s+1}^T f(\dr{\psi^{s+1}_t})}
\end{align*}
where the last equality is since $\psi^s$ and $\psi^{s+1}$ are the same up to time $s$. Next, notice that given any history $\F_s$, if $\psi^{s+1}_{s+1}\ne \Narms+1$, then it will choose an action $\psi^{s+1}_{s+1}=\arg\max_{a\in\brs*{\Narms}}\theta_a(t)$, Just as $\psi^s$ would choose. On the following steps, both policies are also the same, and therefore, their expected regret is equal:
\begin{align*}
    \E\brs*{\Ind{\psi^{s+1}_{s+1}\ne \Narms+1}\sum_{t=s+1}^T f(\dr{\psi^{s+1}_t})\bigg\vert \F_s}
    = \E\brs*{\Ind{\psi^{s+1}_{s+1}\ne \Narms+1}\sum_{t=s+1}^T f(\dr{\psi^s_t})\bigg\vert \F_s}
\end{align*}
If $\psi^{s+1}_{{k+1}}= \Narms+1$, then $\psi^{s+1}$ will not suffer any regret at this step and will continue identically to $\psi^s$ for the remaining steps, according to the history until time $s$. This is equivalent to playing $\psi^s$ for one less step:
\begin{align*}
    \E\brs*{\Ind{\psi^{s+1}_{s+1}= \Narms+1}\sum_{t=s+1}^T f(\dr{\psi^{s+1}_t})\bigg\vert \F_s}
    = \E\brs*{\Ind{\psi^{s+1}_{s+1}= \Narms+1}\sum_{t=s+1}^{T-1} f(\dr{\psi^s_t})\bigg\vert \F_s}
\end{align*}
Combining both leads to the desired result:
\begin{align*}
    R_f^{\psi^{s+1}}(T;\Narms+1)
    &= \E\brs*{\sum_{t=1}^s f(\dr{\psi^s_t})} 
    + \E\brs*{\E\brs*{\Ind{\psi^{s+1}_{s+1}\ne \Narms+1}\sum_{t=s+1}^T f(\dr{\psi^s_t})\bigg\vert \F_s}} \\
    &\hspace{4.325cm}\; + \E\brs*{\E\brs*{\Ind{\psi^{s+1}_{s+1}= \Narms+1}\sum_{t=s+1}^{T-1} f(\dr{\psi^s_t})\bigg\vert \F_s}} \\
    & = \E\brs*{\sum_{t=1}^s f(\dr{\psi^s_t})} + \E\brs*{\sum_{t=s+1}^T f(\dr{\psi^s_t})}
    - \E\brs*{\E\brs*{\Ind{\psi^{s+1}_{s+1}= \Narms+1} f(\dr{\psi^s_T})\bigg\vert \F_s}}\\
    &= R_f^{\psi^s}(T;\Narms+1) - \underbrace{\E\brs*{\E\brs*{\Ind{\psi^{s+1}_{s+1}= \Narms+1} f(\dr{\psi^s_T})\bigg\vert \F_s}}}_{\ge0} \\
    & \le R_f^{\psi^s}(T;\Narms+1)\enspace.
\end{align*}

\clearpage

\subsection{Proof of Lemma \ref{lemma:mean large deviation bound}}
\label{appendix:mean large deviation bound}
\meanLargeDev*
\begin{proof}
To bound the l.h.s, we divide the sum into different values of $N_a(t)$ as follows:
\begin{align*}
    \sum_{t=1}^T  \Pr\brc*{a_t=a,\hmu{a}{t}> x}
    & = \E\brs*{\sum_{t=1}^T\sum_{s=0}^t \Ind{a_t=a,\hmu{a}{t}> x,N_a(t)=s}} \\
    & = \E\brs*{\sum_{t=1}^T\sum_{s=0}^t \Ind{a_t=a,\hat\mu_{a,s}> x,N_a(t)=s}} \\
    & = \E\brs*{\sum_{s=0}^T\Ind{\hat\mu_{a,s}> x}\underbrace{\sum_{t=\max\brc*{s,1}}^T\Ind{a_t=a,N_a(t)=s}}_{\le1} } \\
    & \overset{(1)}\le \E\brs*{\sum_{s=0}^T\Ind{\hat\mu_{a,s}> x}}\\
    & \overset{(2)}\le \sum_{s=1}^T \Pr\brc*{\hat\mu_{a,s}> x} \\
    & \overset{(3)}\le \sum_{s=1}^T \exp\brc*{-s\klBin(x,\mu_a)} \\
    & \le \frac{1}{\klBin(x,\mu_a)}
\end{align*}
In $(1)$ we used the fact that if $a_t=a$ and $N_a(t)=s$, then $N_a(\tau)\ge s+1$ for all $\tau>t$; therefore, only one of the indicators $\Ind{a_t=a,N_a(t)=s}$ can be equal to one. In $(2)$, recall the initialization $\hat\mu_{a,0}=0$, which implies that $\brc*{\hat\mu_{a,0}>x}$ cannot occur for any $x>\mu_a\ge0$, and we can remove $s=0$ from the summation. For $(3)$ we used Chernoff-Hoeffding bound for $x>\mu_a$.
\end{proof}

\clearpage
\subsection{Proof of Lemma \ref{lemma:optimal low when high optimal arm}}
\label{appendix:optimal low when high optimal arm}
\optLowHighOptimal*
\begin{proof}
For ease of notations, let $L_1=L_1(\mu_1,\epsilon,b)$
We start by dividing the sum to 
\begin{align*}
   \sum_{t=1}^T  \Pr\brc*{\theta_1(t)\le 1-\epsilon, N_1(t)>(t-1)^b}
    \le L_1 + \sum_{t=L_1+1}^T  \Pr\brc*{\theta_1(t)\le 1-\epsilon, N_1(t)>(t-1)^b}\enspace.
\end{align*}
Notice that $\theta_1(t)\le1-\epsilon$ if and only if $\hmu{1}{t}\le 1-\epsilon$. Thus, the remaining term can be bounded by
\begin{align*}
    \sum_{t=L_1+1}^T  \Pr\brc*{\theta_1(t)\le 1-\epsilon, N_1(t)>(t-1)^b}
    & = \sum_{t=L_1+1}^T  \Pr\brc*{\hmu{1}{t}\le 1-\epsilon, N_1(t)>(t-1)^b} \\
    & \le \sum_{t=L_1+1}^T\sum_{s=\ceil*{(t-1)^b}}^t  \Pr\brc*{\hmu{1}{t}\le 1-\epsilon, N_1(t)=s} \\
    & = \sum_{t=L_1+1}^T\sum_{s=\ceil*{(t-1)^b}}^t  \Pr\brc*{\hat\mu_{1,s}\le 1-\epsilon, N_1(t)=s} \\
    & \le \sum_{t=L_1+1}^T\sum_{s=\ceil*{(t-1)^b}}^t  \Pr\brc*{\hat\mu_{1,s}\le 1-\epsilon} \\
    & \overset{(1)}\le \sum_{t=L_1+1}^T\sum_{s=\ceil*{(t-1)^b}}^t \exp\brc*{-s\klBin(1-\epsilon,\mu_1)} \\
    & \le \sum_{t=L_1+1}^T \frac{ \exp\brc*{-\br*{(t-1)^b-1}\cdot\klBin(1-\epsilon,\mu_1)}}{\klBin(1-\epsilon,\mu_1)} \\
    & \overset{(2)}\le \sum_{t=L_1+1}^T \frac{1}{t^2\klBin(1-\epsilon,\mu_1)} \\
    & \le \frac{\pi^2/6}{\klBin(1-\epsilon,\mu_1)}
\end{align*}
where $(1)$ uses Chernoff-Hoeffding bound and $(2)$ uses the definition of $L_1(\mu_1,\epsilon,b)$.
\end{proof}

\clearpage

\subsection{Proof of Lemma \ref{lemma:bound when theta high and optimal low}}
\label{appendix:bound when theta high and optimal low}
\boundHighThetaLowOpt*
\begin{proof}

First notice that if $a_t=a$ and $\theta_1(t)> \eta(t)$, then necessarily $\theta_a(t)> \eta(t)$, and we can write
\begin{align*}
    \sum_{t=1}^T  \Pr\brc*{a_t=a,\theta_1(t)> \eta(t)} \le \sum_{t=1}^T  \Pr\brc*{a_t=a,\theta_a(t)> \eta(t)}\enspace.
\end{align*}
Next, let $x_a(t)$ be some sequence such that $\mu_a< x_a(t)<\eta(t)$ for all $t\in\brs*{T}$. The exact value of $x_a(t)$ will be determined by the end of the proof. We further decompose this term to 
\begin{align}
    \sum_{t=1}^T  \Pr\brc*{a_t=a,\theta_1(t)> \eta(t)} 
    &\le \sum_{t=1}^T  \Pr\brc*{a_t=a,\hmu{a}{t}> x_a(t)} \nonumber\\
    &\quad+ \sum_{t=1}^T  \Pr\brc*{a_t=a,\theta_a(t)> \eta(t),\hmu{a}{t}\le x_a(t)} \enspace. \label{eq:highThetaLowOpt decomp}
\end{align}
The first term can be bounded using \Cref{lemma:mean large deviation bound}:
\begin{align}
    \sum_{t=1}^T  \Pr\brc*{a_t=a,\hmu{a}{t}> x_a(t)}
    &\le \sum_{t=1}^T  \Pr\brc*{a_t=a,\hmu{a}{t}> \min_tx_a(t)} 
    \le \frac{1}{\klBin(\min_tx_a(t),\mu_a)}\enspace. \label{eq:highThetaLowOpt p1}
\end{align}
Next, define 
$B_a(T)=(1+c)\max_{\tau\in\brs*{T}}\brc*{\frac{\ln \tau}{\klBin\br*{\frac{x_a(\tau)}{1-\epsilon},\frac{\eta(\tau)}{1-\epsilon}}}}$. 
For the remaining term of \Cref{eq:highThetaLowOpt decomp}, we first write the probabilities as the expectation of indicators. Then, we divide the sum to times where $N_a(t)> B_a(T)$ and times where $N_a(t)\le B_a(T)$:
\begin{align}
    \sum_{t=1}^T  &\Ind{a_t=a,\theta_a(t)> \eta(t),\hmu{a}{t}\le x_a(t)}\nonumber\\
    &\qquad = \sum_{t=1}^T \Ind{a_t=a,N_a(t)>B_a(T), \hmu{a}{t}\le x_a(t),\theta_a(t)>\eta(t)} \nonumber\\
    &\qquad\quad+ \sum_{t=1}^T \Ind{a_t=a,N_a(t)\le B_a(T), \hmu{a}{t}\le x_a(t),\theta_a(t)>\eta(t)} \nonumber\\
    & \qquad\le  \sum_{t=1}^T \Ind{a_t=a,N_a(t)>B_a(T), \hmu{a}{t}\le x_a(t),\theta_a(t)>\eta(t)} \nonumber\\
    &\qquad\quad+ \sum_{t=1}^T \Ind{a_t=a,N_a(t)\le B_a(T)} \nonumber\\
    &\qquad \le B_a(T)+1 + \sum_{t=1}^T \Ind{a_t=a,N_a(t)>B_a(T), \hmu{a}{t}\le x_a(t),\theta_a(t)>\eta(t)}\label{eq:highThetaLowOpt p2}
\end{align}
where in the last inequality we used the fact that if $a_t=a$, then $N_a(t+1)=N_a(t)+1$; therefore, the events in the second summation can occur at most $B_a(T)+1$ times. For the indicators in the remaining summation, we take an expectation and use the tower rule with $\F_{t-1}$:
\begin{align}
    &\E\brs*{\Ind{a_t=a,N_a(t)>B_a(T), \hmu{a}{t}\le x_a(t),\theta_a(t)>\eta(t)}} \nonumber\\
    &\qquad\qquad = \E\brs*{\Ind{N_a(t)>B_a(T), \hmu{a}{t}\le x_a(t)}\E\brs*{\Ind{a_t=a,\theta_a(t)>\eta(t)}\vert \F_{t-1}}} \nonumber\\
    &\qquad \qquad \;\le \E\brs*{\Ind{N_a(t)>B_a(T), \hmu{a}{t}\le x_a(t)}\E\brs*{\Ind{\theta_a(t)>\eta(t)}\vert \F_{t-1}}} \nonumber\\
    & \qquad\qquad \;= \E\brs*{\Ind{N_a(t)>B_a(T), \hmu{a}{t}\le x_a(t)}\Pr\brc*{\theta_a(t)>\eta(t)\vert \F_{t-1}}} \label{eq:highThetaLowOpt p3}
\end{align}
Notice that $x_a(t)<\eta(t)<1-\epsilon$; therefore, the condition $\hmu{a}{t}\le x_a(t)$ implies that given $\F_{t-1}$, $\frac{\theta_a(t)}{1-\epsilon}\sim \Beta(\alpha_a(t),\beta_a(t))$. Using Beta-Binomial trick, we get
\begin{align*}
    \Pr\brc*{\theta_a(t)>\eta(t)\vert \F_{t-1}}
    = 1-F^{\Beta}_{\alpha_a(t),\beta_a(t)}\br*{\frac{\eta(t)}{1-\epsilon}} = F^B_{N_a(t)+1,\eta(t)/(1-\epsilon)}\br*{\alpha_a(t)-1}\enspace.
\end{align*}
Next, notice that we are only interested in history sequences $\F_{t-1}$ such that $\hmu{a}{t}\le x_a(t)$. Then,
\begin{align}
    \frac{\alpha_a(t)-1}{N_a(t)+1}
    = \frac{1}{N_a(t)+1}\floor*{\frac{\hmu{a}{t}N_a(t)}{1-\epsilon}} 
    \le \frac{\hmu{a}{t}}{1-\epsilon}
    \le \frac{x_a(t)}{1-\epsilon}\enspace,\label{eq:alpha beta calc}
\end{align}
and since $\frac{x_a(t)}{1-\epsilon}< \frac{\eta(t)}{1-\epsilon}$, we can apply Chernoff-Hoeffding inequality, which yields
\begin{align*}
     \Pr\brc*{\theta_a(t)>\eta(t)\vert \F_{t-1}}
    &\le \exp\brc*{-(N_a(t)+1)\klBin\br*{\frac{\alpha_a(t)-1}{N_a(t)+1},\frac{\eta(t)}{1-\epsilon}}} \\
    &\le \exp\brc*{-(N_a(t)+1)\klBin\br*{\frac{x_a(t)}{1-\epsilon},\frac{\eta(t)}{1-\epsilon}}} 
\end{align*}
where the last inequality is due to \Cref{eq:alpha beta calc} and the fact that $\klBin(x,y)$ decreases in $x$ for $x\le y$. Substituting back into \Cref{eq:highThetaLowOpt p3}, we get
\begin{align*}
    &\E\brs*{\Ind{a_t=a,N_a(t)>B_a(T), \hmu{a}{t}\le x_a(t),\theta_a(t)>\eta(t)}} \\
     & \qquad\qquad \;\le E\brs*{\Ind{N_a(t)>B_a(T), \hmu{a}{t}\le x_a(t)}\exp\brc*{-(N_a(t)+1)\klBin\br*{\frac{x_a(t)}{1-\epsilon},\frac{\eta(t)}{1-\epsilon}}}} \\
     &\qquad \qquad \; \le E\brs*{\Ind{N_a(t)>B_a(T), \hmu{a}{t}\le x_a(t)}\exp\brc*{-(B_a(T)+1)\klBin\br*{\frac{x_a(t)}{1-\epsilon},\frac{\eta(t)}{1-\epsilon}}}} \\
     &\qquad\qquad \; \le \exp\brc*{-(B_a(T)+1)\klBin\br*{\frac{x_a(t)}{1-\epsilon},\frac{\eta(t)}{1-\epsilon}}}\\
     &\qquad\qquad \; \le\frac{1}{t^{1+c}}\enspace,
\end{align*}
where for the last inequality, we used
\begin{align*}
    B_a(T) 
    = (1+c)\max_{\tau\in\brs*{T}}\brc*{\frac{\ln \tau}{\klBin\br*{\frac{x_a(\tau)}{1-\epsilon},\frac{\eta(\tau)}{1-\epsilon}}}} 
    \ge (1+c)\frac{\ln t}{\klBin\br*{\frac{x_a(t)}{1-\epsilon},\frac{\eta(t)}{1-\epsilon}}}\enspace.
\end{align*}
Substituting back into \Cref{eq:highThetaLowOpt p2} and recalling that $\sum_{t=1}^T \frac{1}{t^{1+c}}\le 1 + \frac{1}{c}$ leads to
\begin{align}
    \E\brs*{\sum_{t=1}^T \Ind{a_t=a,\theta_a(t)> \eta(t),\hmu{a}{t}\le x_a(t)}}
    \le B_a(T)+2 + \frac{1}{c}\enspace.\label{eq:highThetaLowOpt p4}
\end{align}
To conclude the proof, recall that $\eta(t)\ge\mu_1-\epsilon>\mu_a$ for all $a\ne1$ and $t\in\brs*{T}$. Therefore, we can fix $x_a(t)$ to be the solution of the equation
\begin{align*}
    \klBin\br*{\frac{x_a(t)}{1-\epsilon},\frac{\eta(t)}{1-\epsilon}} = \frac{1}{1+c}\klBin\br*{\frac{\mu_a}{1-\epsilon},\frac{\eta(t)}{1-\epsilon}}
\end{align*}
in the interval $\br*{\mu_a,\eta(t)}$. Specifically, using the bound $\eta(t)\ge\mu_1-\epsilon$ combined with \Cref{lemma:kl equation solution increases} (which can be found in \Cref{appendix:kl equation solution increases}), we get $x_a(t)\ge x_{a,c}$, for $x_{a,c}\in\br*{\mu_a,\mu_1-\epsilon}$ such that 
\begin{align*}
    \klBin\br*{\frac{x_{a,c}}{1-\epsilon},\frac{\mu_1-\epsilon}{1-\epsilon}} = \frac{1}{1+c}\klBin\br*{\frac{\mu_a}{1-\epsilon},\frac{\mu_1-\epsilon}{1-\epsilon}}\enspace.
\end{align*}

Then, substituting \Cref{eq:highThetaLowOpt p1} and \Cref{eq:highThetaLowOpt p4} back into \Cref{eq:highThetaHighCount decomp} leads to the desired result:
\begin{align*}
    \sum_{t=1}^T  \Pr\brc*{a_t=a,\theta_1(t)> \eta(t)} 
    &\le (1+c)\max_{\tau\in\brs*{T}}\brc*{\frac{\ln \tau}{\klBin\br*{\frac{x_a(\tau)}{1-\epsilon},\frac{\eta(\tau)}{1-\epsilon}}}} + 2 + \frac{1}{c} + \frac{1}{\klBin(\min_tx_a(t),\mu_a)} \\
    & \le (1+c)^2\max_{\tau\in\brs*{T}}\brc*{\frac{\ln \tau}{\klBin\br*{\frac{\mu_a}{1-\epsilon},\frac{\eta(\tau)}{1-\epsilon}}}} + 2 + \frac{1}{c} + \frac{1}{\klBin(x_{a,c},\mu_a)}
\end{align*}
\end{proof}

\clearpage

\subsection{Proof of Lemma \ref{lemma:prob that theta low when optimal low}}
\label{appendix:prob that theta low when optimal low}
\ProbThetaLow*
\begin{proof}
For brevity, we write $L_2=L_2(b,\epsilon)$. Then, for all $t>L_2$, we have $\eta(t)=\mu_1-2\sqrt{\frac{6\ln t}{(t-1)^b}}$, and thus, 
\begin{align*}
    \sum_{t=1}^T & \Pr\brc*{\theta_1(t)\le \eta(t), N_1(t)>(t-1)^b} \\
    & \quad\le L_2-1 + \sum_{t=L_2}^T  \Pr\brc*{\theta_1(t)\le \mu_1-2\sqrt{\frac{6\ln t}{(t-1)^b}}, N_1(t)>(t-1)^b} \\
    & \quad\le L_2-1 + \sum_{t=L_2}^T  \Pr\brc*{\theta_1(t)\le \mu_1-2\sqrt{\frac{6\ln t}{N_1(t)}}, N_1(t)>(t-1)^b} \\
    & \quad\le L_2-1 + \underbrace{\sum_{t=L_2}^T  \Pr\brc*{\theta_1(t)\le \mu_1-2\sqrt{\frac{6\ln t}{N_1(t)}},\frac{S_1(t)}{N_1(t)+1}>\mu_1-\sqrt{\frac{6\ln t}{N_1(t)}}, N_1(t)>(t-1)^b}}_{(A)}\\
    &\qquad+ \underbrace{\sum_{t=L_2}^T  \Pr\brc*{\frac{S_1(t)}{N_1(t)+1}\le\mu_1-\sqrt{\frac{6\ln t}{N_1(t)}}, N_1(t)>(t-1)^b}}_{(B)}
\end{align*}
\paragraph{Bounding term $(A)$:} Denote $y(t)=y\br*{t,S_1(t),N_1(t)}=\frac{S_1(t)}{N_1(t)+1}-\sqrt{\frac{6\ln t}{N_1(t)}}$. Combining both inequalities of the event, we get
\begin{align*}
(A) 
&\le \sum_{t=L_2}^T  \Pr\brc*{N_1(t)>(t-1)^b,\theta_1(t)\le \frac{S_1(t)}{N_1(t)+1}-\sqrt{\frac{6\ln t}{N_1(t)}}}\\ 
&= \sum_{t=L_2}^T  \Pr\brc*{N_1(t)>(t-1)^b,\theta_1(t)\le y(t)} \\
&\le \underbrace{\sum_{t=L_2}^T   \Pr\brc*{N_1(t)>(t-1)^b,\theta_1(t)\le y(t),y(t)<1-\epsilon}}_{(i)}
+\underbrace{\sum_{t=L_2}^T  \Pr\brc*{y(t)\ge1-\epsilon}}_{(ii)}
\end{align*}
For the first term, see that the inequality $\theta_1(t)\le y(t)<1-\epsilon$ implies that $\theta_1(t)$ has a scaled beta-distribution. Using the tower rule while noticing that $y(t)$ is $\F_{t-1}$-measurable, we write
\begin{align*}
(i)&= \sum_{t=L_2}^T \Pr\brc*{N_1(t)>(t-1)^b,\theta_1(t)\le y(t),y(t)<1-\epsilon} \\
 & = \sum_{t=L_2}^T \E\brs*{\Ind{N_1(t)>(t-1)^b,y(t)<1-\epsilon} \Pr\brc*{\theta_1(t)\le y(t)\vert \F_{t-1}}}\\
& = \sum_{t=L_2}^T \E\brs*{\Ind{N_1(t)>(t-1)^b,y(t)<1-\epsilon}
\br*{1-F^B_{N_1(t)+1,\frac{y(t)}{1-\epsilon}}\br*{\frac{1}{N_1(t)+1}\floor*{\frac{S_1(t)}{1-\epsilon}}}} }
\end{align*}
where the last equality is due to the Beta-Binomial trick, and we define $F^B_{n,p}(y)=1$ when $p\le0$ and $y\ge0$. Next, one can easily observe that for any $t\ge2$  and $N_1(t)\ge1$, we have $\sqrt{\frac{3\ln t}{2N_1(t)}}\ge \frac{1}{N_1(t)+1}$, and thus
\begin{align}
    \frac{1}{N_1(t)+1}\floor*{\frac{S_1(t)}{1-\epsilon}}
    &\ge \frac{1}{N_1(t)+1}\br*{\frac{S_1(t)}{1-\epsilon}-1} \nonumber\\
    &\ge \frac{1}{N_1(t)+1}\frac{S_1(t)}{1-\epsilon}-\sqrt{\frac{3\ln t}{2N_1(t)}}\nonumber\\
    &\ge \frac{y(t)}{1-\epsilon}+\sqrt{\frac{6\ln t}{N_1(t)}}-\sqrt{\frac{3\ln t}{2N_1(t)}} \nonumber\\
    & = \frac{y(t)}{1-\epsilon}+\sqrt{\frac{3\ln t}{2N_1(t)}}\label{eq:mean to y bound}
\end{align}
Then, we can use Hoeffding's inequality:
\begin{align*}
(i)&\le\E\brs*{\sum_{t=L_2}^T \Ind{N_1(t)>(t-1)^b} \exp\brc*{-(N_1(t)+1)\br*{\frac{1}{N_1(t)+1}\floor*{\frac{S_1(t)}{1-\epsilon}} -\frac{y(t)}{1-\epsilon} }^2} }\\
& \overset{(1)}\le  \E\brs*{\sum_{t=L_2}^T \Ind{N_1(t)>(t-1)^b} \exp\brc*{-(N_1(t)+1)\br*{\sqrt{\frac{3\ln t}{2N_1(t)}}}^2} } \\
& \le \E\brs*{\sum_{t=1}^T \Ind{N_1(t)>(t-1)^b} \exp\brc*{-\frac{3\ln t}{2}} } \\
& \le \sum_{t=1}^T  t^{-3/2}
\le 3
\end{align*}
where in $(1)$, we substituted \Cref{eq:mean to y bound}. Next, we bound the event where $y(t)\ge1-\epsilon$ as follows:

\begin{align*}
    (ii)&=\sum_{t=1}^T  \Pr\brc*{y(t)\ge1-\epsilon}\\
    &= \sum_{t=1}^T  \Pr\brc*{\frac{S_1(t)}{N_1(t)+1}-\sqrt{\frac{6\ln t}{N_1(t)}}\ge1-\epsilon} \\
    &\le \sum_{t=1}^T  \Pr\brc*{\hmu{1}{t}-\sqrt{\frac{6\ln t}{N_1(t)}}\ge1-\epsilon} \\
    & \overset{(1)}\le \sum_{t=1}^T\sum_{s=1}^t \Pr\brc*{N_1(t)=s,\hmu{1}{t}-\sqrt{\frac{6\ln t}{N_1(t)}}\ge1-\epsilon} \\
    & \le \sum_{t=1}^T\sum_{s=1}^t \Pr\brc*{\hat\mu_{1,s}-\sqrt{\frac{6\ln t}{s}}\ge1-\epsilon} \\
    & \overset{(2)}\le \sum_{t=1}^T\sum_{s=1}^t \Pr\brc*{\hat\mu_{1,s}\ge \mu_1 +\sqrt{\frac{6\ln t}{s}}} \\
    & \overset{(3)}\le \sum_{t=1}^T\sum_{s=1}^t \exp \brc*{-2s\br*{\sqrt{\frac{6\ln t}{s}}}^2} \\
    & = \sum_{t=1}^T\sum_{s=1}^t t^{-12} 
    \le 2
\end{align*}
In $(1)$, notice that when $N_1(t)=0$, the l.h.s. of the inequality is $-\infty$, and the inequality cannot hold, so we can assume that $N_1(t)\ge1$. Next, $(2)$ uses the assumption $\mu_1\le 1-\epsilon$ and $(3)$ is by Hoeffding's inequality. Combining both parts, we get 
\begin{align*}
    (A)\le 5\enspace.
\end{align*}

\paragraph{Bounding term $(B)$:}
\begin{align*}
    (B)  
    & \le \sum_{t=L_2}^T \sum_{s=\ceil*{(t-1)^b}}^t  \Pr\brc*{\frac{S_1(t)}{N_1(t)+1}\le\mu_1-\sqrt{\frac{6\ln t}{N_1(t)}}, N_1(t)=s} \\
    &  = \sum_{t=L_2}^T \sum_{s=\ceil*{(t-1)^b}}^t  \Pr\brc*{S_1(t)\le(s+1)\br*{\mu_1-\sqrt{\frac{6\ln t}{s}}}, N_1(t)=s} \\
    &  \le \sum_{t=L_2}^T \sum_{s=\ceil*{(t-1)^b}}^t  \Pr\brc*{S_1(t)\le s\mu_1+1-\sqrt{6s\ln t}} 
\end{align*}
Next, notice that for all $t\ge L_2\ge2$, we have $\sqrt{1.5\ln t}>1$. This also implies that for all $t\ge L_2$ and all $s>(t-1)^b$, we have $\sqrt{1.5s\ln t}>1$, which can be equivalently written as $1-\sqrt{6s\ln t}<-\sqrt{1.5s\ln t}$. Substituting this relation yields:
\begin{align*}
    (B) \le
    & \sum_{t=L_2(b,\epsilon)}^T \sum_{s=\ceil*{(t-1)^b}}^t  \Pr\brc*{S_1(t)\le s\mu_1-\sqrt{1.5s\ln t}} 
    \overset{(*)}\le \sum_{t=1}^T \sum_{s=1}^t  t^{-3} 
    \le 2
\end{align*}
where $(*)$ is due to Hoeffding's inequality.

\paragraph{Combining both parts:} For any $L_2(b,\epsilon)\ge2$ such that for all $t>L_2(b,\epsilon)$, $\eta(t)>\mu_1-\epsilon$, we have
\begin{align*}
    \sum_{t=L_2}^T  \Pr\brc*{\theta_1(t)\le \mu_1-2\sqrt{\frac{6\ln t}{(t-1)^b}}, N_1(t)>(t-1)^b}
    &\le  L_2(b,\epsilon)-1+(A)+(B) \\
    &\le L_2(b,\epsilon)+6
\end{align*}
\end{proof}

\clearpage

\subsection{Proof of Theorem \ref{theorem:dependent upper bound asymptotic}}
\label{appendix:dependent upper bound asymptotic}

\dependentUpperBound*

\begin{proof}
We start from the bound of \Cref{theorem:dependent upper bound} and prove that it asymptotically leads to the desired result. Specifically, we use the notations and assumptions described at the beginning of \Cref{section: regret analysis}.

First notice that the result trivially holds when $\mu_1>1-\epsilon$. Thus, we focus on the regime $\mu_1\le1-\epsilon$. Let $\tau_m(T)$ be the largest time index such that the maximum in \Cref{eq:upper bound low opt} is achieved, i.e.,
\begin{align*}
    \tau_m(T) = \max\brc*{\tau: \tau\in\arg\max_{t\in\brs*{T}}\frac{\ln t}{\klBin\br*{\frac{\mu_a}{1-\epsilon},\frac{\eta(t)}{1-\epsilon}}}}\enspace.
\end{align*}
By definition, $\tau_m(T)$ increases with $T$, and therefore $\lim_{T\to\infty}\tau_m(T)$ is well-defined. If $\lim_{T\to\infty}\tau_m(T)<\infty$, one can easily observe that 
\begin{align*}
    \limsup_{T\to\infty}\frac{R_f(T)}{\ln T}=0\enspace. 
\end{align*}
Otherwise, for any $c>0$ we have
\begin{align}
    \limsup_{T\to\infty}\frac{R_f(T)}{\ln T}
    &\le (1+c)^2\sum_{a=2}^\Narms f(\dr{a})\limsup_{T\to\infty}\frac{1}{\ln T} \frac{\ln \tau_m(T)}{\klBin\br*{\frac{\mu_a}{1-\epsilon},\frac{\eta(\tau_m(T))}{1-\epsilon}}} \nonumber\\
    &\le (1+c)^2\sum_{a=2}^\Narms f(\dr{a})\lim_{T\to\infty}\frac{1}{\klBin\br*{\frac{\mu_a}{1-\epsilon},\frac{\eta(\tau_m(T))}{1-\epsilon}}}\nonumber\\
    &= (1+c)^2\sum_{a=2}^\Narms\frac{f(\dr{a})}{\klBin\br*{\frac{\mu_a}{1-\epsilon},\frac{\mu_1}{1-\epsilon}}}\label{eq:asymptotic bound modKL}
    \enspace. 
\end{align}
and as the result holds with any $c>0$, it also holds for $c=0$, which leads to the first bound of the theorem. Notice that if $\mu_1=1-\epsilon$, then the denominator equals $\klBin\br*{\frac{\mu_a}{1-\epsilon},\frac{\mu_1}{1-\epsilon}}=\infty$ and the asymptotic bound holds. Otherwise, $\mu_1<1-\epsilon$. In this case, we relate the denominator to $\klBin(\mu_a,\mu_1+\epsilon)$ using the following lemma:
\modKlProperties*
The proof can be found in \Cref{appendix:mod-kl-properties}. To apply the lemma, observe that if $\epsilon\ge\frac{1}{2}$ and $\mu_1<1-\epsilon$, then $\dr{a}\le\epsilon$ for all suboptimal arms, and the lenient regret is zero. Also, since we assume that for all suboptimal arms $\dr{a}>\epsilon$, we have $\mu_a<\mu_1-\epsilon<1-2\epsilon$. Therefore, we can apply this lemma with \Cref{eq:asymptotic bound modKL}, which concludes the proof. 
\end{proof}

\clearpage

\section{Proofs of Auxiliary Results}
\subsection{Proof of Claim \ref{claim: epsilon to standard relation}}
\label{appendix: epsilon to standard relation}
\epsToStandardRelation*
\begin{proof}
First notice that for any $a\in\brs*{\Narms}$, we can write
\begin{align*}
    \dr{a}=\int_{\epsilon=0}^1 f_\epsilon(\dr{a})d\epsilon\enspace.
\end{align*}
Recalling that the regret can be written as $R(T) = \sum_{a=1}^{\Narms} \dr{a} \E\brs*{N_a(T+1)}$, we get
\begin{align*}
    R(T) &= \sum_{a=1}^{\Narms} \dr{a} \E\brs*{N_a(T+1)} \\
    &= \sum_{a=1}^{\Narms} \br*{\int_{\epsilon=0}^1 f_\epsilon(\dr{a})d\epsilon} \E\brs*{N_a(T+1)} \\
    &= \int_{\epsilon=0}^1 \br*{\sum_{a=1}^{\Narms} f_\epsilon(\dr{a})\E\brs*{N_a(T+1)}} d\epsilon \\
    &= \int_{\epsilon=0}^1 \br*{\sum_{t=1}^{T} f_\epsilon(\dr{t})} d\epsilon \\
    &= \int_{\epsilon=0}^1 R_{f_\epsilon}(T)d\epsilon \enspace.
\end{align*}
\end{proof}

\subsection{Proof of Lemma \ref{lemma:mod-kl-properties}}
\label{appendix:mod-kl-properties}
\modKlProperties*
\begin{proof}
Define 
\begin{align*}
    \modklBin(p,q) = p\ln\frac{p}{q} + (1-p-\epsilon)\ln\frac{1-p-\epsilon}{1-q-\epsilon} = (1-\epsilon)\klBin\br*{\frac{p}{1-\epsilon},\frac{q}{1-\epsilon}}\enspace.
\end{align*}
Then, we want to prove that $\modklBin(p,q)\ge \frac{1}{4}\klBin(p,q+\epsilon)$. 
We start by stating the partial derivatives of $\modklBin(p,q)$ and $\klBin(p,q)$ with respect to $q$ and $\epsilon$:
\begin{align}
    \frac{\partial \modklBin(p,q)}{\partial \epsilon} &= \frac{1-p-\epsilon}{1-q-\epsilon} - \ln \frac{1-p-\epsilon}{1-q-\epsilon} -1\nonumber \\
    \frac{\partial \modklBin(p,q)}{\partial q} &= \frac{(1-\epsilon)(q-p)}{q(1-q-\epsilon)} \nonumber\\
     \frac{\partial \klBin(p,q)}{\partial q} &= \frac{q-p}{q(1-q)} \label{eq:mod kl deriv}
\end{align}
Next, for any fixed $p\in[0,1-2\epsilon)$. define the function
\begin{align*}
    g(q,\epsilon) = \modklBin(p,q) - \frac{1}{4}\klBin(p,q+\epsilon)\enspace,
\end{align*}
Our goal is to prove that $g(q,\epsilon)\ge0$ for all $p,q$ and $\epsilon$. The derivative of $g(q,\epsilon)$ w.r.t. $q$ is
\begin{align*}
    \frac{\partial g(q,\epsilon)}{\partial q} & = \frac{(1-\epsilon)(q-p)}{q(1-q-\epsilon)} - \frac{1}{4}\frac{q+\epsilon-p}{(q+\epsilon)(1-q-\epsilon)} \\
    & = \frac{(1-\epsilon)(q-p)(q+\epsilon) - \frac{1}{4}(q+\epsilon-p)q}{q(q+\epsilon)(1-q-\epsilon)} \\
    & = \frac{(1-\epsilon-\frac{1}{4})q(q+\epsilon-p) - p\epsilon(1-\epsilon)}{q(q+\epsilon)(1-q-\epsilon)} \enspace.
\end{align*}
Next, as $\epsilon\le \frac{1}{2}$, notice that all of the coefficient of $p$ are negative, and since $q\ge p+\epsilon$, the derivative can be lower bounded by taking $p=q-\epsilon$:
\begin{align*}
    \frac{\partial g(q,\epsilon)}{\partial q} & \ge \frac{(1-\epsilon-\frac{1}{4})q\brs*{q+\epsilon-(q-\epsilon)} - (q-\epsilon)\epsilon(1-\epsilon)}{q(q+\epsilon)(1-q-\epsilon)} \\
    & = \frac{(1-\epsilon-\frac{1}{2})\epsilon q +\epsilon^2(1-\epsilon)}{q(q+\epsilon)(1-q-\epsilon)}
\end{align*}
Specifically, since $\epsilon\in\brs*{0,\frac{1}{2}}$, the derivative is nonnegative for all $q\in[p+\epsilon,1-\epsilon)$, and thus $g(q,\epsilon)\ge g(p+\epsilon,\epsilon)$ for all such $q$. Next we lower bound $g(p+\epsilon,\epsilon)$ for any $\epsilon$ and any $0\le p < 1-2\epsilon$. We start by treating it as a function of $\epsilon$, namely
\begin{align*}
    g(p+\epsilon,\epsilon) & = \modklBin(p,p+\epsilon) - \frac{1}{4}\klBin(p,p+2\epsilon) \triangleq h(\epsilon)
\end{align*}
We now bound the derivative of $h(\epsilon)$. The derivative of the first term of $h(\epsilon)$ can be bounded as follows:
\begin{align*}
    \frac{\partial \modklBin(p,p+\epsilon)}{\partial \epsilon} &=-\frac{p}{p+\epsilon} +  2\frac{1-p-\epsilon}{1-p-2\epsilon} - \ln \frac{1-p-\epsilon}{1-p-2\epsilon} -1 \\
    & \overset{(*)}{\ge} -\frac{p}{p+\epsilon} +  \frac{1-p-\epsilon}{1-p-2\epsilon} \\
    & = \br*{\frac{p+\epsilon}{p+2\epsilon}-\frac{p}{p+\epsilon}} +  \br*{\frac{1-p-\epsilon}{1-p-2\epsilon} - \frac{p+\epsilon}{p+2\epsilon}} \\
    & = \frac{\epsilon^2}{(p+\epsilon)(p+2\epsilon)} + \frac{\epsilon}{(p+2\epsilon)(1-p-2\epsilon)} \enspace ,
\end{align*}
where $(*)$ is due to the inequality $\ln x \le x-1$. The derivative of the second term equals to 
\begin{align*}
    \frac{\partial \klBin(p,p+2\epsilon)}{\partial \epsilon} &=-2\frac{p}{p+2\epsilon} +  2\frac{1-p}{1-p-2\epsilon} = \frac{4\epsilon}{(p+2\epsilon)(1-p-2\epsilon)} \enspace .
\end{align*}
Combining, we get
\begin{align*}
    h'(\epsilon) &= \frac{\epsilon^2}{(p+\epsilon)(p+2\epsilon)} + \frac{\epsilon}{(p+2\epsilon)(1-p-2\epsilon)} - \frac{1}{4}\frac{4\epsilon}{(p+2\epsilon)(1-p-2\epsilon)} \\
    & = \frac{\epsilon^2}{(p+\epsilon)(p+2\epsilon)}
    \ge 0
\end{align*}
Therefore $h(\epsilon)$ is increasing in $\epsilon$, and for all $\epsilon\in\brs*{0,\frac{1}{2}}$, $h(\epsilon)\ge h(0)$. The proof is concluded by noting that $h(0)=0$ and therefore, for all valid $p,q$ and $\epsilon$, it holds that $g(q,\epsilon)\ge g(p+\epsilon,\epsilon)\ge0$.
\end{proof}

\clearpage

\subsection{Lemma \ref{lemma:kl equation solution increases}}
\label{appendix:kl equation solution increases}
\begin{lemma}
\label{lemma:kl equation solution increases}
For any $0\le p \le q <1-\epsilon$ and any $c\ge0$, let $x_c(p,q)\in\brs*{p,q}$ be the solution of the equation
\[\klBin\br*{\frac{x}{1-\epsilon},\frac{q}{1-\epsilon}} = \frac{1}{1+c}\klBin\br*{\frac{p}{1-\epsilon},\frac{q}{1-\epsilon}}\enspace.\]
Then, $x_c(p,q) \le \frac{c}{1+c}q + \frac{1}{1+c}p$, and for any $\mu\in[q,1-\epsilon)$, it holds that $x_c(p,q) \le x_c(p,\mu)$.
\end{lemma}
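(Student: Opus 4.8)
The plan is to remove the scaling with the substitution $\tilde x=\frac{x}{1-\epsilon}$, $\tilde p=\frac{p}{1-\epsilon}$, $\tilde q=\frac{q}{1-\epsilon}$, so that the defining equation becomes $\klBin(\tilde x,\tilde q)=\frac{1}{1+c}\klBin(\tilde p,\tilde q)$ with $0\le\tilde p\le\tilde q<1$, and both claimed inequalities are scale-invariant (they are the unscaled relations multiplied by $1-\epsilon$). First I would record two properties of $y\mapsto\klBin(y,\tilde q)$ on $[\tilde p,\tilde q]$: it is convex, since $\frac{d^2}{dy^2}\klBin(y,\tilde q)=\frac{1}{y(1-y)}>0$; and it is decreasing, since its derivative $\ln\frac{y(1-\tilde q)}{\tilde q(1-y)}$ is negative for $y<\tilde q$ and vanishes at $y=\tilde q$. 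As the right-hand side $\frac{1}{1+c}\klBin(\tilde p,\tilde q)$ lies between $0$ and $\klBin(\tilde p,\tilde q)$, there is a unique root $\tilde x_c\in[\tilde p,\tilde q]$, so $x_c(p,q)$ is well-defined.

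For the first inequality, set $\tilde x^\ast=\frac{c}{1+c}\tilde q+\frac{1}{1+c}\tilde p$. Convexity of $\klBin(\cdot,\tilde q)$ together with $\klBin(\tilde q,\tilde q)=0$ gives
\[
\klBin(\tilde x^\ast,\tilde q)\le \frac{c}{1+c}\klBin(\tilde q,\tilde q)+\frac{1}{1+c}\klBin(\tilde p,\tilde q)=\frac{1}{1+c}\klBin(\tilde p,\tilde q)=\klBin(\tilde x_c,\tilde q).
\]
Since $\klBin(\cdot,\tilde q)$ is decreasing on $[\tilde p,\tilde q]$, this forces $\tilde x_c\le\tilde x^\ast$, and unscaling yields $x_c(p,q)\le\frac{c}{1+c}q+\frac{1}{1+c}p$.

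For the monotonicity in the second argument I would view $\tilde x_c=\tilde x_c(\tilde q)$ as the implicit function defined (with $\tilde p$ fixed) by $G(\tilde x,\tilde q):=\klBin(\tilde x,\tilde q)-\frac{1}{1+c}\klBin(\tilde p,\tilde q)=0$ and differentiate. Because $\partial G/\partial\tilde x=\ln\frac{\tilde x(1-\tilde q)}{\tilde q(1-\tilde x)}<0$ at the root, the implicit function theorem applies and $\operatorname{sign}(\tilde x_c'(\tilde q))=\operatorname{sign}(\partial G/\partial\tilde q)$. Using the derivative formula \eqref{eq:mod kl deriv}, namely $\frac{\partial\klBin(p,q)}{\partial q}=\frac{q-p}{q(1-q)}$, I get $\partial G/\partial\tilde q=\frac{1}{\tilde q(1-\tilde q)}\big[(\tilde q-\tilde x_c)-\frac{1}{1+c}(\tilde q-\tilde p)\big]$, which is nonnegative exactly when $\tilde x_c\le\tilde x^\ast$ — precisely the first part of the lemma, already proved. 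Hence $\tilde x_c$ is nondecreasing in $\tilde q$, so for any $\mu\ge q$ (equivalently $\tilde\mu\ge\tilde q$) we conclude $x_c(p,q)\le x_c(p,\mu)$.

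The only delicate points are bookkeeping rather than conceptual: I must confirm the root stays in the open interval on which $\klBin$ is smooth so the implicit function theorem is legitimate, treating the degenerate cases $\tilde p=\tilde q$ and $c=0$ (where both claims hold with equality) separately; and I must evaluate the derivative signs at the root $\tilde x_c$ rather than at a generic $\tilde x$. I expect the main subtlety to be this last observation—recognizing that the sign of $\partial G/\partial\tilde q$ at the root reduces to the first inequality—after which everything follows from convexity and \eqref{eq:mod kl deriv}.
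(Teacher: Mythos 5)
Your proof is correct. The first inequality is established exactly as in the paper: convexity of the binary KL in its first argument together with $\klBin(\tilde q,\tilde q)=0$, followed by monotonicity in the first argument to locate the root; the paper merely phrases this through the auxiliary quantity $\modklBin(p,q)=(1-\epsilon)\,\klBin\br*{\frac{p}{1-\epsilon},\frac{q}{1-\epsilon}}$ rather than your explicit rescaling, which is cosmetic. For the second inequality the mechanisms genuinely differ: you differentiate the root itself via the implicit function theorem, whereas the paper keeps the root $x_c(p,q)$ fixed and shows that $g(\mu)=\modklBin(x_c(p,q),\mu)-\frac{1}{1+c}\modklBin(p,\mu)$ is nondecreasing with $g(q)=0$, then converts $g(\mu)\ge0$ into $x_c(p,q)\le x_c(p,\mu)$ using that $\modklBin(\cdot,\mu)$ decreases in its first argument. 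Both routes hinge on the same computation --- the relevant $\mu$-derivative has the sign of $\frac{c}{1+c}\mu+\frac{1}{1+c}p-x_c$, which is nonnegative by the first part --- but the paper's version needs the first-part bound only at the original pair $(p,q)$ (extending it to $\mu\ge q$ trivially), avoids the implicit function theorem altogether, and so sidesteps the regularity and degeneracy bookkeeping ($\tilde p=\tilde q$, $c=0$, root at the boundary) that you correctly identify as the delicate points of your route. Your version is equally valid once those cases are dispatched, since the first-part bound is indeed available at every intermediate second argument along the path from $q$ to $\mu$.
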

\begin{proof}
Similarly to \Cref{lemma:mod-kl-properties}, we define 
\begin{align*}
    \modklBin(p,q) = p\ln\frac{p}{q} + (1-p-\epsilon)\ln\frac{1-p-\epsilon}{1-q-\epsilon} = (1-\epsilon)\klBin\br*{\frac{p}{1-\epsilon},\frac{q}{1-\epsilon}}\enspace.
\end{align*}
Specifically, see that we can equivalently find the solution $x_c(p,q)\in\brs*{p,q}$ of the equation 
\[\modklBin\br*{x,q} = \frac{1}{1+c}\modklBin\br*{p,q}\enspace.\]
Next, note that $\modklBin(x,q)$ strictly decreases in $x$ for $x\in\brs*{p,q}$, and therefore for all $c\ge0$, there exist a unique solution to the equation in this region. For the first part of the proof, notice that $\modklBin$ is a scaled linear transformation of $\klBin$. Since $\klBin$ is convex in its first argument, so does $\modklBin$, and we have
\begin{align*}
    \modklBin(\frac{c}{1+c}q + \frac{1}{1+c}p,q) 
    \le \frac{c}{1+c}\modklBin(q,q) + \frac{1}{1+c}\modklBin(p,q)=\frac{1}{1+c}\modklBin(p,q)\enspace,
\end{align*}
and as $\modklBin(p,q)$ decreases in $p$ for any $p\le q$, we conclude that 
\begin{align}
    x_c(p,q) \le \frac{c}{1+c}q + \frac{1}{1+c}p\enspace. \label{eq:kl equation solution bound}
\end{align}
Next, for $\mu\in[q,1-\epsilon)$, define the function
\begin{align*}
    g(\mu) = \modklBin(x_c(p,q),\mu) - \frac{1}{1+c}\modklBin(p,\mu) \enspace,
\end{align*}
whose derivative is (see \Cref{eq:mod kl deriv})
\begin{align*}
    g'(\mu) &= (1-\epsilon)\frac{\mu-x_c(p,q)}{(\mu+\epsilon)(1-\mu-\epsilon)} - (1-\epsilon)\frac{1}{1+c}\frac{\mu-p}{(\mu+\epsilon)(1-\mu-\epsilon)} \\
    &=\frac{1-\epsilon}{(\mu+\epsilon)(1-\mu-\epsilon)}\br*{\frac{c}{1+c}\mu + \frac{1}{1+c}p-x_c(p,q)}\\
    &\ge 0 \enspace.
\end{align*}
where the inequality is by \Cref{eq:kl equation solution bound} and since $\mu\ge q$. By definition, we have $g(q)=0$, and therefore, for all $\mu\in[q,1-\epsilon)$, it holds that $g(\mu)\ge0$, or 
\[\modklBin(x_c(p,q),\mu) \ge \frac{1}{1+c}\modklBin(p,\mu)\enspace.\]
Finally, recalling that $\modklBin(p,q)$ decreases in $p$ leads to $x_c(p,q)\le x_c(p,\mu)$, which concludes the proof.
\end{proof}

\clearpage

\subsection{Comparison between the bounds for standard and lenient regret}
\label{appendix:standard lenient comparison}
\begin{lemma}\label{lemma:standard lenient comparison}
For any $c\ge1$ and any $0\le p\le q \le\frac{1}{c}$, it holds that 
$$\klBin(cp,cq) \ge c\cdot \klBin(p,q).$$
\end{lemma}
Specifically, for any $\epsilon\in\left[0,1\right)$, fixing $c=\frac{1}{1-\epsilon}$ leads to the following bound:
\begin{align*}
    \klBin\br*{\frac{p}{1-\epsilon}.\frac{q}{1-\epsilon}} 
    \ge \frac{1}{1-\epsilon}\cdot \klBin(p,q)
    \ge \klBin(p,q)\enspace,
\end{align*}
which proves that the bound of \Cref{eq:upper bound asymptotic} is tighter than the bound of \Cref{eq:lenient regret standard}. Also, combining with \Cref{lemma:mod-kl-properties}, we get that for any $\epsilon\in\left[0,\frac{1}{2}\right)$, any  $p\in[0,1-2\epsilon)$ and any $q\in[p+\epsilon,1-\epsilon)$, 
\begin{align*}
    \klBin\br*{\frac{p}{1-\epsilon},\frac{q}{1-\epsilon}}\ge \frac{1}{1-\epsilon}\max\brc*{\frac{1}{4}\klBin(p,q+\epsilon),\klBin(p,q)} \enspace .
\end{align*}
\begin{proof}
Without loss of generality, we assume that $c>1$ and $p\ne q$, as otherwise, the bound trivially holds. Then, we can also assume that $q<\frac{1}{c}$, since $q=\frac{1}{c}$ leads to infinite l.h.s. and finite r.h.s., so the bound holds. Next, define 
$$g(p,q) = \klBin(cp,cq) - c\cdot \klBin(p,q),$$
whose partial derivative w.r.t. $q$ is (e.g., by \Cref{eq:mod kl deriv} and the chain rule)
\begin{align*}
    \frac{\partial g(p,q)}{\partial q} 
    &= c\frac{cq-cp}{cq(1-cq)} - c\frac{q-p}{q(1-q)}
    = \frac{c(q-p)}{q}\br*{\frac{1}{1-cq}-\frac{1}{1-q}}
    = \frac{c(q-p)}{q}\frac{cq-q}{(1-cq)(1-q)}\\
    &=\frac{c(c-1)(q-p)}{(1-cq)(1-q)}
    \ge 0
\end{align*}
where the last inequality holds for any $p\le q$ and $c\ge1$. Specifically, $g(p,q)$ increases in $q$ for $p\le q < \frac{1}{c}$ and thus $g(p,q)\ge g(p,p)=0$, which concludes the proof.
\end{proof}

\clearpage

\section{Additional experimental results}
\label{appendix:experiments}
\subsection{Additional Statistics for the Experiments in Section~\ref{section:experiments}}
\setlength{\tabcolsep}{5pt}
\begin{table}[h]
\centering
\caption{Additional statistics of the empirical evaluation in \Cref{section:experiments}. All evaluations where performed with $\epsilon=0.2$. The table presents the statistics at $T=5000$, using $50,000$ different seeds. std is the standard deviation, and $99\%$ represents the $99^{th}$ percentile. `Standard' measures the standard regret ($f(\dr{})=\dr{}$) and `Hinge' measures the lenient regret w.r.t. the hinge loss ($f(\dr{}) = \max\brc*{\dr{}-\epsilon,0}$).}
\label{table:statistics}
\begin{tabular}{|c|c?c|c|c|c?c|c|c|c|}\hline
  &  & \multicolumn{4}{c?}{\textbf{Thompson sampling}} &  \multicolumn{4}{c|}{\textbf{$\epsilon$-Thompson sampling}} \\ \hhline {~~--------}
   \multirow{-2}{3.1cm}{\centering\textbf{Arm Values}}&  \multirow{-2}{*}{\centering\textbf{Regret Type}} & mean & std & $99\%$ & max & 
    mean & std & $99\%$ & max \\ \hline 
  \rowcolor[gray]{0.9} &
  Standard  & 5.01 & 3.44 & 16.2 & 96.9 & 2.16 & 9.13 & 25.8 & 749.4 \\ 
  \hhline{|*1{>{\arrayrulecolor[gray]{.9}}-}>{\arrayrulecolor[gray]{.5}}|*9{-}|}
   \rowcolor[gray]{0.9}  \multirow{-2}{3.1cm}{\centering$\mu_1=0.9,\,\mu_2=0.6$}& 
   Hinge & 1.67 & 1.15 & 5.4 & 32.3 & 0.72 & 3.04 & 8.6 & 249.8 \\ 
  \arrayrulecolor{black}\hline
  &
  Standard  & 18.18 & 17.8 & 80.85 & 176.15 & 94.53 & 120.78 & 255.5 & 307.65  \\ 
  \hhline{~>{\arrayrulecolor[gray]{.5}}|*9{-}|}
   \multirow{-2}{3.1cm}{\centering$\mu_1=0.9,\,\mu_2=0.85,$ $\mu_3=0.6$}  &  
   Hinge & 1.6 & 1.01 & 4.8 & 11 & 0.33 & 0.89 & 3.8 & 63 \\ 
  \arrayrulecolor{black}\hline
  \rowcolor[gray]{0.9} &
  Standard  & 8.26 & 4.09 & 20.7 & 180 & 5.5 & 3.97 & 18.6 & 180.3  \\ 
  \hhline{|*1{>{\arrayrulecolor[gray]{.9}}-}>{\arrayrulecolor[gray]{.5}}|*9{-}|}
   \rowcolor[gray]{0.9} 
   \rowcolor[gray]{0.9} \multirow{-2}{3.1cm}{\centering$\mu_1=0.5,\mu_2=0.2$ }   &  
   Hinge & 2.75 & 1.36 & 6.9 & 60 & 1.83 & 1.32 & 6.2 & 60.1 \\ 
  \arrayrulecolor{black}\hline
  &
  Standard  & 33.85 & 27.94 & 151.3 & 271.75 & 31.86 & 44.6 & 252.8 & 277.45  \\
  \hhline{~>{\arrayrulecolor[gray]{.5}}|*9{-}|} \multirow{-2}{3.1cm}{\centering$\mu_1=0.5,\,\mu_2=0.45,$ $\mu_3=0.2$ } &  
   Hinge & 2.71 & 1.23 & 6.5 & 12.8 & 1.76 & 1 & 5.2 & 20 \\ 
  \arrayrulecolor{black}\hline
\end{tabular}
\end{table}

\subsection{Reevaluation of the Experiments of Section~\ref{section:experiments} with Smaller Leniency Parameter}
In this appendix, we present experiments similar to the ones of \Cref{section:experiments}, with $\epsilon=0.05$. The experiments were built to be as similar as possible to the original experiments: In the experiments with the low-optimal arm, it remained $\mu^*=0.5$, while in the experiments with the high optimal arm, it was fixed to $\mu^*=1-\frac{\epsilon}{2}$ (as in the main paper). All gaps were reduced by a factor of $4$, which is the ratio of $\epsilon$ between the two experiment sets, so that their size as a function $\epsilon$ will remain the same. Each scenario was evaluated for $100,000$ time steps on $5,000$ different random seeds. The results are presented in \Cref{figure:experiments smaller eps}, while the simulation statistics are presented in \Cref{table:statistics smaller eps}. As expected, the simulations exhibit similar behavior to the ones in the main paper - $\epsilon$-TS enjoys better performance when working with $\epsilon$-gap functions, especially when the optimal arm is high (the constant regret regime). Moreover, $\epsilon$-TS behaves surprisingly well on the standard regret when the optimal arm is low while suffering linear regret when the optimal arm is higher than $1-\epsilon$.

\setlength{\tabcolsep}{2.9pt}
\begin{table}[H]
\centering
\caption{Additional statistics of the experiments in this section. All evaluations where performed with $\epsilon=0.05$. The table presents the statistics at $T=100,000$, using $5,000$ different seeds. std is the standard deviation, and $99\%$ represents the $99^{th}$ percentile. `Standard' measures the standard regret ($f(\dr{})=\dr{}$) and `Hinge' measures the lenient regret w.r.t. the hinge loss ($f(\dr{}) = \max\brc*{\dr{}-\epsilon,0}$).}
\label{table:statistics smaller eps}
\begin{tabular}{|c|c?c|c|c|c?c|c|c|c|}\hline
  &  & \multicolumn{4}{c?}{\textbf{Thompson sampling}} &  \multicolumn{4}{c|}{\textbf{$\epsilon$-Thompson sampling}} \\ \hhline {~~--------}
   \multirow{-2}{3.35cm}{\centering\textbf{Arm Values}}&  \multirow{-2}{*}{\centering\textbf{Regret Type}} & mean & std & $99\%$ & max & 
    mean & std & $99\%$ & max \\ \hline 
  \rowcolor[gray]{0.9} &
  Standard  & 7.09& 5.97& 24.68 & 184.72 & 3.69 & 19.86 & 53.51 & 889.35\\ 
  \hhline{|*1{>{\arrayrulecolor[gray]{.9}}-}>{\arrayrulecolor[gray]{.5}}|*9{-}|}
   \rowcolor[gray]{0.9}  \multirow{-2}{3.35cm}{\centering$\mu_1=0.975,\,\mu_2=0.9$}& 
   Hinge & 2.36 & 2 & 8.26 & 61.57 & 1.23 & 6.62 & 17.84 & 296.45 \\ 
  \arrayrulecolor{black}\hline
  &
  Standard  & 27.91 & 45.13 & 116.326 & 1259.04 & 479.72 & 607.32 & 1258.877 & 1474.4 \\ 
  \hhline{~>{\arrayrulecolor[gray]{.5}}|*9{-}|}
   \multirow{-2}{3.35cm}{\centering\hspace{1em}$\mu_1=0.975,$\hspace{1.5em} $\mu_2=0.9625,\,\mu_3=0.9$}  &  
   Hinge & 2.22 & 1.42 & 6.45 & 11.3 & 0.49 & 2.016 & 6.55 & 89.8 \\ 
  \arrayrulecolor{black}\hline
  \rowcolor[gray]{0.9} &
  Standard  & 35.7 & 28.81 & 108 & 719.93 & 34.37 & 63.15 & 132.24 & 2469.08  \\ 
  \hhline{|*1{>{\arrayrulecolor[gray]{.9}}-}>{\arrayrulecolor[gray]{.5}}|*9{-}|}
   \rowcolor[gray]{0.9} 
   \rowcolor[gray]{0.9} \multirow{-2}{3.35cm}{\centering$\mu_1=0.5,\mu_2=0.425$ }   &  
   Hinge & 11.9 & 9.6 & 36 & 239.98 & 11.46 & 21.05 & 44.08 & 823.03 \\ 
  \arrayrulecolor{black}\hline
  &
  Standard  & 148.31 & 132.58 & 746.26 & 1314.18 & 146.8 & 160.21 & 1124.04 & 1309.69  \\
  \hhline{~>{\arrayrulecolor[gray]{.5}}|*9{-}|} \multirow{-2}{3.35cm}{\centering$\mu_1=0.5,\,\mu_2=0.4875,$ $\mu_3=0.425$ } &  
   Hinge & 11.39 & 6.38 & 30.28 & 51.7 & 10.17 & 6.1 & 28.6 & 50.83 \\ 
  \arrayrulecolor{black}\hline
\end{tabular}
\end{table}
\setlength{\tabcolsep}{5pt}

\begin{figure}[H]
\centering
\subfigure{
\includegraphics[trim=0 15 0 15,clip,width=0.425\linewidth]{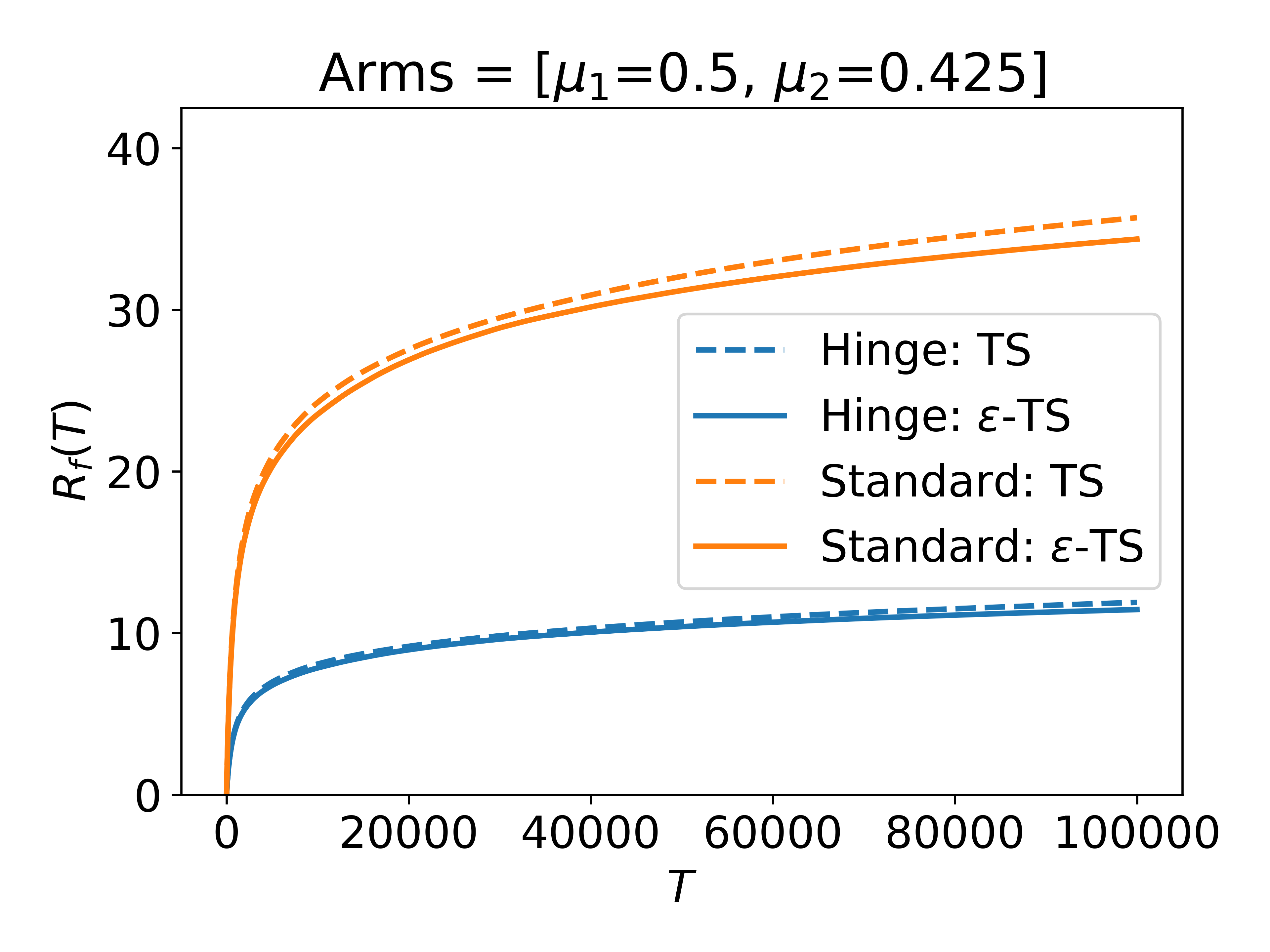}
}
\hspace{0.05\linewidth}
\subfigure{
\includegraphics[trim=0 15 0 15,clip,width=0.425\linewidth]{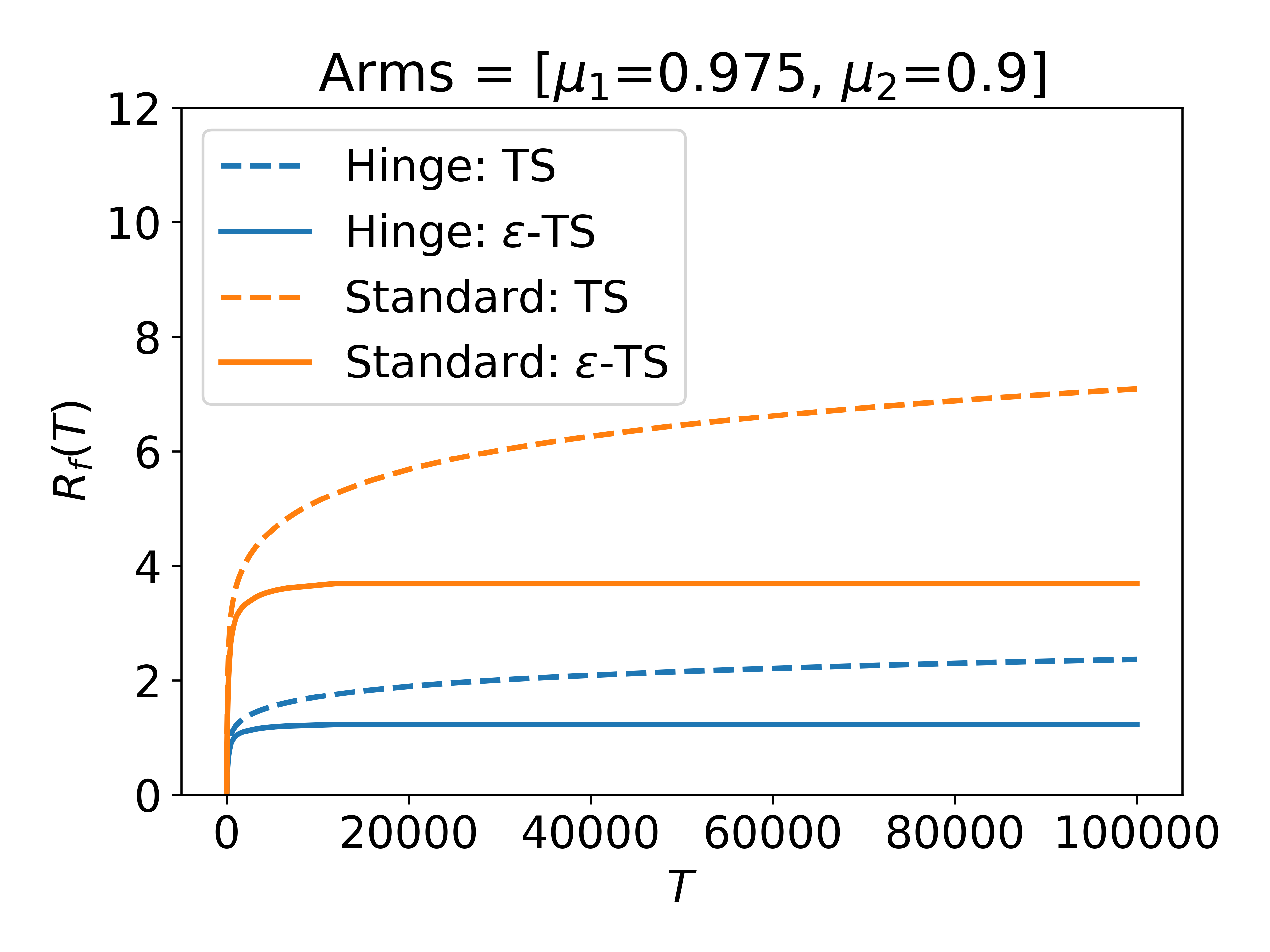} 
} 
\subfigure{
\includegraphics[trim=0 15 0 15,clip,width=0.425\linewidth]{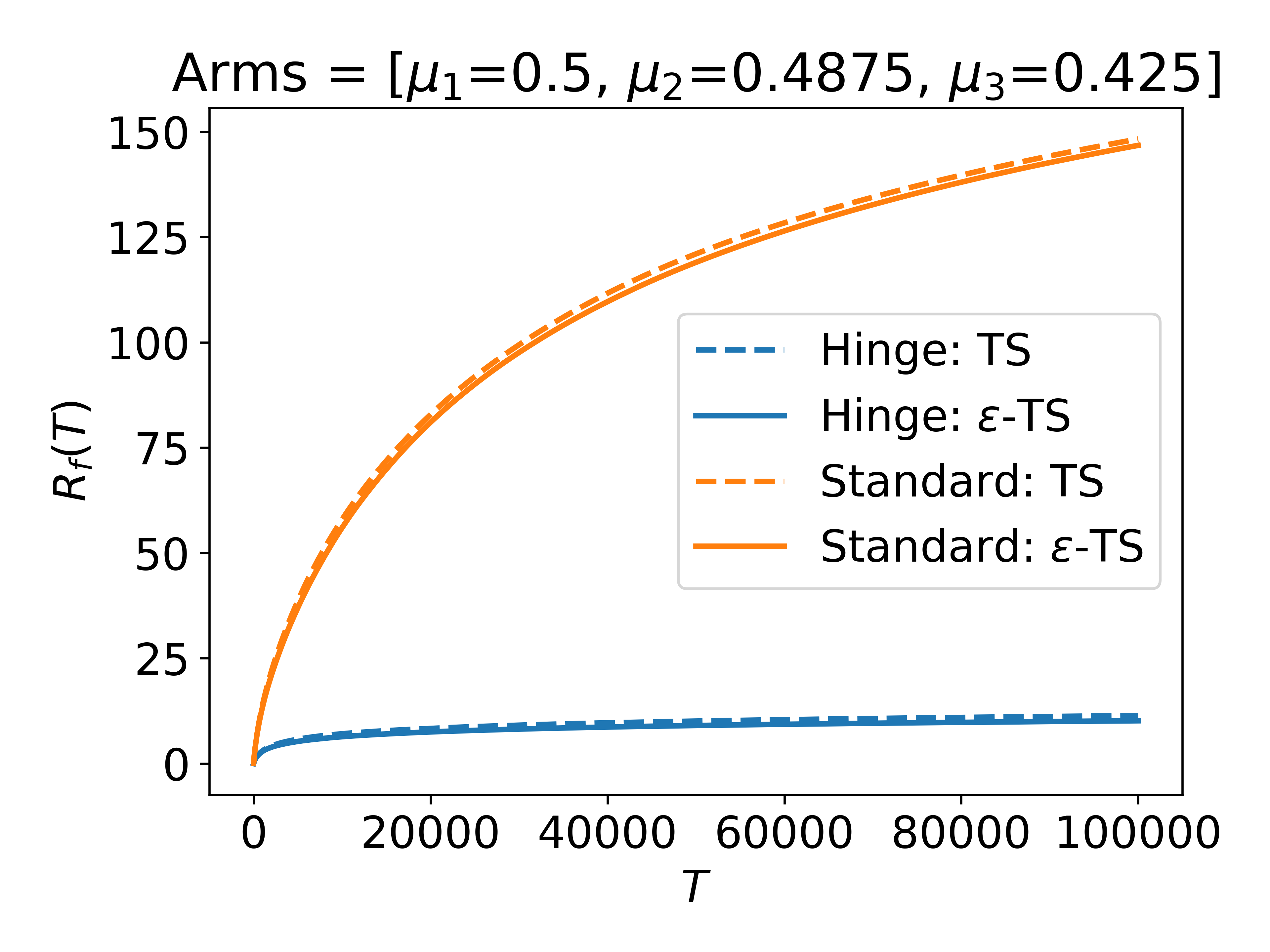}
}
\hspace{0.05\linewidth}
\subfigure{
\includegraphics[trim=0 15 0 15,clip,width=0.425\linewidth]{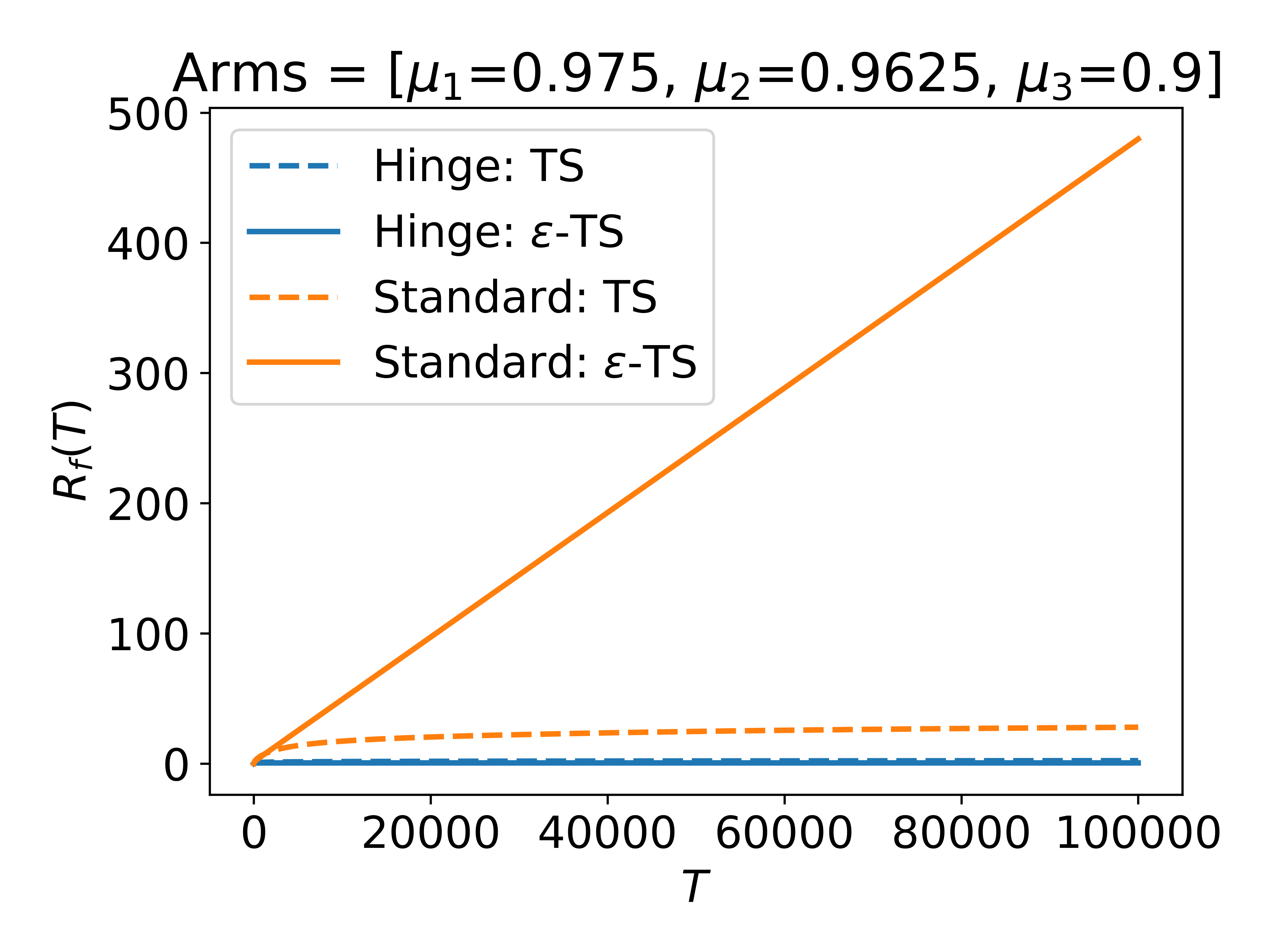} 
}
\caption{Evaluation of $\epsilon$-TS and vanilla TS with $\epsilon=0.05$ and Bernoulli rewards. `Hinge' is the $\epsilon$-gap function $f(\dr{})=\max\brc*{\dr{}-\epsilon,0}$ and `Standard' is the $0$-gap function $f(\dr{})=\dr{}$, which leads to the standard regret.}
\label{figure:experiments smaller eps}
\end{figure}

\subsection{Additional Experiments}
In this subsection, we present additional experiments that were omitted from the paper due to space limits. All simulations were done with $\epsilon=0.2$. In \Cref{subfig:limit_opt} and \Cref{subfig:near_eps_arm} we simulated edge cases that were omitted due to the similarity to the simulations in \Cref{figure:experiments}. In \Cref{subfig:limit_opt}, we simulate the transition point where $\mu^*=1-\epsilon$. Notably, we observe that in this case, the simulation behaves very similarly to the case where $\mu^*>1-\epsilon$. In \Cref{subfig:limit_opt}, we study the case where $\mu_2$ is slightly below $\mu_1-\epsilon$. As expected, the behavior is very similar to the case where $\mu_2<\mu_1-\epsilon$ and the gap is larger, with the only difference that the hinge-loss is smaller. Next, in \Cref{subfig:low_opt_mid_arm_long}, we present a longer version of the bottom-left experiment in \Cref{figure:experiments}, where $\epsilon$-TS surpasses TS also in terms of the standard regret. When running a longer experiment, we see that this continues until the $20,000^{th}$ step, and then TS achieves lower regret. We believe that this is since $\epsilon$-TS gives less focus to arm $a_3$, but still has a good chance for identifying that $a_1$ is optimal. Then, it takes many steps until the possible mistakes of $\epsilon$-TS in identifying $a_1$ are more harmful than the exploration of $a_3$ by TS. Finally, we simulated a problem with 20 randomly-generated arms as follows: the optimal arm was selected to $\mu^*=0.6$. Then, 9 arms were uniformly generated in $\brs*{0.4,0.6}$ and 10 arms were generated in $\brs*{0,0.4}$. The resulting arms are presented in \Cref{table:rand 20 arm vals}. These arms were then fixed, and we simulated the lenient regret for $1,000$ seeds. The results of this simulation are in \Cref{subfig:rand 20}. Interestingly, we see a similar phenomena to that of \Cref{subfig:low_opt_mid_arm_long} -- even after $100,000$ steps, $\epsilon$-TS enjoys better \emph{standard} regret than $TS$. Both simulations hint that for short horizons, $\epsilon$-TS achieves better performance than the vanilla TS, and we believe it is interesting to further study the performance of $\epsilon$-TS in short horizons. Finally, we supply additional statistics on the simulations in \Cref{table:statistics appendix experiments}.
\begin{table}[H]
\centering
\caption{Arm values in \Cref{subfig:rand 20}, sorted in an increasing order}
\begin{tabular}{|c| c | c | c| c| c | c | c|c | c| }\hline
 0.153 & 0.169 & 0.175 & 0.218 & 0.22 & 0.241 & 0.258 & 0.286 & 0.357 & 0.385 \\ \hline
 \rowcolor[gray]{0.9}0.404 & 0.414 & 0.417 & 0.506 & 0.514 & 0.55 & 0.558 & 0.567 & 0.585 & 0.6 \\
 \hline
\end{tabular}
\label{table:rand 20 arm vals}
\end{table}
\begin{table}[H]
\centering
\caption{Additional statistics of the empirical evaluation in \Cref{figure:appendix experiments}. The statistics are calculated at the end of the run and are as in \Cref{table:statistics}.}
\label{table:statistics appendix experiments}
\begin{tabular}{|c|c?c|c|c|c?c|c|c|c|}\hline
  &  & \multicolumn{4}{c?}{\textbf{Thompson sampling}} &  \multicolumn{4}{c|}{\textbf{$\epsilon$-Thompson sampling}} \\ \hhline {~~--------}
   \multirow{-2}{*}{\centering\textbf{Scenario}}&  \multirow{-2}{*}{\centering\textbf{Regret Type}} & mean & std & $99\%$ & max & 
    mean & std & $99\%$ & max \\ \hline 
  \rowcolor[gray]{0.9} &
  Standard  & 6.88 & 4.57 & 20.4 & 280.2 & 2.99 & 12.7 & 26.1 & 1344.9 \\ 
  \hhline{|*1{>{\arrayrulecolor[gray]{.9}}-}>{\arrayrulecolor[gray]{.5}}|*9{-}|}
   \rowcolor[gray]{0.9}  \multirow{-2}{*}{\centering\Cref{subfig:limit_opt}}& 
   Hinge & 2.3 & 1.52 & 6.8 & 93.4 & 1 & 4.23 & 8.7 & 448.3 \\ 
  \arrayrulecolor{black}\hline
  &
  Standard  & 10.69 & 7.48 & 30.66 & 549.15 & 7.66 & 13.68 & 34.02 & 1046.64  \\ 
  \hhline{~>{\arrayrulecolor[gray]{.5}}|*9{-}|}
   \multirow{-2}{*}{\centering \Cref{subfig:near_eps_arm} }  &  
   Hinge & 0.51 & 0.36 & 1.46 & 26.15 & 0.36 & 0.65 & 1.62 & 49.84 \\ 
  \arrayrulecolor{black}\hline
  \rowcolor[gray]{0.9} &
  Standard  & 59.55 & 50.16 & 164.55 & 1550.35 & 71.34 & 322.73 & 597.33 & 5017.9  \\ 
  \hhline{|*1{>{\arrayrulecolor[gray]{.9}}-}>{\arrayrulecolor[gray]{.5}}|*9{-}|}
   \rowcolor[gray]{0.9} 
   \rowcolor[gray]{0.9} \multirow{-2}{*}{\centering \Cref{subfig:low_opt_mid_arm_long} }   &  
   Hinge & 4.08 & 1.6 & 8.6 & 11.6 & 2.58 & 1.23 & 6.3 & 10 \\ 
  \arrayrulecolor{black}\hline
  &
  Standard  & 472.64 & 165.14 & 1110.64 & 1948.38 & 331.02 & 272.78 & 1739.39 & 2060.39 \\
  \hhline{~>{\arrayrulecolor[gray]{.5}}|*9{-}|} \multirow{-2}{*}{\centering \Cref{subfig:rand 20} } &  
   Hinge & 42.26 & 5.24 & 55.57 & 58.48 & 24.31 & 3.78 & 34.91 & 38.54 \\ 
  \arrayrulecolor{black}\hline
\end{tabular}
\end{table}
\setlength{\tabcolsep}{6pt}

\begin{figure}[H]
\centering
\subfigure[50,000 seeds, $\mu^*=1-\epsilon$]{
\includegraphics[trim=0 15 0 15,clip,width=0.4\linewidth]{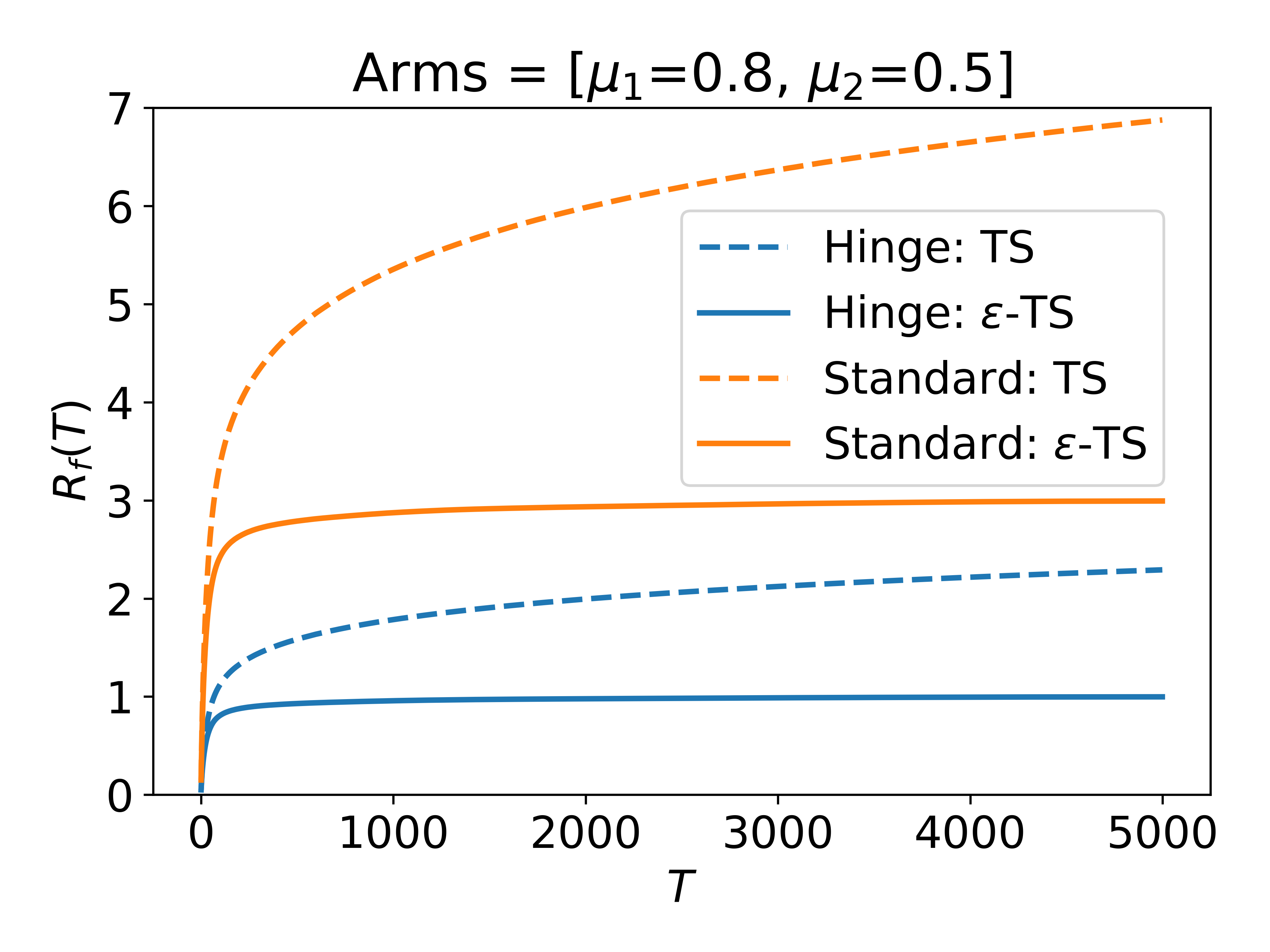}
\label{subfig:limit_opt}
}
\hspace{0.05\linewidth}
\subfigure[50,000 seeds, $\mu_2$ just below $\mu^*-\epsilon$]{
\includegraphics[trim=0 15 0 15,clip,width=0.4\linewidth]{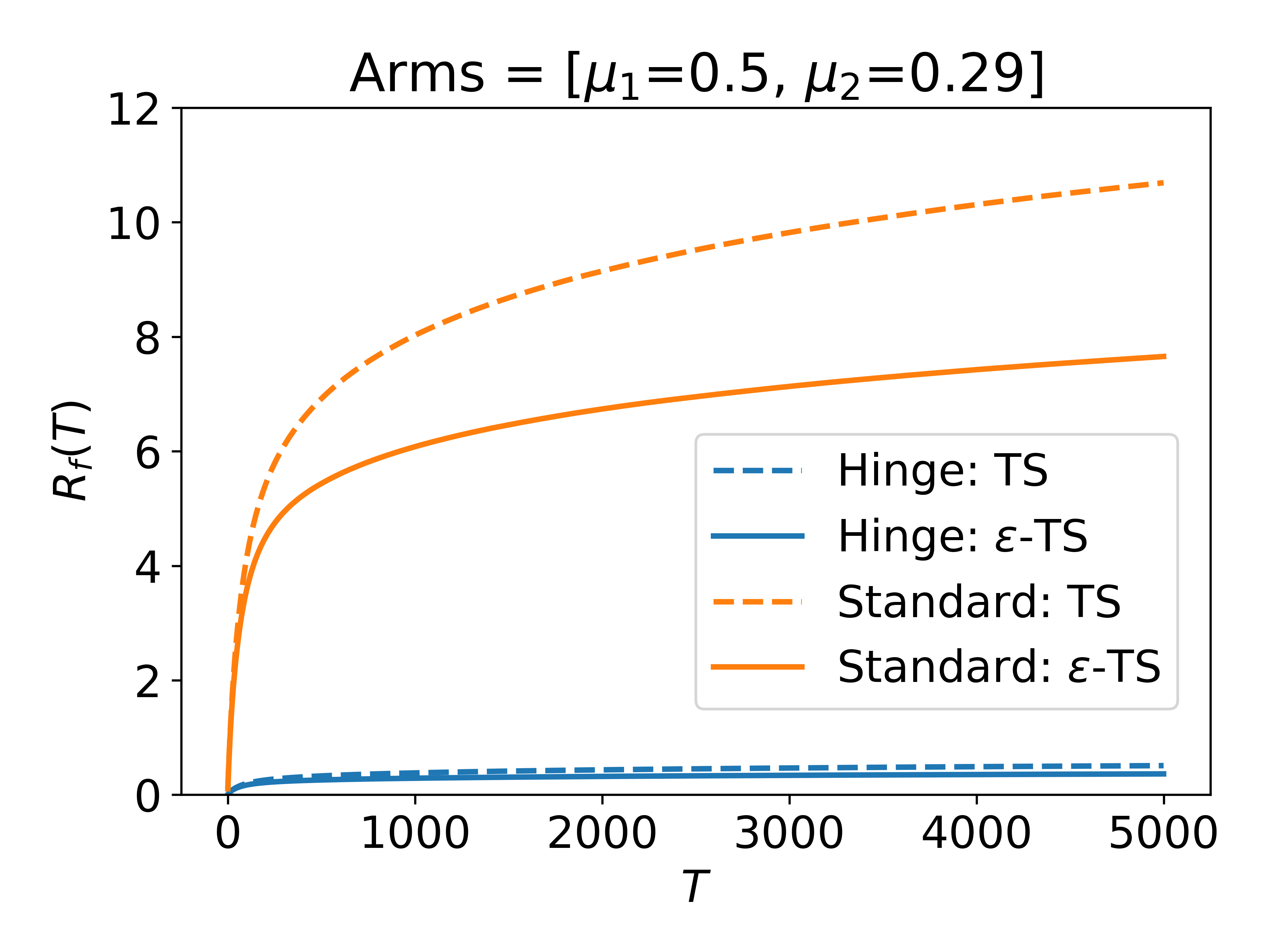} 
\label{subfig:near_eps_arm}} 
\subfigure[5,000 seeds, longer horizon for the experiment in \Cref{figure:experiments}]{
\includegraphics[trim=0 15 0 15,clip,width=0.44\linewidth]{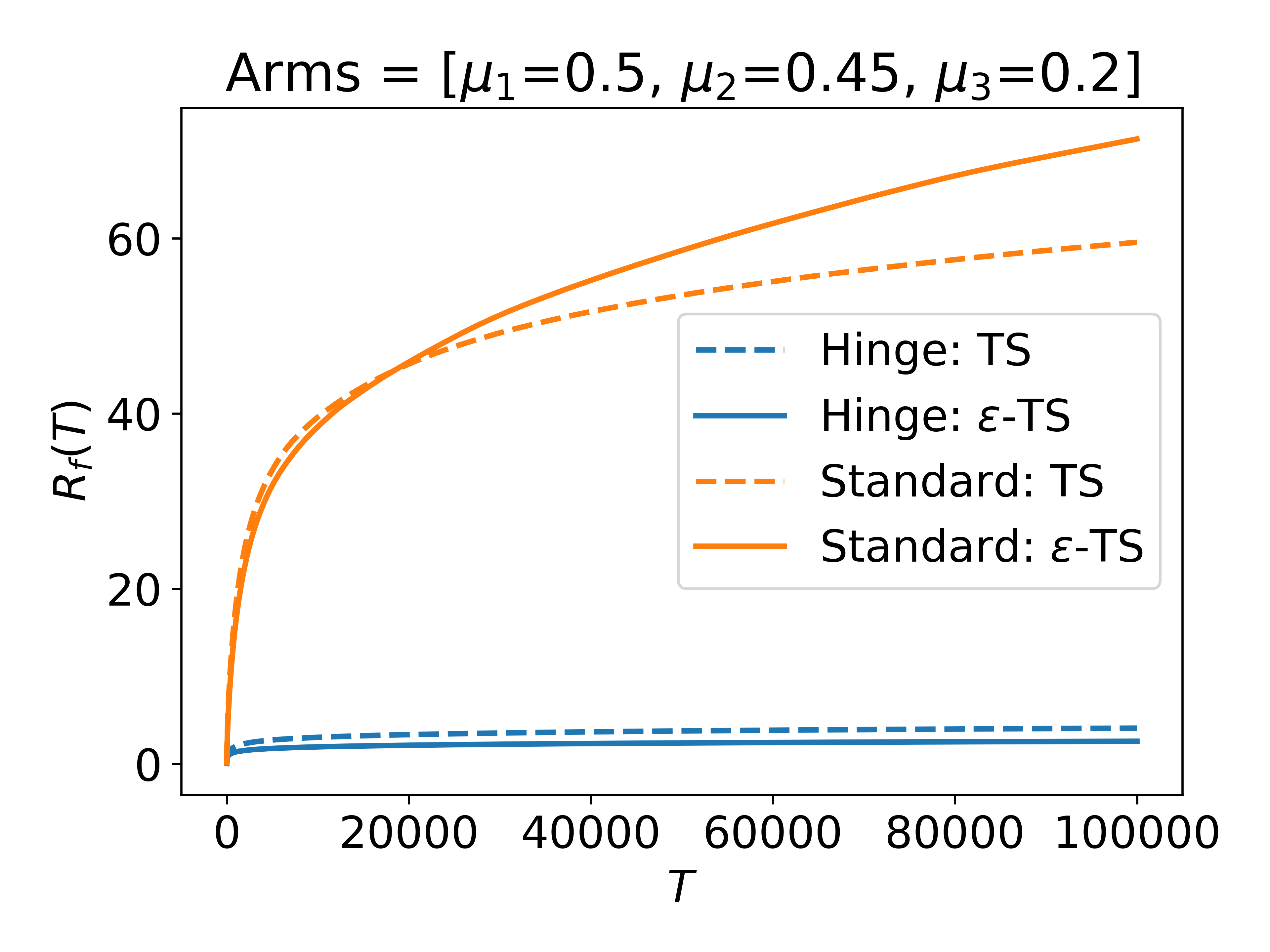}
\label{subfig:low_opt_mid_arm_long}}
\hspace{0.04\linewidth}
\subfigure[1,000 seeds, 20 random arms that were generated once with $\mu^*=0.6$]{
\includegraphics[trim=0 15 0 15,clip,width=0.44\linewidth]{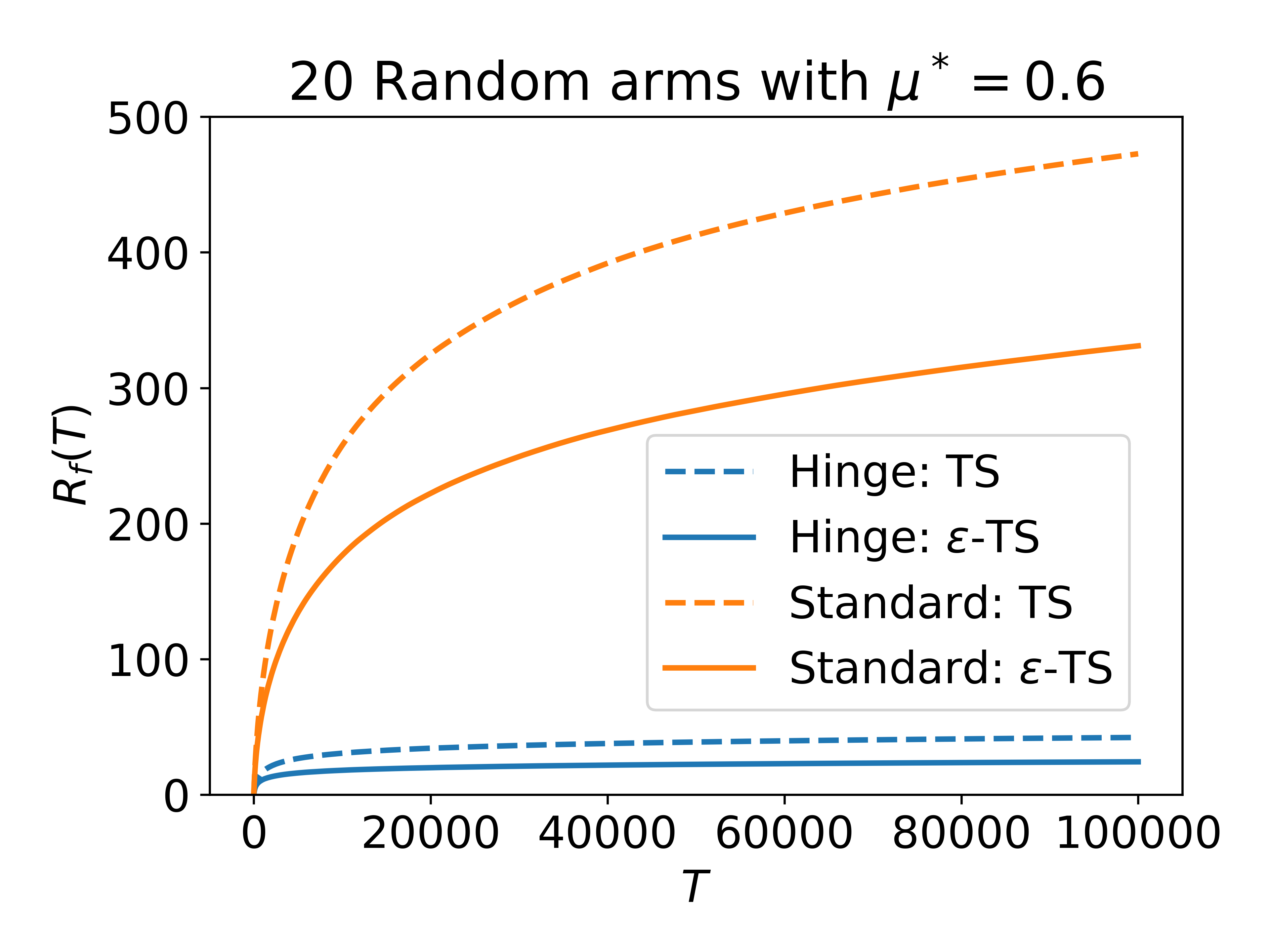} 
\label{subfig:rand 20}}
\caption{Evaluation of $\epsilon$-TS and vanilla TS with $\epsilon=0.2$ and Bernoulli rewards.`Hinge' is the $\epsilon$-gap function $f(\dr{})=\max\brc*{\dr{}-\epsilon,0}$ and `Standard' is the $0$-gap function $f(\dr{})=\dr{}$, which leads to the standard regret.}
\label{figure:appendix experiments}
\end{figure}
\end{document}